\newcommand{\define}[4][ignore]{%
  \ifstrequal{#1}{ignore}{}{
  \@namedef{thmtitle@#2}{#1}}%
  \@namedef{thm@#2}{#4}%
  \@namedef{thmtypen@#2}{lemma}%
  \newtheorem{thmtype@#2}[theorem]{#3}%
  \newtheorem*{thmtypealt@#2}{#3~\ref{#2}}%
}
\newcommand{\state}[1]{%
  \@namedef{curthm}{#1}
  \@ifundefined{thmtitle@#1}{
  \begin{thmtype@#1}
    }{
  \begin{thmtype@#1}[\@nameuse{thmtitle@#1}]
  }
    \label{#1}
    \@nameuse{thm@#1}
  \end{thmtype@#1}
  \@ifundefined{thmdone@#1}{
  \@namedef{thmdone@#1}{stated}%
  }{}
}
\newcommand{\restate}[1]{%
  \@namedef{curthm}{#1}
  \@ifundefined{thmtitle@#1}{
    \begin{thmtypealt@#1}
    }{
  \begin{thmtypealt@#1}[\@nameuse{thmtitle@#1}]
  }
    \@nameuse{thm@#1}
  \end{thmtypealt@#1}
  \@ifundefined{thmdone@#1}{
  \@namedef{thmdone@#1}{stated}%
  }{}
}
\newcommand{\thmlabel}[1]{
  \@ifundefined{thmdone@\@nameuse{curthm}}{\label{#1}
    }{\tag*{\eqref{#1}}}
}
\newcommand{\D}[1]{\Gamma_{#1}}
\newcommand{\DD}[2]{\Gamma_{#1}(#2)}
\newcommand{\Ev}{\mathcal{E}}
\newcommand{\Count}{\textsc{Count}}
\newcommand{\NCount}{\textsc{NCount}}
\newcommand{\NoLatents}{\textsc{NoLatents}\xspace}
\newcommand{\GenRecovery}{$\alpha$-\textsc{General}\xspace}
\newcommand{\IdentifyNbr}{\textsc{Identify-OutNbr}\xspace}
\newcommand{\Identifybidir}{\textsc{Identify-Bidirected}\xspace}
\newcommand{\BoundedDegree}{$\alpha$-\textsc{BoundedDegree}\xspace}
\newcommand{\abBoundedDegree}{$(\alpha, \beta)$-\textsc{BoundedDegree}\xspace}
\newcommand{\Recovery}{$(\alpha,\beta)$-\textsc{Recovery}\xspace}
\newcommand{\RecoverG}{\textsc{RecoverG}\xspace}
\newcommand{\ALG}{\text{ALG}\xspace}
\newcommand{\out}[2]{\text{ch}_{#1}(#2)}
\newcommand{\inc}[2]{\text{pa}_{#1}(#2)}
\newcommand{\bi}[2]{\text{sp}_{#1}(#2)}
\newcommand{\NN}[2]{N_{#1}(#2)}
\def \doo {\mathrm{do}}
\def \diff {\mathrm{diff}}
\def \itr {\mathrm{itr}}
\def \UUU {\mathcal{U}}
\newcommand{\indep}{\rotatebox[origin=c]{90}{$\models$}}
\newcommand*{\centernot}{%
  \mathpalette\@centernot
}
\newcommand{\notindep}{\not\!\perp\!\!\!\perp}
\newenvironment{CompactEnumerate}{
\begin{list}{\roman{enumi}.}{%
\usecounter{enumi}
\setlength{\leftmargin}{12pt}
\setlength{\itemindent}{5pt}
\setlength{\topsep}{1pt}
\setlength{\itemsep}{-3pt}
}}
{\end{list}}
\def\@centernot#1#2{%
  \mathrel{%
    \rlap{%
      \settowidth\dimen@{$\m@th#1{#2}$}%
      \kern.5\dimen@
      \settowidth\dimen@{$\m@th#1=$}%
      \kern-.5\dimen@
      $\m@th#1\not$%
    }%
    {#2}%
  }%
}
\newtheorem{theorem}{Theorem}[section]
\newtheorem{lemma}[theorem]{Lemma}
\newtheorem{definition}[theorem]{Definition}
\newtheorem{proposition}[theorem]{Proposition}
\newtheorem{corollary}[theorem]{Corollary}
\newtheorem{claim}[theorem]{Claim}
\newcommand{\wh}{\widehat}
\renewcommand{\varepsilon}{\epsilon}
\renewcommand{\hat}{\wh}
\DeclareMathOperator*{\E}{{\bf {E}}}
\def\leftarrowcirc{\hbox{$\leftarrow$}\kern-1.5pt\hbox{$\circ$}}
\def\rightarrowcirc{\hbox{$\circ$}\kern-1.5pt\hbox{$\rightarrow$}}
\def\circtailcirc{\hbox{$\circ$}\kern-1.5pt\hbox{$-$}\kern-1.5pt\hbox{$\circ$}}
\def\startailstar{\hbox{$*$}\kern-1.5pt\hbox{$-$}\kern-1.5pt\hbox{$*$}}
\def\startail{\hbox{$*$}\kern-1.5pt\hbox{$-$}}
\def\tailcirc{\hbox{$-$}\kern-1.5pt\hbox{$\circ$}}
\def\circtail{\hbox{$\circ$}\kern-1.5pt\hbox{$-$}}
\def\starrightarrow{\hbox{$*$}\kern-1.5pt\hbox{$\rightarrow$}}
\def\leftarrowstar{\hbox{$\leftarrow$}\kern-1.5pt\hbox{$*$}}
\newcommand{\Copt}{C^{\star}}
\newcommand{\MMM}{\mathcal{M}}
\newcommand{\DDD}{\mathcal{D}}
\newcommand{\SSS}{\mathcal{S}}
\newcommand{\PPP}{\mathcal{U}}
\newcommand{\MMMdom}{\mathcal{M}^{\text{dom}}}
\newcommand{\hatMMMdom}{\hat{\mathcal{M}}^{\text{dom}}}
\definecolor{mygreen}{RGB}{80,180,0}
\definecolor{b2}{RGB}{51,153,255}
\definecolor{mycy2}{RGB}{255,51,255}
\newcommand*{\RN}[1]{\expandafter\@slowromancap\romannumeral #1@}
\begin{document}

\author{
    Raghavendra Addanki\thanks{University of Massachusetts Amherst. \texttt{raddanki@cs.umass.edu}.}
  \and
  Shiva Prasad Kasiviswanathan\thanks{Amazon. \texttt{ kasivisw@gmail.com} }
  \and
}
\date{}
\title{Collaborative Causal Discovery with Atomic Interventions}
\maketitle

\begin{abstract}
We introduce a new Collaborative Causal Discovery problem, through which we model a common scenario in which we have multiple independent entities each with their own causal graph, and the goal is to simultaneously learn all these causal graphs. We study this problem without the causal sufficiency assumption, using Maximal Ancestral Graphs (MAG) to model the causal graphs, and assuming that we have the ability to actively perform independent single vertex (or atomic) interventions on the entities. If the $M$ underlying (unknown) causal graphs of the entities satisfy a natural notion of clustering, we give algorithms that leverage this property, and recovers all the causal graphs using roughly logarithmic in $M$ number of atomic interventions per entity. These are significantly fewer than $n$ atomic interventions per entity required to learn each causal graph separately, where $n$ is the number of observable nodes in the causal graph. We complement our results with a lower bound and discuss various extensions of our collaborative setting.
\end{abstract}

\section{Introduction}\label{sec:intro}
In this paper, we introduce a new model for {\em causal discovery}, the problem of learning all the causal relations between variables in a system.  Under certain assumptions, using just observational data, some ancestral relations as well as certain causal edges can be learned, however, many observationally equivalent structures cannot be distinguished~\citep{zhang2008causal}. Given this issue,  there has been a growing interest in
learning causal structures using the notion of an \emph{intervention} described in the Structural Causal Models (SCM) framework introduced by~\citet{pearl}. 

As interventions are expensive (require carefully controlled experiments) and performing multiple interventions is time-consuming, an important goal in causal discovery is to design algorithms that utilize simple (preferably, single variable) and fewer interventions~\cite{shanmugam2015learning}. However, when there are latents or unobserved variables in the system, in the worst-case, it is not possible to learn the exact {causal DAG} without intervening on every variable at least once. Furthermore, multivariable interventions are needed in presence of latents~\citep{addanki2020efficient}. 

\begin{figure}[!h]
    \centering
    \begin{minipage}{0.7\textwidth}
    \includegraphics[scale=0.8]{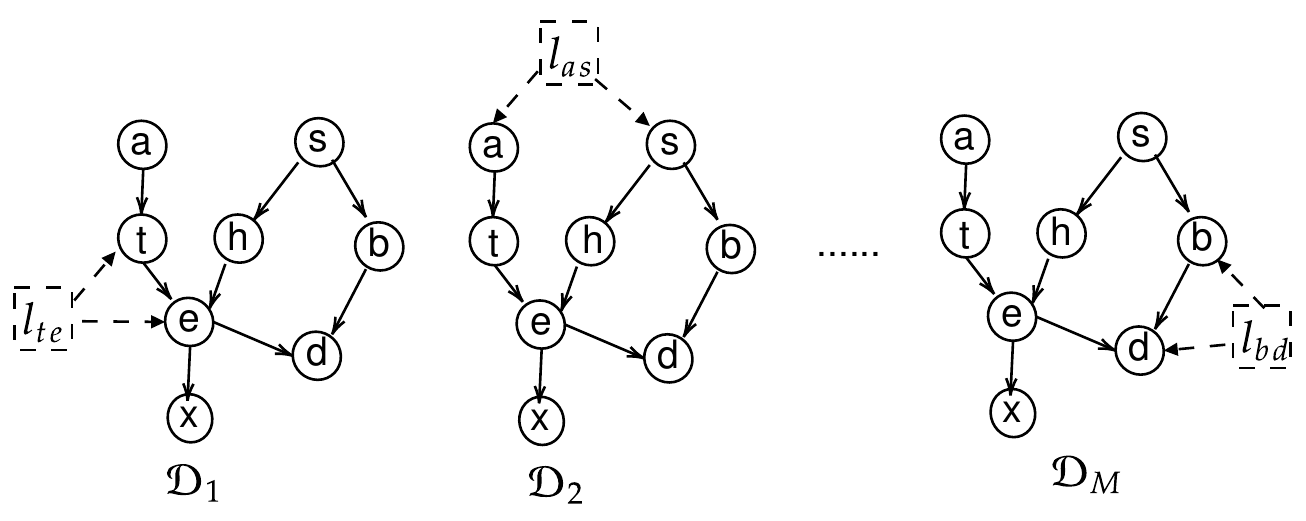}
    \end{minipage}
    \begin{minipage}{0.28\textwidth}
    \caption{Examples of $M$ causal graphs constructed from Lung Cancer dataset~\citep{lauritzen1988local}. Here, the causal graphs differ only in the presence of latents (nodes with dotted square box), but they could differ elsewhere too.}
    \label{fig:mags_intro}
    \end{minipage}
\end{figure}
On the other hand, in a variety of applications, there is no one true causal structure, different entities participating in the application might have different causal structures~\citep{gates2012group,ramsey2011meta,joffe2012causal}. For example, see figure ~\ref{fig:mags_intro}. In these scenarios, generating a single causal graph by pooling data from these different entities might lead to flawed conclusions~\citep{saeed2020causal}. Allowing for interventions, we propose a new model for tackling this problem, referred here as \emph{Collaborative Causal Discovery}, which in its simplest form states that: given a collection of  entities, each associated with an individual unknown causal graph and generating their own independent data samples, learn all the causal graphs while minimizing the number of interventions for every entity. An underlying assumption is that each entity on its own generates enough samples in both the observational and interventional settings so that  conditional independence tests can be carried out \emph{accurately} on each entity separately. To motivate this model of collaborative causal discovery, let us consider two different scenarios.

\begin{list}{{\bf (\alph{enumi})}}
	{\usecounter{enumi}
		\setlength{\leftmargin}{10pt}
		\setlength{\listparindent}{10pt}
		\setlength{\parsep}{-2pt}	
}
\item Consider a health organization interested in controlling incidence of a particular disease. The organization has a set of $M$ individuals (entities) whose data it monitors and can advise interventions on. Each individual is an independent entity that generates its own set of separate data samples\footnote{As is common in causal discovery, for the underlying conditional independence tests, the data is assumed to be i.i.d.\ samples from the interventional/observational distributions.}. In a realistic scenario, it is highly unlikely that all the $M$ individuals share the same causal graph (e.g., see Figures~\ref{fig:g1} and~\ref{fig:g2} from~\citet{joffe2012causal} in Appendix~\ref{app:model}). It would be beneficial for the organization to collaboratively learn all the causal graphs together. The challenge is, {\em a priori} the organization does not know the set of possible causal graphs or which individual is associated with which graph from this set.
\item An alternate setting is where, we have $M$ companies (entities)  wanting to work together to improve their production process. Each company generates their own data (e.g., from their machines) which they can observe and intervene on~\citep{nguyen2016fault}. Again if we take the $M$ causal graphs (one associated with each company) it is quite natural to expect some variation in their structure, more so because we do not assume {\em causal sufficiency} (i.e., we allow for latents). Since interventions  might need expensive and careful experimental organization, each company would like to reduce their share of interventions.
\end{list}
The collaborative aspect of learning can be utilized if we assume that there is some underlying (unknown) clustering/grouping of the causal graphs on the entities.

\subsection{Our Contributions}
We formally introduce the collaborative causal discovery problem in Section~\ref{sec:model}. We assume that we have a collection of $M$ entities that can be partitioned into $k$ clusters such that any pair of entities belonging to two different clusters are separated by large distance (see Definition~\ref{def:dist})  in the causal graphs. Due to presence of latents variables, we use a family of mixed graphs known as  {\em maximal ancestral
graphs} (MAGs) to model the graphs on observed variables. Each entity is associated with a MAG.

In this paper, we focus on designing algorithms that have {\em worst-case} guarantees on the number of atomic interventions needed to recover (or approximately recover) the MAG of each entity. We assume that there are $M$ MAGs one for each entity over the same set of $n$ nodes. Learning a MAG with atomic interventions, in worst case requires $n$ interventions (see Proposition~\ref{prop:lb_naive}). We show that this bound can be substantially reduced if the $M$ MAGs satisfy the property that every pair of MAGs from different clusters have {\em at least} $\alpha n$ nodes whose direct causal relationships are different. We further assume that entities belonging to same cluster have similar MAGs in that every pair of them have {\em at most} $\beta n$ ($\beta < \alpha$) nodes whose direct causal relationships are different. We refer to this clustering of entities as $(\alpha,\beta)$-clustering (Definition~\ref{def:newalphabeta}). A special but important case is when $\beta = 0$, in which case all the entities belonging to the same cluster have the same causal MAG (referred to as $\alpha$-clustering, Definition~\ref{def:alphaclustering}). An important point to notice is that while we assume there is a underlying clustering on the entities, it is {\em learnt} by our algorithms. Similar assumptions are  common for recovering the underlying clusters, in many areas, for e.g., crowd-sourcing applications~\citep{ashtiani2016clustering, awasthi2012center}. 

We first start with the observation that under  $(\alpha,\beta)$-clustering, even entities belonging to the same cluster could have a different MAG, which makes exact recovery hard without making a significant number of interventions per entity. We present an algorithm that using at most $O(\Delta \log(M/\delta)/(\alpha-\beta)^2)$  many interventions per entity, with probability at least $1-\delta$ (over only the randomness of the algorithm), can provably recover an {\em approximate} MAG for each entity. The approximation is such that for each entity we generate a MAG that is at most $\beta n$ node-distance from the true MAG of that entity (see Section~\ref{sec:alphabeta}). Here, $\Delta$ is the maximum undirected degree of the causal MAGs. Our idea here is to first recover the underlying clustering of entities by using a randomized set of interventions. Then, we distribute the interventions across the entities in each cluster, thereby,  ensuring that the number of interventions per entity is small. By carefully combining the results learnt from these interventions we construct the approximate MAGs.

Under the slightly more restrictive $\alpha$-clustering assumption, we present algorithms that can {\em exactly} recover all the MAGs using at most $\min \left\{ O(\Delta\log(M/\delta)/\alpha), O(\log(M/\delta)/\alpha + k^2) \right\} $ many interventions per entity (see Section~\ref{sec:graph_recovery_general}). Again, randomization plays an important role in our approach. 

Complementing these upper bounds, we give a lower bound using Yao's minimax principle~\citep{yao1977probabilistic} that shows for any (randomized or deterministic) algorithm $\Omega(1/\alpha)$ interventions per entity is required for this causal discovery problem. This implies the $1/\alpha$ dependence in our upper bound in the $\alpha$-clustering case is optimal. 

In Section~\ref{sec:expts}, we show experiments on data generated from both real and synthetic networks with added latents and demonstrate the efficacy of our algorithms for learning the underlying clustering and the MAGs.

\subsection{Related Work} 
A number of algorithms, working under various assumptions, for learning causal graph (or a causal DAG) using interventions have been proposed in the literature, e.g.,~\citep{hyttinen2013experiment, ghassami2018budgeted, shanmugam2015learning, neurips17, neurips18, addanki2020efficient,  addanki2021intervention, kocaoglu2019characterization, jaber2020causal}. {\citet{saeed2020causal}  consider a model where the observational data is from a mixture of causal DAGs, and outline ideas that recover a \emph{union graph} (up to Markov equivalence) of these DAGs, without any interventions. Our setting is not directly comparable to theirs, as we have entities generating data and doing conditional independence tests independently (no pooling of data from entities), but show stronger guarantees for recovering causal graphs, assuming atomic interventions.}

\section{Our Model and Problem Statement} \label{sec:model}

In this section, we introduce the collaborative causal discovery problem. We start with some notations.

\noindent\textbf{Notation.} Following the SCM framework \citep{pearl}, we represent the set of random variables of interest by $V \cup L$ where $V$ represents the set of endogenous (observed) variables that can be measured and $L$ represents the set of exogenous (latent) variables that cannot be measured. We do not deal with selection bias in this paper. Let $|V| = n$. 

We assume that the causal Markov condition and faithfulness holds for both the observational and interventional distributions~\citep{hauser2012characterization}.  We use Conditional Independence (CI) Tests as defined below.

\smallskip
\noindent \textbf{Conditional Independence (CI) Tests}.  Conditional independence tests are an important building block in causal discovery.
\begin{itemize}
\item[(i)] CI-test in observational distribution: Given $u,v \in V$, $Z \subset V$ check whether $u$ is independent of $v$ given $Z$, denoted by $u \indep v \mid Z$. 
\item[(ii)] CI-test in interventional distribution: Given $u,v \in V$, $Z \subset V$, and $w \in V$, check whether $u$ is independent of $v$ given $Z$ in the interventional distribution of $w$, denoted by $u \indep v \mid Z, \doo(w)$ where $\doo(w$) is the intervention on the variable~$w$.
\end{itemize}

Throughout this paper, unless otherwise specified, a path between two nodes is an undirected path. A path of only directed edges is called a directed path. $u$ is called an ancestor of $v$ and $v$ a descendant of $u$ if $ u = v$ or there is a directed path from $u$ to $v$. A directed cycle occurs in $G$ when $u \rightarrow v$ is in $G$ and $v$ is an ancestor of $u$.

\subsection{Maximal Ancestral Graphs}

\noindent\textbf{Background on MAGs.} Ancestral graphical models were introduced motivated by the need to represent data generating processes that involve latent variables. In this paper, we work with a class of graphical models, the maximal ancestral graph (MAG), which are a generalization of DAGs and are closed under marginalization and conditioning~\citep{richardson2002ancestral}. A maximal ancestral graph (MAG) is a (directed) mixed graph that may contain two kinds of edges: directed edges ($\rightarrow$) and bi-directed edges ($\leftrightarrow$). Before defining a MAG, we need some preliminaries.

Consider a mixed graph $\mathcal{G}$. Given a path $\pi = \langle u,\dots,w,\dots,v \rangle$, $w$ is a collider on $\pi$ if the two edges incident to $w$ in $\pi$ are both into $w$, that is, have an arrowhead into $w$; otherwise it is called a non-collider on $\pi$. Let $S$ be any subset of nodes in the graph $\mathcal{G}$. An {\em inducing path} relative to $S$ is a path on which every node not in $S$ (except for the endpoints) is a collider on the path and every collider
is an ancestor of an endpoint of the path.

\begin{definition} \label{def:mag} A mixed graph is called a maximal ancestral graph (MAG) if
\begin{enumerate}
    \item The mixed graph is {\em ancestral}, i.e., it has no directed cycles, and whenever
there is a bidirected edge $u \leftrightarrow v$, then there is no directed path from $u$ to $v$ or $v$ to $u$.
\item There is no inducing path between any two non-adjacent nodes.
\end{enumerate}
\end{definition}

It is straightforward to extend the notion of d-separation in DAGs to mixed graphs using the notion of m-separation ~\citep{richardson2002ancestral}.
\begin{definition} \label{def:msep}
In a mixed graph, a path $\pi$ between nodes $u$ and $v$ is 
m-connecting relative to a (possibly empty) set of nodes $Z$ with $u,v \notin Z$ if
\begin{enumerate}
    \item every non-collider on $\pi$ is not a member of $Z$;
    \item every collider on $\pi$ is an ancestor of some member of $Z$.
    \end{enumerate}
$u$ and $v$ are said to be m-separated by $Z$ if there is no m-connected path between $u$ and $v$ relative to
$Z$.
\end{definition}

\noindent\textbf{Conversion of a DAG to a MAG.} 
 The following construction gives us a MAG $\MMM$ from a DAG $\DDD$:
\begin{enumerate}
    \item for each pair of variables $u, v \in V$ , $u$ and $v$ are adjacent in $\MMM$ if and only if there is an
inducing path between them relative to $L$ in $\DDD$. The skeleton or the undirected graph constructed from PAG $\PPP$ (obtained using FCI~\citep{spirtes2000causation}) by ignoring the directions of edges captures all the edges in $\MMM$.

\item for each pair of adjacent variables $u,v$ in $\MMM$, orient the edge as $u \rightarrow v$ in $\MMM$ if $u$ is an
ancestor of $v$ in $\DDD$; orient it as $u \leftarrow v$ in $\MMM$ if $v$ is an ancestor of $u$ in $\DDD$; orient it as $ u \leftrightarrow v$ in $\MMM$ otherwise.
\end{enumerate}
Several DAGs can lead to the same MAG (See Figure~\ref{fig:3}).  Essentially a MAG represents a set of DAGs that have the exact same d-separation structures and ancestral relationships
among the observed variables. By construction, the MAG is unique for a given DAG.

As a further evidence to the claim that interventions are required, see Figure~\ref{fig:pag_distance} in Appendix~\ref{app:model}, that gives an example of two MAGs separated by a distance of $\frac{n}{2}$ (where $n$ is number of observable nodes) and have the same Partial Ancestral Graph identified by FCI~\citep{zhang2008completeness}.

\subsection{Our Model}
We assume that we have access to $M$ entities labeled $1,\dots,M$, each of which can independently generate their own observational and interventional data. {Each entity $i$ has an associated causal DAG $\DDD_i$ over $V \cup L_i$, where $L_i$ represents the latent variables of entity $i$}. In modeling the problem of learning $\DDD_i$, complications arise in at least two ways:

\begin{list}{{\bf (\roman{enumi})}}
	{\usecounter{enumi}
		\setlength{\leftmargin}{10pt}
		\setlength{\listparindent}{10pt}
		\setlength{\parsep}{-1pt}	
}

\item \textbf{Uniqueness}.  First, with just observational data, if $\DDD_1,\dots,\DDD_M$ are Markov equivalent, then without additional strong assumptions they cannot be distinguished even if they are all structurally different. To overcome the problem of being not identifiable within equivalence class, we allow for interventions on observed variables. Our objective will be to minimize these interventions. In particular, since each of these entities independently generate their own data, so we aim to reduce the number of interventions needed per entity. In causal discovery, minimizing the number of interventions while ensuring that they are of small size is an active research area~\citep{pearl1995causal, shanmugam2015learning, ghassami2018budgeted,ghassami2019interventional}.

\item \textbf{Latents}. Second complication comes due to presence of latents in the system. Even with power of single variable interventions, the structure of a causal DAG is not uniquely identifiable. (see, e.g., Figure~\ref{fig:diff}). Similarly, we can show that using single vertex interventions, we cannot exactly recover wider class of acyclic graphs like ADMGs (Acyclic Directed Mixed Graphs). 

\begin{figure}[!th]
\centering
	\begin{subfigure}[t]{0.2\textwidth}	\centering
        \begin{tikzpicture}[shorten >=1pt, auto, node distance=.25cm, thick,
            node/.style={circle,draw=black,scale=0.4, fill=white,font=\sffamily\Large\bfseries}, edge/.style={->,> = latex',draw=black, thick}]
            \node[node] (t) at (0.5,-0.75) {$\mathbf t$};
            \node[node] (x) at (1.5,0) {$\mathbf x$};
            \node[node] (y) at (3,0) {$\mathbf y$};
            \node[node] (z) at (3,1.1) {$\mathbf z$};
            \node[node, rectangle, dashed] (l) at (2, 1) {$\mathbf{l_{xy}}$};
            \node[node, rectangle, dashed] (l2) at (2, -0.75) {$\mathbf{l_{ty}}$};
            \draw[edge] (t) to (x);
            \draw[edge] (x) to (y);
            \draw[edge] (y) to (z);
            \draw[edge] (l) to (x);
            \draw[edge] (l) to (y);
            \draw[edge] (l2) to (t);
            \draw[edge] (l2) to (y);
            \end{tikzpicture}
        \caption{DAG $\DDD_1$}
        \label{fig:1}
\end{subfigure}
  \begin{subfigure}[t]{0.2\textwidth} 
   \centering
        \begin{tikzpicture}[shorten >=1pt, auto, node distance=.25cm, thick,
            node/.style={circle,draw=black,scale=0.4, fill=white,font=\sffamily\Large\bfseries},
             edge/.style={->,> = latex',draw=black, thick}]
            \node[node] (t) at (0.5,-0.75) {{$\mathbf t$}};
            \node[node] (x) at (1.5,0) {$\mathbf x$};
            \node[node] (y) at (3,0) {$\mathbf y$};
            \node[node] (z) at (3,1.1) {$\mathbf z$};
            \node[node, rectangle, dashed] (l) at (2, 1) {$\mathbf{l_{xy}}$};
            \draw[edge] (t) to (x);
            \draw[edge] (t) to (y);
            \draw[edge] (x) to (y);
            \draw[edge] (y) to (z);
            \draw[edge] (l) to (x);
            \draw[edge] (l) to (y);
            \end{tikzpicture}
        \caption{DAG $\DDD_2$}
        \label{fig:2}
         \end{subfigure}
  \begin{subfigure}[t]{0.2\textwidth} 
  \centering
        \begin{tikzpicture}[shorten >=1pt, auto, node distance=.25cm, thick,
            node/.style={circle,draw=black,scale=0.4, fill=white,font=\sffamily\Large\bfseries},
            edge/.style={->,> = latex',draw=black, thick}]
            \node[node] (t) at (0.5,-1.1) {$\mathbf t$};
            \node[node] (x) at (1,0) {$\mathbf x$};
            \node[node] (y) at (2.5,0) {$\mathbf y$};
            \node[node] (z) at (2,-1.1) {$\mathbf z$};
            \draw[edge] (t) to (x);
            \draw[edge] (x) to (y);
            \draw[edge] (t) to (y);
            \draw[edge] (y) to (z);
            \end{tikzpicture}
        \caption{MAG for $\DDD_1$ and $\DDD_2$}
                  \label{fig:3}
         \end{subfigure}
         \caption{Different DAGs with same MAG. It is easy to observe that, no single vertex interventions can differentiate $\DDD_1$ from~$\DDD_2$.}
                  \label{fig:diff}
\end{figure}
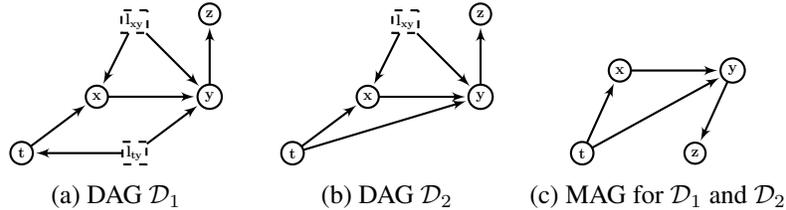

We overcome this problem, \textit{by focusing our attention on  maximal ancestral graphs (MAGs) of these entities.} MAGs are graphs on observed variables that capture the presence of latents through
bidirected edges~\citep{richardson2002ancestral}. 
A mixed graph containing directed ($\leftarrow$) and bidirected ($\leftrightarrow$) edges is said to be {\em ancestral} if it has no directed cycles, and whenever there is a bidirected edge $u \leftrightarrow v$, then there is no directed path from $u$ to $v$ or from $v$ to $u$.  An ancestral graph on $V$ (observables) is said to be {\em maximal}, if, for
every pair of nonadjacent vertices $u,v$, there exists a set $Z \subset V$ with $u,v \notin Z$ such
that $u$ and $v$ are $m$-separated (similar to $d$-separation, see Definition~\ref{def:msep}) conditioned on $Z$. {Every DAG with latents (and selection variables) can be transformed into a unique MAG over the observed variables}~\citep{richardson2002ancestral}.

\end{list}

Given the $M$ entities, let $\MMM_i$ denote the MAG associated with entity $i$ (the MAG constructed from the DAG $\DDD_i$). Our goal is to collaboratively learn all these MAGs $\MMM_1,\dots,\MMM_M$ while minimizing the maximum number of interventions per entity. 

To facilitate this learning, we make a natural underlying clustering assumption that partitions the entities based on their respective MAGs such that: (i) any two entities belonging to the same cluster have MAGs that are ``close'' to each other, (ii) any two entities belonging to different clusters have MAGs that are ``far'' apart. Before stating this assumption formally, we need some definitions.

For MAG $\MMM_i=(V,E_i)$, we denote the children (through outgoing edges), parent (through incoming edges), and spouse (through bidirected edges) of a node $u \in V$ as
\begin{equation} \label{eqn:outin}
\out{i}{u} = \{ v \mid u \rightarrow v \in E_i \}, \  \inc{i}{u} = \{ v \mid u \leftarrow v \in E_i\}, \ \bi{i}{u} = \{ v \mid u \leftrightarrow v \in E_i \}.
\end{equation} 

Also, define an incidence set for a vertex $u \in V$ which contains an entry $(v,\mbox{type})$ for every node $v$ adjacent to $u$ as
\begin{equation} \label{eqn:N}
   \NN{i}{u} = \left \{\begin{array}{ll}
        (v,\mbox{tail}) & \text{if} \ u \rightarrow v \in E_i\\
        (v,\mbox{head}) & \text{if} \  u \leftarrow v \in E_i\\
        (v,\mbox{bidirected}) & \text{if} \ u \leftrightarrow v \in E_i
        \end{array} \right\}.
\end{equation}
Note that $|\NN{i}{u}|$ is the undirected degree of $u$ in $\MMM_i$. We now define a distance measure between MAGs that captures structural similarity between them. 
\begin{definition} \label{def:dist}
Given two MAGs $\MMM_i = (V,E_i)$ and $\MMM_j = (V,E_j)$, define the node-difference as the set: $\diff(\MMM_i,\MMM_j) = \{ u \in V \mid \NN{i}{u} \neq \NN{j}{u} \}$,
and the node-distance as the cardinality of this set: 
$d(\MMM_i,\MMM_j) = |\diff(\MMM_i,\MMM_j)| =  |\{ u \in V \mid \NN{i}{u} \neq \NN{j}{u} \}|$.
\end{definition}
Intuitively, the node distance captures the number of nodes whose incidence relationships differ.  It is easy to observe that the node distance is a distance metric, and captures a strong structural similarity between the graphs. Two graphs $\MMM_i,\MMM_j$ are identical iff $d(\MMM_i,\MMM_j) = 0$.
{For e.g., in Figure~\ref{fig:cluster_MAGs}, we have two MAGs that satisfy $d(\MMM_{12}, \MMM_{13}) = 2$ as $\diff(\MMM_{12}, \MMM_{13}) = \{ x, z \}$, where  $d(\MMM_{12}, \MMM_{21}) = 3$ as $\diff(\MMM_{12}, \MMM_{21}) = \{ x, y, z \}$.}

We are now ready to define a simple clustering property on MAGs. 

\begin{figure}[!ht]
\centering
\begin{minipage}{0.60\textwidth}
\centering
    \includegraphics[scale=0.75]{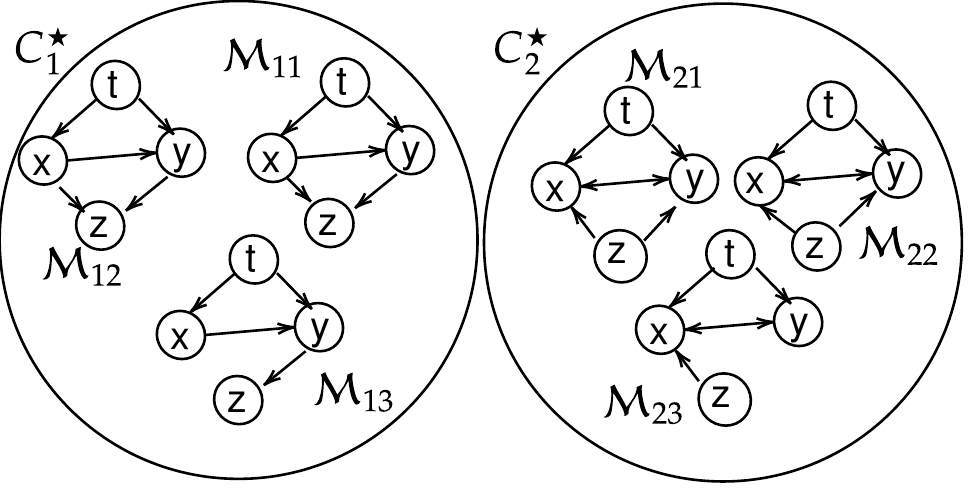}
\end{minipage}
\begin{minipage}{0.39\textwidth}
    \caption{{MAGs with $(\alpha=0.75, \beta=0.5)$-clustering. Every pair of graphs in $\Copt_1$ and $\Copt_2$ differ in at least $3 (= 0.75 \times 4)$ nodes, while pairs of graphs within clusters differ by at most $2 (= 0.5\times 4)$ nodes.}}
    \label{fig:cluster_MAGs}
\end{minipage}
\end{figure}

\begin{definition}[$(\alpha,\beta)$-clustering] \label{def:newalphabeta}
Let $\MMM_1,\dots,\MMM_M$ be a set of $M$ MAGs. We say that this set of MAGs satisfy the  $(\alpha,\beta)$-clustering property, with $\alpha > \beta \geq 0$, if there exists a partitioning of $[M]$ into sets (clusters) $\Copt_1,\dots,\Copt_k \subset [M]$ (for some $k \in \mathbb{N}$) such that for all $(i,j) \in [M] \times [M]$:

\begin{CompactEnumerate}
\item[(i)] if $i$ and $j$ belong to same set (cluster), then $d(\MMM_i,\MMM_j) \leq \beta n$;
\item[(ii)] if $i$ and $j$ belong to different sets (clusters), then $d(\MMM_i,\MMM_j) \geq \alpha n$.
\end{CompactEnumerate}
\end{definition}

Under this definition, all the $M$ MAGs could be different. See, e.g., Figure~\ref{fig:cluster_MAGs}. With right setting of $\alpha>\beta$ we can capture any set of possible $M$ MAGs.  Therefore, an algorithm such as FCI~\citep{spirtes2000causation}, that constructs PAGs might not be able to recover the clusters, as all the PAGs could be different, and the node-distance between PAGs does not correlate well with the node-distance between corresponding MAGs (e.g., see Figure~\ref{fig:pag_distance} in Appendix~\ref{app:model}). We use the PAGs generated by FCI as a starting point for all our algorithms that is then refined. 

\smallskip
\noindent With this discussion, we introduce our collaborative causal graph learning problem as follows:
	\begin{tcolorbox}[] 
	\textbf{Assumption:} MAGs $\MMM_1,\dots,\MMM_M$ (associated with entities $1,\dots,M$ respectively) satisfying the $(\alpha, \beta)$-clustering property \\
	\textbf{Access to each entity:} Through conditional independence (CI) tests on observational and interventional distributions. Each entity generates their own (independent) data samples. \\
	\textbf{Goal:} Learn $\MMM_1,\dots,\MMM_M$ while minimizing the max.\ number of interventions per entity.
	\end{tcolorbox}

\smallskip
\noindent An interesting case of the Definition~\ref{def:newalphabeta} is when $\beta=0$.
\begin{definition}[$\alpha$-clustering] \label{def:alphaclustering}
We say a set of MAGs $\MMM_1,\dots,\MMM_M$ satisfy the $\alpha$-clustering property, if and only if they satisfy $(\alpha, 0)$-clustering property.
\end{definition}

Note that $\alpha$-clustering is {a natural} property, wherein each cluster is associated with a single unique MAG, and all entities in the cluster have the same MAGs, and same conditional independences.

\section{Causal Discovery under $(\alpha,\beta)$-Clustering Property} \label{sec:alphabeta}

In this section, we present our main algorithm for collaboratively learning causal MAGs under the $(\alpha,\beta)$-clustering property. Missing details from this section are presented in Appendix~\ref{app:recovery_naive} and \ref{app:alphabeta}. 

\begin{definition}[Partial Ancestral Graph (PAG)]Let $[\MMM_i]$ denote the Markov equivalence class of the MAG $\MMM_i$ and represented by the Partial Ancestral Graph (or PAG) $\PPP_i = (V, \hat{E}_i)$. Edges $\hat{E}_i$ have three kinds of endpoints given by arrowheads $(\leftarrow)$, circles $(\circtail)$ and tails $(-)$.
\end{definition}

All our algorithms are randomized, and succeed with high probability over the randomness introduced by the algorithm. The idea behind all our algorithms is to first learn the true clusters $\Copt_1,\dots,\Copt_k$ using very few interventions. Once the true clusters are recovered, the idea is to distribute the interventions across the entities in each cluster and merge the results learned to recover the MAGs (Section~\ref{sec:MAGlearn}). For our algorithms, a lower bound for $\alpha$ and upper bound for $\beta$ is sufficient. In practice, a clustering of the PAGs (generated from FCI algorithm) can provide guidance about these bounds on $\alpha,\beta$, or if we have additional knowledge that $\alpha \in [1-\epsilon, 1]$ and $\beta \in [0, \epsilon]$ for some constant $\epsilon > 0$, then, we can use a binary search based guessing strategy that increases our intervention bounds by ${\log^2 (n\epsilon)}/{(1-2\epsilon)^2}$ factor. It is important to note that none of our algorithms require the knowledge of the number of underlying clusters $k$.

\begin{algorithm}[H]
\begin{small}
\label{alg:identify_nbr}
\begin{algorithmic}[1]
\STATE \textbf{Input:}  node $u \in V$,  PAG $\PPP_i$ of entity $i$
\STATE \textbf{Output:} $\out{i}{u}$
\STATE $\out{i}{u} = \{ v \mid u \rightarrow v \in \PPP_i \}$
\FOR{$v \in \D{i}(u)$ such that $u \circtailcirc v \ \text{ or } \ u \rightarrowcirc v \in \PPP_i$ }
\IF{$u \notindep v \mid \doo(u)$}
\STATE $\out{i}{u} \leftarrow \out{i}{u} \cup \{ v \}$
\ENDIF
\ENDFOR
\STATE Return $\out{i}{u}$
\end{algorithmic}
\caption{\IdentifyNbr$(\PPP_i,u)$}
\end{small}
\end{algorithm}

\begin{algorithm}[H]
\begin{small}
\label{alg:identify_bidir}
\begin{algorithmic}[1]
\STATE \textbf{Input:}  node $u \in V$, PAG $\PPP_i$ of entity $i$
\STATE \textbf{Output:} $\bi{i}{u}$
\STATE $\bi{i}{u} = \{v \mid u \leftrightarrow v \in \PPP_i \}$
\FOR{\small{$v \in \D{i}(u)$ such that $u \circtailcirc v$ or $u \leftarrowcirc v$ or $u \rightarrowcirc v \in \PPP_i$ }}
\IF{$u \indep v \mid \doo(u)$ and $u \indep v \mid \doo(v) $}
\STATE $\bi{i}{u} \leftarrow \bi{i}{u} \cup \{ v \}$
\ENDIF
\ENDFOR
\STATE Return $\bi{i}{u}$
\end{algorithmic}
\caption{\Identifybidir$(\PPP_i,u)$}
\end{small}
\end{algorithm}

\noindent\textbf{Helper Routines.} Let $\D{i}(u)$ denote all nodes that are adjacent to $u$ in the PAG $\PPP_i$, i.e., $\D{i}(u) = \{ v \mid (u,v) \in \hat{E}_i \}$. Given the PAG $\PPP_i$, Algorithm~\IdentifyNbr identifies all the outgoing neighbors of any node $u$ in $\MMM_i$. We look at edges of the form $u \circtailcirc v$ or $u \rightarrowcirc v$ in $\PPP_i$ incident on $u$, and identify if $u \rightarrow v$ using the CI-test $u \indep v \mid \doo(u)$. This is based on the observation that any node $v$ that is a descendant of $u$ (including $\out{i}{u}$) satisfies $u \notindep v \mid \doo(u)$.  Algorithm~\Identifybidir identifies all the bidirected edges incident on $u$.  If there is an edge of the form $u \circtailcirc v$ or $u \leftarrowcirc v$ or $u \rightarrowcirc v$ in the PAG, and $v \not\in \out{i}{u}$ and $u \not\in \out{i}{v}$, then it must be a bidirected edge.

Using these helper routines, we give an Algorithm~\RecoverG that recovers any MAG $\MMM_i$ using $n$ atomic interventions. The missing details are presented in Appendix~\ref{app:recovery_naive}.

\smallskip
\noindent \textbf{Algorithm~\RecoverG}. For every $u \in V$, first identify outgoing neighbors using Algorithm~\IdentifyNbr and then identify all the bidirected edges incident on $u$ using Algorithm~\Identifybidir.

\begin{lemma}\label{lem:naive_main}
Algorithm~\RecoverG recovers all edges of $\MMM_i$, for an entity $i \in [M]$ using $n$ atomic interventions.
\end{lemma}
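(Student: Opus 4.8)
The plan is to show three things in sequence: (a) the PAG $\PPP_i$ already fixes the skeleton and all invariant (non-circle) endpoints correctly, so it suffices to resolve the circle marks; (b) the two helper routines resolve every such mark correctly using a single intervention per node; and (c) the total number of distinct interventions is $n$.

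First I would recall the standard soundness guarantees of FCI: the adjacencies of $\PPP_i$ coincide exactly with those of $\MMM_i$, and every definite arrowhead/tail of $\PPP_i$ agrees with $\MMM_i$. Thus for each pair $(u,v)$ adjacent in $\MMM_i$, exactly one of $u \rightarrow v$, $u \leftarrow v$, $u \leftrightarrow v$ holds; the definite edges are already placed by the lines initializing $\out{i}{u}$ and $\bi{i}{u}$, and only the edges carrying a circle endpoint (the cases $u \circtailcirc v$, $u \rightarrowcirc v$, $u \leftarrowcirc v$) remain to be oriented.

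The crux is the interventional characterization, which I would prove by passing to the underlying DAG $\DDD_i$ over $V \cup L_i$. Intervening on $u$ deletes all edges into $u$, so in the mutilated graph every path leaving $u$ begins with an out-edge; since we condition on nothing, any path containing a collider is blocked, and hence $u$ and $v$ are marginally dependent under $\doo(u)$ iff there is a directed path from $u$ to $v$, i.e.\ iff $u$ is an ancestor of $v$. By the MAG construction an edge incident to $u$ is oriented $u \rightarrow v$ exactly when $u$ is an ancestor of $v$, and the ancestral property forbids $u$ from being an ancestor of $v$ when $u \leftarrow v$ or $u \leftrightarrow v$. Invoking faithfulness for both directions, I obtain that for adjacent $u,v$, the relation $u \notindep v \mid \doo(u)$ holds iff $u \rightarrow v \in \MMM_i$. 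This immediately yields correctness of \IdentifyNbr: the test in its loop adds $v$ to $\out{i}{u}$ precisely for the children. Applying the same statement with the roles of $u$ and $v$ exchanged, the conjunction $u \indep v \mid \doo(u)$ and $u \indep v \mid \doo(v)$ holds iff $u$ is not an ancestor of $v$ and $v$ is not an ancestor of $u$, which for an adjacent pair is exactly $u \leftrightarrow v$; this gives correctness of \Identifybidir.

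Combining these, every adjacent pair is classified into exactly one of the three edge types --- children are found by \IdentifyNbr, bidirected edges by \Identifybidir, and an edge $u \leftarrow v$ is recovered as $v \rightarrow u$ when the routines are run at $v$ --- so \RecoverG outputs all edges of $\MMM_i$ with correct orientations. Finally, for the intervention count I would note that the only interventional tests used are of the form $\doo(w)$ for a single $w \in V$, and that the outcome of each $\doo(w)$ can be cached and reused across all calls; running the routines over all $u \in V$ therefore performs one atomic intervention per node, i.e.\ $n$ in total. The main obstacle is the interventional characterization in the third paragraph: one must argue carefully that, after mutilating $\DDD_i$ at $u$, marginal $u$--$v$ dependence is equivalent to $u$ being an ancestor of $v$, correctly handling the marginalization over the latents $L_i$ and applying faithfulness in both directions; the remaining bookkeeping about PAG endpoints and the intervention count is routine.
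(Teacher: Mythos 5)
Your proposal is correct and follows essentially the same route as the paper: the core step is the interventional characterization that, for adjacent $u,v$, $u \notindep v \mid \doo(u)$ holds iff $u$ is an ancestor of $v$ in the underlying DAG $\DDD_i$ (the paper's Claim~\ref{cl:intervention_ancestor}, proved by the same mutilated-graph/d-separation argument), from which correctness of \IdentifyNbr and \Identifybidir and the count of one intervention per node follow exactly as in the paper's Claims~\ref{cl:outnbr} and~\ref{cl:bidir}. Your explicit remarks about FCI soundness and about reusing the single $\doo(w)$ intervention across all helper calls are points the paper leaves implicit, but they do not change the argument.
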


\noindent Complementing this, we show that $n$ interventions are also required. 

\begin{proposition}\label{prop:lb_naive}
There exists a causal MAG $\MMM$ such that every adaptive or non-adaptive algorithm requires $\Omega(n)$ many {atomic} interventions to recover $\MMM$. 
\end{proposition}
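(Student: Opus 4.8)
The plan is to prove this worst-case lower bound by an adversary (indistinguishability) argument: I exhibit a family of MAGs on $V$ that are pairwise indistinguishable to any algorithm intervening on too few nodes, so that correct recovery forces $\Omega(n)$ atomic interventions. Assume $n$ is even and partition $V$ into $n/2$ disjoint pairs $\{a_i,b_i\}$, $i\in[n/2]$. For each pair independently, let the edge between $a_i$ and $b_i$ be either $a_i\to b_i$ or $a_i\leftrightarrow b_i$, with no edges across pairs; this yields a family $\mathcal{F}$ of $2^{n/2}$ MAGs, each a disjoint union of $n/2$ two-node components. One checks each is a legal MAG: it is ancestral, since there are no directed cycles and no directed path accompanies any bidirected edge, and it is maximal, since non-adjacent vertices lie in different components and are m-separated by $\emptyset$.

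The crux is an indistinguishability lemma. Fix $i$ and let $\MMM,\MMM'\in\mathcal{F}$ differ only on pair $i$, say $a_i\to b_i$ in $\MMM$ and $a_i\leftrightarrow b_i$ in $\MMM'$. I claim that for every observational query $u\indep v\mid Z$ and every interventional query $u\indep v\mid Z,\doo(w)$ with $w\notin\{a_i,b_i\}$, the answer is identical under $\MMM$ and $\MMM'$. The two-node components $a_i\to b_i$ and $a_i\leftrightarrow b_i$ are Markov equivalent (both have $a_i\notindep b_i$ with no m-separating set, and both realize the common PAG edge $a_i\circtailcirc b_i$)---exactly the phenomenon in Figure~\ref{fig:diff}---so under faithfulness all observational answers coincide. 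For an interventional query with $w\notin\{a_i,b_i\}$, the node $w$ lies in some other component $j\neq i$, hence the graph mutilation defining $\doo(w)$ touches only component $j$ and leaves component $i$ intact; since m-separation statements factor over connected components, every answer is a product of per-component contributions whose component-$i$ factor is the (equivalence-class-identical) within-component answer. Thus all queries available to an algorithm that never intervenes on $a_i$ or $b_i$ return identical answers on $\MMM$ and $\MMM'$. The only way to separate the two edge types is, as in \Identifybidir, to intervene on an endpoint of the pair: $a_i\indep b_i\mid\doo(a_i)$ holds iff the edge is bidirected.

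Given the lemma, the bound follows by a version-space adversary argument covering both non-adaptive and adaptive algorithms. Track the set of MAGs in $\mathcal{F}$ consistent with the transcript of CI answers so far. Each atomic intervention $\doo(w)$ with $w$ in pair $i$ can, by the lemma, constrain only the edge type of pair $i$; the edge types of all untouched pairs remain free to be $\to$ or $\leftrightarrow$. Hence after intervening on a node set $W$ meeting at most $|W|$ distinct pairs, every pair disjoint from $W$ is still undetermined. If the algorithm uses fewer than $n/2$ interventions, some pair $\{a_i,b_i\}$ is never intervened on, so at least two MAGs in $\mathcal{F}$ differing exactly on pair $i$ are consistent with the whole transcript; whatever the algorithm outputs is wrong for one of them. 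Therefore any correct algorithm---adaptive or not---uses at least $n/2=\Omega(n)$ atomic interventions on some $\MMM\in\mathcal{F}$. For randomized algorithms the same bound follows by placing the uniform distribution on $\mathcal{F}$ and invoking Yao's principle, as in the paper's other lower bound.

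The step I expect to be the main obstacle is the indistinguishability lemma, and within it the claim that interventions on other components leak no information about pair $i$: I must argue carefully that the interventional CI relations factor across the connected components of the mutilated graph and that the component-$i$ factor is insensitive to the $\to$ versus $\leftrightarrow$ choice. The observational half is immediate from Markov equivalence and faithfulness (Figure~\ref{fig:diff}), but the interventional half needs the factorization over components together with the observation that mutilating a node outside pair $i$ cannot create any new m-connecting path through $a_i$ or $b_i$.
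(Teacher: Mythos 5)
Your proposal is correct and follows essentially the same route as the paper's proof: both construct a perfect matching of disjoint two-node components whose edge type (directed vs.\ bidirected) cannot be determined by any observational query or by any intervention outside the pair, and then argue that a pair left untouched by the algorithm leaves two indistinguishable MAGs, one of which the output must get wrong. The only difference is quantitative: the paper's family also includes the reversed orientation $v_i \leftarrow v_{i+1}$, which forces an intervention on \emph{both} endpoints of every pair and yields the sharper bound of $n$ interventions, whereas your two-type family yields $n/2$ --- still $\Omega(n)$, as the statement requires.
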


\subsection{Recovering the Clusters} \label{sec:alphabetaalg}

From the $(\alpha,\beta)$-clustering definition, we know that a pair of entities belonging to the same cluster have higher structural similarity between their MAGs than a pair of entities across different clusters.

Let us start with a simplifying assumption that $\beta = 0$ (i.e., $\alpha$-clustering). So, all the MAGs are separated by a distance of at least $\alpha n$. We make the observation that to identify that two MAGs, say $\MMM_i$ and $\MMM_j$ belong to different clusters, it suffices to find a node $u$ from the node-difference set $\diff(\MMM_i,\MMM_j)$ and checking their neighbors using Algorithms~\IdentifyNbr and~\Identifybidir. We argue that (see Claim~\ref{cl:separation}, Appendix~\ref{app:nolatents}), with probability at least $1-\delta$, we can identify one such node $u \in \diff(\MMM_i,\MMM_j)$ by sampling $2\log (M/\delta)/\alpha$ nodes uniformly from $V$ as $|\diff(\MMM_i,\MMM_j)| = d(\MMM_i, \MMM_j) \geq \alpha n$.\!\footnote{For theoretical analysis, our intervention targets are randomly chosen, even with the knowledge available from PAGs, because in the worst-case the PAGs might contain no directed edges to help decide which nodes to intervene on. In practice, though if we already know edge orientations from PAG we do not have to relearn them, and a biased sampling based on edges uncertainties in PAGs might be a reasonable approach.} However, this approach will not succeed when $\beta \neq 0$ because now we have MAGs in the same cluster that are also separated by non-zero distance. 

\begin{algorithm}[!ht]
\caption{\abBoundedDegree}
\begin{small}
\label{alg:boundlatents}
\begin{algorithmic}[1]
\STATE \textbf{Input:} $\alpha > 0$, $\beta \geq 0$ ($< \alpha$), confidence parameter $\delta > 0$, PAGs $\UUU_1,\dots,\UUU_M$ of $M$ entities
\STATE \textbf{Output:} Partition of $[M]$ into clusters
\STATE Let $S$ denote a uniform sample of $\frac{4\log(M/\delta)}{(\alpha-\beta)^2}$ nodes from $V$ selected with replacement.
\FOR{every entity $i \in [M]$ and $u \in S$}
\STATE $\out{i}{u} \leftarrow \IdentifyNbr(\PPP_i, u)$  \STATE $\bi{i}{u} \leftarrow \Identifybidir(\PPP_i, u)$
\STATE $\inc{i}{u} \leftarrow \D{i}(u) \setminus \left( \out{i}{u} \cup \bi{i}{u} \right)$ 
\STATE Construct $\NN{i}{u}$ (defined in~\eqref{eqn:N})
\ENDFOR
\STATE Let $\mathcal{P}$ denote an empty graph on set of entities $[M]$
\FOR{every pair of entities $i, j$}
\STATE Let $\Count(i, j) = \sum_{u \in S} \mathbf{1}\{ \NN{i}{u} = \NN{j}{u} \}$
\IF{$\Count(i, j) \geq \left( 1- \frac{\alpha+\beta}{2}\right)|S|$}
\STATE Include an edge between $i$ and $j$ in $\mathcal{P}$
\ENDIF
\ENDFOR
\STATE Return connected components in $\mathcal{P}$
\end{algorithmic}
\end{small}
\end{algorithm}

\smallskip
\noindent \textbf{Overview of Algorithm \abBoundedDegree.} We now build upon the above idea, to recover the true clusters $\Copt_1,\dots,\Copt_k$ when $\beta \neq 0$. As identifying a node $u \in \diff(\MMM_i,\MMM_j)$ is not sufficient, we maintain a count of the number of nodes among the sampled set of nodes $S$ that the pair of entities $i, j$ have the same neighbors, i.e., $\Count(i, j) = \sum_{u \in S} \mathbf{1}\{ \NN{i}{u} = \NN{j}{u} \}.$ {Based on a carefully chosen threshold value for the $\Count(i, j)$, that arises through the analysis of our randomized algorithm, we classify whether a pair of entities belong to the same cluster correctly.}

Overall, the idea here is to construct a graph $\mathcal{P}$ on entities (i.e., the node set of $\mathcal{P}$ is $[M]$). We include an edge between two entities $i$ and $j$ if $\Count(i, j)$ is above the threshold $(1-(\alpha+\beta)/2)|S|$. Using Lemma~\ref{lem:latent_alpha_beta}, we show that this threshold corresponds to the case where if the entities are from same true clusters, then the $\Count$ value corresponding to the pair is higher than the threshold; and if they are from different clusters it will be smaller, with high probability.  This ensures that every entity is connected only to the entities belonging to the same true cluster. We return the connected components in $\mathcal P$ as our clusters.

\smallskip
\noindent \textbf{Theoretical Guarantees.} In Algorithm~\abBoundedDegree, we construct a uniform sample $S$ of size $O(\log(M/\delta)/(\alpha-\beta)^2)$, and identify all the neighbors of $S$ for every entity $i \in [M]$. As we use ~\Identifybidir to identify all the bi-directed edges, the total number of interventions used by an entity for this step is at most $\Delta \cdot |S|$. Combining all the above, using the next lemma, we show that with high probability Algorithm~\abBoundedDegree recovers all the true clusters.

\begin{lemma}
\label{lem:latent_alpha_beta}
If the underlying MAGs $\MMM_1,\dots,\MMM_M$ satisfy $(\alpha,\beta)$-clustering property with true clusters $\Copt_1,\dots,\Copt_k$ and have maximum undirected degree $\Delta$. Then, the Algorithm \abBoundedDegree recovers the clusters $\Copt_1,\dots,\Copt_k$ with  probability at least $1-\delta$. Every entity $i \in [M]$ uses at most $4(\Delta+1) \log(M/\delta)/(\alpha-\beta)^2$ many atomic interventions.
\end{lemma}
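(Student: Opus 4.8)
The plan is to analyze the random variable $\Count(i,j) = \sum_{u \in S} \mathbf{1}\{\NN{i}{u} = \NN{j}{u}\}$ and show that the threshold $(1 - (\alpha+\beta)/2)|S|$ cleanly separates same-cluster pairs from different-cluster pairs, with high probability; once that is established, both the cluster-recovery guarantee and the intervention bound follow. The first step is to observe that the incidence sets the algorithm builds are the true ones: by the correctness of \IdentifyNbr and \Identifybidir established around Lemma~\ref{lem:naive_main}, the set $\NN{i}{u}$ constructed in the inner loop equals the true incidence set of $u$ in $\MMM_i$. Consequently $\mathbf{1}\{\NN{i}{u}=\NN{j}{u}\}$ is exactly the indicator of the event $u \notin \diff(\MMM_i,\MMM_j)$, which is what lets us relate $\Count(i,j)$ to the node-distance $d(\MMM_i,\MMM_j)$.

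Since $S$ consists of $|S|$ nodes drawn uniformly and independently from $V$ (with replacement), each summand of $\Count(i,j)$ is an independent Bernoulli variable with success probability $1 - d(\MMM_i,\MMM_j)/n$, so $\E[\Count(i,j)] = |S|\bigl(1 - d(\MMM_i,\MMM_j)/n\bigr)$. When $i,j$ lie in the same true cluster, $d(\MMM_i,\MMM_j) \le \beta n$ yields $\E[\Count(i,j)] \ge (1-\beta)|S|$; when they lie in different clusters, $d(\MMM_i,\MMM_j) \ge \alpha n$ yields $\E[\Count(i,j)] \le (1-\alpha)|S|$. Because $\alpha > \beta$, the threshold $(1-(\alpha+\beta)/2)|S|$ sits strictly between these two values, and in either regime the distance from $\E[\Count(i,j)]$ to the threshold is at least $|S|(\alpha-\beta)/2$. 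This symmetric placement of the threshold is the crux of the calibration, and I expect it to be the main thing to get right, since it is what allows a single choice of $|S|$ to simultaneously control errors in both directions.

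Next I would apply Hoeffding's inequality to the sum of $|S|$ independent $[0,1]$-valued indicators: with $t = |S|(\alpha-\beta)/2$, one gets $\Pr\bigl[|\Count(i,j) - \E[\Count(i,j)]| \ge t\bigr] \le 2\exp(-2t^2/|S|) = 2\exp\bigl(-|S|(\alpha-\beta)^2/2\bigr)$. Substituting $|S| = 4\log(M/\delta)/(\alpha-\beta)^2$ bounds this by $2\exp(-2\log(M/\delta)) = 2\delta^2/M^2$. A union bound over the at most $\binom{M}{2}$ pairs then shows that, with probability at least $1-\delta$, every pair is correctly classified. On this event, for a same-cluster pair the two-sided concentration gives $\Count(i,j) > \E[\Count(i,j)] - t \ge (1-\beta)|S| - t = (1-(\alpha+\beta)/2)|S|$, so an edge is added to $\mathcal{P}$; for a different-cluster pair it gives $\Count(i,j) < \E[\Count(i,j)] + t \le (1-\alpha)|S| + t = (1-(\alpha+\beta)/2)|S|$, so no edge is added. (Stating these one-sided inequalities explicitly, rather than the raw two-sided bound, is the small bookkeeping point worth spelling out.) Hence within each $\Copt_\ell$ all pairs are joined and no edge crosses between distinct clusters, so the connected components of $\mathcal{P}$ are exactly $\Copt_1,\dots,\Copt_k$.

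Finally, the intervention count is a direct accounting per sampled node: for each $u \in S$, computing $\out{i}{u}$ via \IdentifyNbr uses the single intervention $\doo(u)$, and computing $\bi{i}{u}$ via \Identifybidir additionally requires $\doo(v)$ for each of the at most $\Delta$ neighbors $v \in \D{i}(u)$, for a total of at most $\Delta+1$ interventions. Summing over the $|S|$ sampled nodes gives at most $(\Delta+1)|S| = 4(\Delta+1)\log(M/\delta)/(\alpha-\beta)^2$ atomic interventions per entity, as claimed. The concentration and union-bound steps are routine once the threshold calibration above is in place, so the only genuinely delicate part of the argument is that placement.
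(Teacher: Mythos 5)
Your proof is correct and takes essentially the same route as the paper's: the same expectation bounds $\E[\Count(i,j)] \geq (1-\beta)|S|$ and $\leq (1-\alpha)|S|$ in the two cases, Hoeffding with deviation $|S|(\alpha-\beta)/2$ (the paper's $\Lambda/2$ with $\Lambda = |S|(\alpha-\beta)$), a union bound over all pairs, and the identical $(\Delta+1)|S|$ intervention accounting. The only differences are presentational --- you make explicit the i.i.d.\ Bernoulli structure of the summands and the reliance on the correctness of \IdentifyNbr and \Identifybidir, both of which the paper leaves implicit.
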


\subsection{Learning Causal Graphs from $(\alpha,\beta)$-Clustering } \label{sec:MAGlearn}

In this section, we outline an approach to recover a close approximation of the causal MAGs of all the entities, after correctly recovering the clusters using Algorithm~\abBoundedDegree. First, we note that since the $(\alpha,\beta)$-clustering allows the MAGs even in the same cluster to be different, the problem of exactly learning all the MAGs is challenging (with a small set of interventions) as causal edges learnt for an entity may not be relevant for another entity in the same cluster. 

In the scenarios mentioned in the introduction, we expect the clusters to be more homogeneous, with many entities in the same cluster sharing the same MAG. We provide an overview of Algorithm~\Recovery that recovers one such MAG called \emph{dominant MAG} for every cluster. Consider a recovered cluster $\Copt_a$, and a partitioning $S^1_a, S^2_a, \cdots$ of MAGs such that all MAGs in a partition $S^i_a$ are equal for all $i$. We call the MAG $\MMMdom_a$ corresponding to the largest partition $S^{\text{dom}}_a$ as the \emph{dominant MAG} of $\Copt_a$. The dominant MAG of a cluster is parameterized by $\gamma_a = |S^{\text{dom}}_a|/|\Copt_a|$ (fraction of the MAGs in the cluster that belong to the largest partition). 

\smallskip
\noindent \textbf{Overview of Algorithm~\Recovery.} Consider a cluster $\Copt_a$. We recover the dominant MAG of this cluster, $\MMMdom_a$, by recovering all the neighbors of every node and carefully merging them. Our idea is to assign a node, selected uniformly at random, to every entity in $\Copt_a$, and recover the neighborhood  of the node using Algorithms~\IdentifyNbr and~\Identifybidir. If the clusters are large such that $|\Copt_a| \gg n$ (see Theorem~\ref{thm:dominantMAG} for a precise bound), we can show a large number of entities $T_u$ are assigned node $u$, and many of them will share the dominant MAG. We maintain a count $\NCount(i, u)$ of the number of times the entity $i$ agrees with other entities in $T_u$ about neighbors of $u$, and guarantee (with high probability) that the entity with the highest count will be that of dominant MAG. After merging the neighbors recovered for every node, we assign the resulting graph to every entity in the cluster.

\begin{algorithm}[H]
\caption{\Recovery}
\begin{small}
\label{alg:graphs_boundlatents}
\begin{algorithmic}[1]
\STATE \textbf{Input:} $\alpha > 0$, $\beta \geq 0$ ($< \alpha$), confidence parameter $\delta > 0$, PAGs $\UUU_1,\dots,\UUU_M$ of $M$ entities
\STATE \textbf{Output:} $ \hat{\MMM}_1, \hat{\MMM}_2, \cdots \hat{\MMM}_M$ representing set of $M$ MAGs.
\STATE Obtain clusters $\Copt_1, \Copt_2, \cdots, \Copt_k$ using Algorithm~\ref{alg:boundlatents}.
\FOR{every cluster $\Copt_a$ where $a \in [k]$}
\STATE Let $\hatMMMdom_a$ be an empty graph on the set of nodes $V$.
\STATE For every entity $i \in \Copt_a$, select a node $u \in V$ uniformly at random and assign it to $u$ represented by the set $T_u$. 
\FOR{every node $u \in V$ }
\FOR{every entity $i \in T_u$}
\STATE $\out{i}{u} \leftarrow \IdentifyNbr(\PPP_i, u)$  \STATE $\bi{i}{u} \leftarrow \Identifybidir(\PPP_i, u)$.
\STATE $\inc{i}{u} \leftarrow \D{i}(u) \setminus \left( \out{i}{u} \cup \bi{i}{u} \right)$. 
\STATE Construct $\NN{i}{u}$ (defined in~\eqref{eqn:N}) and calculate $\NCount(i, u) = \sum_{ j \in T_u : j\neq i} \mathbf{1} \{ N_i(u) = N_j (u) \}$ 
\ENDFOR
\STATE Let $u_{\max} \leftarrow \arg \max_{i \in T_u} \NCount(i, u)$.
\STATE Set neighbors of $u$ in $\hatMMMdom_a$ to the set $N_{u_{\max}}(u)$.
\ENDFOR
\STATE For every entity $i \in \Copt_a$, set $\hat{\MMM}_i = \hatMMMdom_a$.
\ENDFOR
\STATE Return $ \hat{\MMM}_1, \hat{\MMM}_2, \cdots \hat{\MMM}_M$
\end{algorithmic}
\end{small}
\end{algorithm}

As the entities satisfy $(\alpha, \beta)$-clustering property, for all entities the recovered MAGs (dominant MAGs) are close to the true MAGs, and within a distance of at most $\beta n$. Note that any MAG from the cluster is within a distance of at most $\beta n$ due to $(\alpha, \beta)$-clustering property, but naively generating a valid MAG from a cluster will require $n$ interventions on one entity from Proposition~\ref{prop:lb_naive}. Our actual guarantee is somewhat stronger, as in fact, for the entities whose MAGs are dominant in their cluster, we do recover the exact MAGs. We have the result:

\begin{theorem}\label{thm:dominantMAG}
Suppose $\MMM_1, \MMM_2, \cdots \MMM_M$ satisfy $(\alpha, \beta)$ clustering property. If $\Copt_a = \Omega(n\log(n/M\delta)(2\gamma_a -1)^2)$ for all $a \in [k]$, then, Algorithm~\Recovery recovers graphs $\hat{\MMM}_1, \cdots \hat{\MMM}_M$ such that for every entity $i \in [M]$, we have $d(\MMM_i, \hat{\MMM}_i) \leq \beta n$  with probability $1-\delta$. Every entity uses at most $ (\Delta+1) + 4(\Delta+1) \log(M/\delta)/(\alpha-\beta)^2$ many atomic interventions.
\end{theorem}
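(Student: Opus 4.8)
The plan is to follow the two-phase structure of Algorithm~\Recovery: a clustering phase (line~3) and a per-cluster, per-node majority-vote phase (lines~4--19). The crucial simplification is that, since CI-tests are assumed accurate and the helper routines \IdentifyNbr and \Identifybidir are correct (as in Lemma~\ref{lem:naive_main}), every invocation on an entity $i$ and node $u$ returns the \emph{exact} incidence set $\NN{i}{u}$; hence the only randomness to control is (i) the cluster recovery and (ii) the uniform assignment of entities to nodes in line~6, and I would split the failure budget $\delta/2$ to each. For phase one I would invoke Lemma~\ref{lem:latent_alpha_beta} verbatim: with probability at least $1-\delta/2$ the connected components returned in line~3 are exactly the true clusters $\Copt_1,\dots,\Copt_k$, at a cost of at most $4(\Delta+1)\log(M/\delta)/(\alpha-\beta)^2$ atomic interventions per entity. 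Condition on this event henceforth.

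The heart of the argument is a deterministic \textbf{voting claim} for phase two. Fix a correctly recovered cluster $\Copt_a$, and recall that a $\gamma_a$-fraction of its entities realize the dominant MAG $\MMMdom_a$; call these the dominant entities, and note they report an identical incidence set at \emph{every} node $u$. I claim that if, among the entities $T_u$ assigned to $u$ (line~6), the dominant entities form a strict majority, then line~15 sets the neighbors of $u$ in $\hatMMMdom_a$ to the dominant incidence set. Indeed, let $D_u\subseteq T_u$ collect all entities whose set at $u$ equals the dominant one; then every $i\in D_u$ has $\NCount(i,u)=|D_u|-1$, while any entity reporting a different set lies in a class of size at most $|T_u|-|D_u|$, so its $\NCount$ is at most $|T_u|-|D_u|-1<|D_u|-1$ once $|D_u|>|T_u|/2$. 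Thus the $\arg\max$ in line~14 lands on a dominant report (coincidental agreement of a non-dominant entity at $u$ only enlarges $D_u$, which only helps). It therefore suffices to show that in each $T_u$ the dominant entities are a strict majority.

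The main obstacle is obtaining the correct linear-in-$n$ dependence. Write $X_u$ for the number of dominant entities assigned to $u$ and $Y_u$ for the number of non-dominant ones; these are independent binomials with $\gamma_a|\Copt_a|$ and $(1-\gamma_a)|\Copt_a|$ trials, each of success probability $1/n$, with means $\gamma_a|\Copt_a|/n$ and $(1-\gamma_a)|\Copt_a|/n$ straddling the common mean-point $\tau=|\Copt_a|/(2n)$ by $(2\gamma_a-1)|\Copt_a|/(2n)$ on each side. A Hoeffding bound on $X_u-Y_u$ is too weak here (the $\pm1$ range forces an $n^2$ requirement); instead I would apply a \emph{multiplicative} Chernoff bound to each of $X_u$ and $Y_u$ about $\tau$, exploiting the small success probability $1/n$. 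Since $X_u>\tau>Y_u$ forces $X_u>|T_u|/2$, this gives $\Pr[\text{dominant not a strict majority in }T_u]\le\exp\!\big(-\Omega((2\gamma_a-1)^2|\Copt_a|/n)\big)$, which requires $\gamma_a>1/2$. The hypothesis $|\Copt_a|=\Omega(n\log(nM/\delta)/(2\gamma_a-1)^2)$ then makes this at most $\delta/(2nM)$, and a union bound over the $\le n$ nodes and $\le M$ clusters keeps phase-two failure below $\delta/2$.

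Finally I would assemble the pieces. On the intersection of the two good events, the voting claim holds at every node of every cluster, so $\hatMMMdom_a=\MMMdom_a$ exactly, and line~18 assigns $\hat{\MMM}_i=\MMMdom_a$ to each $i\in\Copt_a$. For a dominant entity this yields $\hat{\MMM}_i=\MMM_i$, i.e.\ distance $0$ (the stronger guarantee), while for any $i\in\Copt_a$ the $(\alpha,\beta)$-clustering property (Definition~\ref{def:newalphabeta}) gives $d(\MMM_i,\hat{\MMM}_i)=d(\MMM_i,\MMMdom_a)\le\beta n$, since $\MMM_i$ and $\MMMdom_a$ lie in the same cluster. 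For the intervention count, phase two assigns each entity a single node and runs both helper routines on it, costing at most $\Delta+1$ interventions (one $\doo(u)$ plus one $\doo(v)$ per candidate spouse in \Identifybidir); adding the phase-one cost gives the claimed $(\Delta+1)+4(\Delta+1)\log(M/\delta)/(\alpha-\beta)^2$ per entity, and the total failure probability is at most $\delta$.
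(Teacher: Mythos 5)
Your proposal is correct, and it follows the paper's overall architecture: recover the clusters via Lemma~\ref{lem:latent_alpha_beta}, argue that at every node $u$ the entities carrying the dominant MAG win the $\NCount$ vote in $T_u$, then finish with the $(\alpha,\beta)$-clustering property and the $(\Delta+1)$ cost of the single assigned node. Where you genuinely diverge is in how the probabilistic core is organized. The paper uses two random steps: a Chernoff claim that every $|T_u|$ is at least $4\log(nM/\delta)/(2\gamma_a-1)^2$ (requiring the stated cluster size), followed by its Lemma~\ref{lem:dominantCount_appendix}, a pairwise Hoeffding comparison giving $\NCount(i,u) > \NCount(j,u)$ for $i \in G_a(u)$, $j \in B_a(u)$, where the failure probability is written as $\exp(-\Lambda^2/(2|T_u|))$ --- a bound that actually degrades as $|T_u|$ grows, so this step implicitly needs two-sided control of $|T_u|$, a subtlety the paper elides. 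You instead make the voting step purely deterministic (a strict majority of dominant-agreeing reports in $T_u$ forces the $\arg\max$ in line~14 to be a dominant report, with coincidental agreements only helping --- this is exactly the paper's $T_u \cap G_a(u)$ versus $T_u \cap B_a(u)$ comparison, made explicit), and you prove the strict majority by an unconditional multiplicative Chernoff bound on the two binomials $X_u, Y_u$ against the midpoint $|\Copt_a|/(2n)$, with no conditioning on $|T_u|$ at all. Your remark that a plain Hoeffding bound on $X_u - Y_u$ over all $|\Copt_a|$ entities would force an $n^2$ cluster-size requirement correctly diagnoses why the multiplicative bound (exploiting the $1/n$ assignment probability) is the right tool and why the linear-in-$n$ hypothesis suffices; this is the same phenomenon the paper's two-step route is designed to circumvent. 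What the paper's decomposition buys is the standalone pairwise lemma; what yours buys is a cleaner separation of deterministic and probabilistic content, an explicit failure-budget split, and the elimination of the $|T_u|$-conditioning issue. Both arguments need $\gamma_a > 1/2$, both read the hypothesis as $|\Copt_a| = \Omega\bigl(n\log(nM/\delta)/(2\gamma_a-1)^2\bigr)$ (the exponent placement in the theorem statement is a typo), and both arrive at the same per-entity intervention count.
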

\section{Causal Discovery under $\alpha$-Clustering Property}\label{sec:graph_recovery_general}

In the previous section, we discussed the more general $(\alpha,\beta)$-clustering scenario where we manage to construct a good approximation to all the MAGs. Now, we show that we can in fact recover all the MAGs exactly, if we make a stronger assumption. Missing details from this section are presented in Appendix~\ref{app:graph_recovery}.

Suppose the MAGs $\MMM_1,\dots,\MMM_M$ of the $M$ entities satisfy the $\alpha$-clustering property (Defn.~\ref{def:alphaclustering}). Firstly, we can design an algorithm similar to Algorithm~\abBoundedDegree (see Algorithm~\BoundedDegree, Appendix~\ref{app:boundedlatents}) that recovers the causal MAGs exactly with $O(\Delta \log(M/\delta)/\alpha)$ many interventions per entity, succeeding with probability $1-\delta$. Note that this has a better $1/\alpha$ term in the intervention bound, instead of $1/\alpha^2$ (when $\beta = 0$) term arising in Theorem~\ref{thm:dominantMAG}. In absence of latents, we can further improve it to $O(\log(M/\delta)/\alpha)$ many interventions per entity (see Algorithm~\NoLatents, Appendix~\ref{app:nolatents}).

In this section, we present another approach (Algorithm~\GenRecovery) with an improved result that requires fewer number of interventions, even when $\Delta$ is big, provided that each cluster has at least $\Omega(n \log (M/\delta))$ entities. Missing details of Algorithm~\GenRecovery are in Appendix~\ref{app:graph_recovery_general}.

\smallskip
\noindent \textbf{Overview of Algorithm~\GenRecovery}. First, using a similar approach as Algorithm~\abBoundedDegree, we construct a uniform sample $S \subseteq V$, and find all the outgoing neighbors of nodes in $S$, for every entity $i \in [M]$. Then, we construct a graph on entities denoted by $\mathcal{P}$, where we include an edge between a pair of entities if the outgoing neighbors of the set of sampled nodes $S$, and the set of neighbors in PAGs associated with the entities (obtained from FCI) are the same. However, due to the presence of bidirected edges, it is possible that the connected components of $\mathcal P$ may not represent the true clusters $\Copt_1,\dots,\Copt_k$. 

We make the observation that a pair of entities $i, j$ that have an edge in this $\mathcal{P}$ and from different true clusters, can differ only if there is a node $u$ such that $u$ has a bidirected edge $u \leftrightarrow v$ in $\MMM_i$, and a directed edge $u \leftarrow v$ in $\MMM_j$ (or vice-versa). Intervening on both $u$ and $v$ will separate these entities, our main idea is to ensure that this happens. First, we show how to \emph{detect}  if there are at least two true clusters in any connected component of $\mathcal{P}$. Then, we identify all the entities belonging to these two clusters and remove the edges between these entities in $\mathcal{P}$ and continue. 

More formally, let $T_1,\dots,T_{k'}$ be the partition of $[M]$ provided by the $k'$ connected components of $\mathcal{P}$ and some of these can contain more than one true cluster, hence $k' \leq k$ and we focus on detecting such events. Let $\pi : [M] \rightarrow V$ denote a mapping from the set of entities  to the nodes in $V$ such that $\pi(i)$ is chosen uniformly at random from $V$ for every entity $i$.  For every entity $i$, we intervene on the node $\pi(i)$. To detect that there are at least two clusters in a given subset $T_a$ of entities, we show that there are two entities $i, j$ with an edge in $\mathcal{P}$ and for some node $u \in S$, we can identify the neighbor $v \in \D{i}(u) \cap \D{j}(u)$ such that $u \leftrightarrow v$ is an edge in $\MMM_i$ and $u \leftarrow v$ is an edge in $\MMM_j$ (or vice-versa). As there are at least $\Omega(n \log (M/\delta))$ entities in each of these two true clusters in $T_a$, for some $i, j \in T_a$, we can show that  $\pi(i) = \pi(j) = v$ with probability at least $1-\delta$.

After detecting the event that a component $T_a$ of $\mathcal{P}$ contains entities from at least two different true clusters (say, $\Copt_b$ and $\Copt_c$) due to an edge $(u,v)$ as above, we intervene on $v$ for every entity in $T_a$. By intervening on $v$ (and $u \in S$), we can separate all entities in $T_a$ that belong to true clusters $\Copt_b$ and $\Copt_c$, and remove edges between such entity pairs from $\mathcal{P}$. 

We repeat this above procedure of refining $\mathcal{P}$. In each iteration, we will have removed all edges between every pair of entities belonging to at least two different true clusters. Since there are at most $k^2$ different true cluster pairs, after $k^2$ iterations the connected components remaining correspond to the true clusters (with high probability). This can be done without knowing the value of $k$, by checking whether the connected components in $\mathcal{P}$ change or not after each iteration of the above idea.

\begin{algorithm}[!ht]
\begin{small}
\caption{\GenRecovery}
\label{alg:latents}
\begin{algorithmic}
\STATE \textbf{Input:} $\alpha > 0$, confidence parameter $\delta > 0$, PAGs $\UUU_1,\dots,\UUU_M$ of $M$ entities $\UUU_1,\dots,\UUU_M$)
\STATE \textbf{Output:} Partition of $[M]$ into clusters
\STATE Let $S$ denote a uniform sample of $\frac{2\log(2M/\delta)}{\alpha}$ nodes from $V$ selected with replacement.
\FOR{every entity $i \in [M]$ and $u \in S$}
\STATE $\out{i}{u} \leftarrow \IdentifyNbr(i, u)$
\ENDFOR
\STATE Let $\mathcal{P}$ denote an empty graph on the set of entities $[M]$.
\FOR{every pair of entities $i, j$}
\IF{$\out{i}{u} = \out{j}{u}$ and $\D{i}(u) = \D{j}(u)$  $\ \forall u \in S$}
\STATE Include an edge between $i$ and $j$ in $\mathcal{P}$
\ENDIF
\ENDFOR
\STATE $\itr \leftarrow 1$, $\mathcal{P}_0 \leftarrow \mathcal{P}$
\WHILE{\textsc{True}}
\STATE $\mathcal{P}_{\itr} \leftarrow \mathcal{P}_{\itr-1}$
\STATE Let $T_{1},T_{2}, \cdots$ denote the components in $\mathcal P_{\itr}$.
\STATE For all $i \in [M]$, obtain interventional distribution on $\pi(i)$ picked u.a.r from $V$.
\IF{$\exists \mbox{ edge }  (i, j) \in \mathcal{P}_{\itr}$ in component $T_a$ such that $\pi(i) = \pi(j)$}
\STATE Let $v = \pi(i) = \pi(j)$
\IF{$v \in \bi{i}{u}, v \not\in \bi{j}{u}$ (or vice-versa) for some $u \in S$}
\STATE Intervene on $v$ for every entity in $T_a$.
\STATE Remove edge $(i', j')$ from $\mathcal{P}_{\itr}$ if $v \in \bi{i'}{u}$, $v \not\in \bi{j'}{u}$ (or vice-versa) for every $i', j' \in T_a$
\ENDIF
\ENDIF
\IF{the set of edges in $\mathcal{P}_{\itr}$ are \emph{same} as the set of edges in $\mathcal{P}_{\itr-1}$}
\STATE Return connected components in $\mathcal P_{\itr}$
\ENDIF
\STATE $\itr \leftarrow \itr + 1$
\ENDWHILE
\end{algorithmic}
\end{small}
\end{algorithm}

\subsection{From $\alpha$-Clustering to Learning Causal Graphs} \label{app:meta_main}


 Suppose that the underlying MAGs $\MMM_1, \MMM_2, \cdots, \MMM_M$ satisfy the $\alpha$-clustering property, our algorithms are based on first accurately recovering these clusters. The idea of going from clusters to MAGs is simple and is based on distributing the interventions across the entities in the cluster. We now discuss a meta-algorithm that returns the associated causal MAG of every entity given the true clustering. Our meta-algorithm takes as input the true clusters $\Copt_1, \Copt_2, \cdots, \Copt_k$ and recovers the MAGs associated with each of them. In any cluster $\Copt_b$ such that $|\Copt_b| < n$, our meta-algorithm uses an additional $\lceil n/|\Copt_b|\rceil$ many interventions for each entity in $\Copt_b$. For clusters satisfying $|\Copt_b| \geq n$, it uses an extra intervention per entity.

\smallskip
\noindent \textbf{Meta-Algorithm}. Consider a true cluster $\Copt_b$ ($b \in [k]$). {Construct a mapping $\phi$ that partitions the $n$ nodes in $V$ among all the entities in $\Copt_b$, such that no entity is assigned to more than $\lceil n/|\Copt_b| \rceil$ many nodes.} By definition, all entities in $\Copt_b$ have the same PAG. Let $\PPP$ be the common PAG. Construct a MAG $\MMM$ from $\PPP$ as follows. Consider an edge $(u,v)$ in $\UUU$. Let $u = \phi(i)$ and $v = \phi(j)$ where the entities $i,j \in \Copt_b$ are such that we intervene on node $u$ in entity $i$ and node $v$ in entity $j$ ({$i$ could be equal to $j$}). Now, if $v \in \out{i}{u}$, we add $u \rightarrow v$ into the graph $\MMM$, else if $u \in \out{j}{v}$, we add $u \leftarrow v$, and $u \leftrightarrow v$ otherwise. We assign graph $\MMM$ for every entity in $\Copt_b$. Repeating this procedure for every $\Copt_b$ generates the $M$ MAGs, one for each entity.

If the clusters are of size at least $n$, i.e., $\min_{b \in [k]} |\Copt_b| \geq n$, then, we have the following corollary. For additional details, see Appendix~\ref{app:meta}. 
\begin{corollary}\label{cor:cluster_to_graph}
Suppose there is an Algorithm $\mathcal{A}$ that recovers the true clusters $\Copt_1, \Copt_2, \cdots, \Copt_k$ of the underlying MAGs $\MMM_1, \MMM_2, \cdots, \MMM_M$ satisfying the $\alpha$-clustering property such that every entity $i \in [M]$ uses at most $f(M)$ interventions. Suppose $\min_{b \in [k]} |\Copt_b| \geq n$. Then, there is an algorithm that can learn all the MAGs $\MMM_1, \MMM_2, \cdots, \MMM_M$ such that every entity $i \in [M]$ uses at most $f(M) + 1$  many interventions.
\end{corollary}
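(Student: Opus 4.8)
The plan is to prove the corollary by composing Algorithm $\mathcal{A}$ with the Meta-Algorithm in two phases. First I would run $\mathcal{A}$ to recover the true clusters $\Copt_1,\dots,\Copt_k$; by hypothesis this costs at most $f(M)$ interventions per entity. The structural fact I would lean on is that under the $\alpha$-clustering property every entity in a cluster $\Copt_b$ has the \emph{same} MAG, and therefore the same PAG $\PPP$ (the representative of its Markov equivalence class). Hence it suffices to reconstruct a single MAG $\MMM$ per cluster and assign it to all of its members, and the only remaining work is to orient, for each cluster, every edge of the common PAG $\PPP$ into one of $\rightarrow,\leftarrow,\leftrightarrow$, using as few additional interventions per entity as possible.

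For the second phase I would invoke the Meta-Algorithm on each cluster $\Copt_b$. It distributes the $n$ nodes of $V$ among the entities of $\Copt_b$ via the mapping $\phi$ so that no entity receives more than $\lceil n/|\Copt_b|\rceil$ nodes; since the hypothesis gives $|\Copt_b|\ge n$, this ceiling equals $1$, so each entity is assigned at most one node. The key point driving the ``$+1$'' bound is that orienting an edge $(u,v)$ requires only the \emph{out}-neighbor sets $\out{}{u}$ and $\out{}{v}$: the entity holding $u$ runs \IdentifyNbr (a single intervention $\doo(u)$), the entity holding $v$ runs \IdentifyNbr (a single intervention $\doo(v)$), and bidirectedness is then inferred \emph{by exclusion} --- we set $u\rightarrow v$ if $v\in\out{}{u}$, $u\leftarrow v$ if $u\in\out{}{v}$, and $u\leftrightarrow v$ otherwise. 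Because each entity ever intervenes only on its own assigned node, every entity performs at most one additional intervention, giving $f(M)+1$ in total. This exclusion trick is exactly what lets us avoid \Identifybidir, whose two-sided tests $u\indep v\mid\doo(u)$ and $u\indep v\mid\doo(v)$ would otherwise cost up to $\Delta+1$ interventions and reintroduce the $\Delta$ factor.

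For correctness I would argue that the partition $\phi$ covers all of $V$, so $\out{}{u}$ is computed for every node $u$, and that by the guarantee behind the helper routines (Lemma~\ref{lem:naive_main}) \IdentifyNbr returns the true out-neighbors in the common MAG $\MMM$. The exclusion rule then recovers the correct type of every edge of $\PPP$, reconstructing $\MMM$ exactly; assigning $\MMM$ to all of $\Copt_b$ is exact recovery because all entities in the cluster share that MAG. The only place any care is needed --- and the main obstacle I would flag --- is verifying that the two out-neighbor sets genuinely determine the edge type, i.e.\ that for an adjacent pair the events $v\in\out{}{u}$, $u\in\out{}{v}$, and ``neither'' partition the three possibilities $u\rightarrow v$, $u\leftarrow v$, $u\leftrightarrow v$. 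This rests on the fact that intervening on $u$ severs precisely its incoming arrowheads, so an adjacent $v$ satisfies $u\notindep v\mid\doo(u)$ iff $u\rightarrow v$ (and symmetrically for $v$), which is the same observation that underlies \RecoverG and Lemma~\ref{lem:naive_main}. Everything else is routine bookkeeping on the intervention counts.
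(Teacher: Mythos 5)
Your proposal is correct and takes essentially the same approach as the paper: its Meta-Algorithm likewise partitions $V$ among the entities of each cluster via $\phi$ (so each entity receives at most one node when $|\Copt_b|\ge n$), has each entity run \IdentifyNbr only on its assigned node, and orients every PAG edge by exactly your exclusion rule ($u\rightarrow v$ if $v\in\out{i}{u}$, $u\leftarrow v$ else if $u\in\out{j}{v}$, and $u\leftrightarrow v$ otherwise), which is valid because all entities in a cluster share the same MAG and hence the same interventional CI outcomes. The $f(M)+1$ bound then follows just as you argue.
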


Combining the best of the guarantees obtained using Algorithms~\GenRecovery and~\BoundedDegree, and the Corollary~\ref{cor:cluster_to_graph} we have:
\begin{theorem}\label{thm:latents}
If MAGs $\MMM_1,\dots,\MMM_M$ satisfy $\alpha$-clustering property with true clusters $\Copt_1,\dots,\Copt_k$ such that $\min_{b \in [k]} |\Copt_b| = \Omega(n \log (M/\delta))$. Then, there is an algorithm that exactly learns all these MAGs with probability at least $1-\delta$. Every entity $i \in [M]$ uses  
$\min \left\{ O(\Delta\log(M/\delta)/\alpha), O(\log(M/\delta)/\alpha + k^2) \right\}$ many atomic interventions. 
\end{theorem}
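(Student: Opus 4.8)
The plan is to establish the two terms of the minimum via two separate algorithms and then run whichever is cheaper for the given instance. For the first term $O(\Delta\log(M/\delta)/\alpha)$, I would invoke Algorithm~\BoundedDegree, the $\alpha$-clustering analogue of Algorithm~\abBoundedDegree: a uniform sample $S$ of $O(\log(M/\delta)/\alpha)$ nodes hits $\diff(\MMM_i,\MMM_j)$ for every cross-cluster pair with probability $1-\delta$ (union bound over the $\binom{M}{2}$ pairs, using $d(\MMM_i,\MMM_j)\geq\alpha n$), and running \IdentifyNbr together with \Identifybidir on each $u\in S$ costs at most $\Delta+1$ interventions per node. This recovers $\NN{i}{u}$ for all sampled $u$, which separates the clusters; distributing the remaining per-node neighborhood queries across the entities of each recovered cluster—possible because every cluster has size $\Omega(n\log(M/\delta))\geq n$—then reconstructs each MAG exactly, for a total of $O(\Delta\log(M/\delta)/\alpha)$ interventions per entity.

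For the second term $O(\log(M/\delta)/\alpha + k^2)$, I would run Algorithm~\GenRecovery to recover the true clusters and then appeal to Corollary~\ref{cor:cluster_to_graph}. The sampling phase uses $|S|=O(\log(M/\delta)/\alpha)$ interventions (one $\doo(u)$ per $u\in S$ inside \IdentifyNbr), and the initial graph $\mathcal{P}$ links any two entities agreeing on out-neighbors and adjacencies over all of $S$. Correctness has two pieces. Every cross-cluster pair differing at some node through a directed-edge discrepancy is already separated in $\mathcal{P}$, since $S$ hits $\diff$ with high probability; the only survivors are pairs $i\in\Copt_b,\, j\in\Copt_c$ differing solely through $u\leftrightarrow v$ in $\MMM_i$ versus $u\leftarrow v$ in $\MMM_j$, and for such a surviving pair the offending $u$ must itself lie in $S$ (otherwise a differing node of $\diff$ that \emph{is} in $S$ would be a directed-edge discrepancy and would have broken the edge). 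Because all MAGs inside a cluster are identical, the two clusters form a complete bipartite graph in $\mathcal{P}$, so every cross pair carries an edge. Since $\min_b|\Copt_b|=\Omega(n\log(M/\delta))$ and $\pi$ maps each entity uniformly into the $n$ nodes, a coupon-style bound shows that with high probability some $i\in\Copt_b$ and $j\in\Copt_c$ satisfy $\pi(i)=\pi(j)=v$; intervening on $v$ (with $\doo(u)$ already available from sampling) then lets \Identifybidir distinguish $v\in\bi{i}{u}$ from $v\notin\bi{j}{u}$, detecting and deleting all cross edges between $\Copt_b$ and $\Copt_c$ in that component. At least one cluster pair is split per iteration, so the loop terminates after at most $\binom{k}{2}=O(k^2)$ iterations, each costing $O(1)$ interventions per entity, for a total of $O(\log(M/\delta)/\alpha + k^2)$. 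As $\min_b|\Copt_b|=\Omega(n\log(M/\delta))\geq n$, Corollary~\ref{cor:cluster_to_graph} applies with $f(M)=O(\log(M/\delta)/\alpha + k^2)$ and converts the recovered clustering into exact MAGs at one extra intervention per entity, giving the second term.

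Finally, I would combine the two by selecting the cheaper algorithm: the maximum undirected degree $\Delta$ is visible from the PAG skeletons, so one compares $\Delta\log(M/\delta)/\alpha$ against $\log(M/\delta)/\alpha + k^2$ (bounding $k\leq M$, or reading $k$ off cheaply) and runs the smaller one. Each succeeds with probability $1-\delta$ and recovers all MAGs exactly, so the overall intervention count is the stated minimum.

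The main obstacle I anticipate is the correctness of the refinement loop in \GenRecovery, namely the probabilistic guarantee that every component harboring two true clusters is detected and split. The delicacy is twofold: \IdentifyNbr only certifies directed out-edges, so bidirected-versus-arrowhead discrepancies are invisible to the initial $\mathcal{P}$ and must be exposed by the random assignment $\pi$; and one must argue, uniformly over all iterations and all $\binom{k}{2}$ cluster pairs, that the required collision $\pi(i)=\pi(j)=v$ on the precise offending node $v$ occurs. This is exactly where the large-cluster hypothesis $\min_b|\Copt_b|=\Omega(n\log(M/\delta))$ is consumed, and the union bound over iterations and pairs must be balanced against the per-iteration failure probability $e^{-\Omega(|\Copt_b|/n)}=(\delta/M)^{\Omega(1)}$ to keep the total error below $\delta$.
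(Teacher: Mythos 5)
Your proposal is correct and follows essentially the same route as the paper: Algorithm~\BoundedDegree plus the Meta-Algorithm (Corollary~\ref{cor:cluster_to_graph}) yields the $O(\Delta\log(M/\delta)/\alpha)$ term, and Algorithm~\GenRecovery---with the identical peeling argument in which a surviving cross-cluster edge must hide a bidirected-versus-arrowhead discrepancy at some $u\in S$, exposed by a random collision $\pi(i)=\pi(j)=v$ over at most $k^2$ iterations (the paper's Lemma~\ref{lem:peeling} and Lemma~\ref{lem:latents_app})---plus the same corollary yields the $O(\log(M/\delta)/\alpha+k^2)$ term. Your final ``run the cheaper one'' step is at the same level of rigor as the paper's ``combining the best of the guarantees,'' though since $k$ is not known a priori the clean fix is to interleave the interventions of the two algorithms and stop when either completes, which attains the stated minimum up to a factor of two.
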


\subsection{Lower Bound on the Number of Interventions}\label{sec:lb}

We now give a lower bound on the number of atomic interventions needed for every algorithm that recovers the true clusters on the MAGs $\MMM_1, \MMM_2, \cdots \MMM_M$. Since a lower bound under $\alpha$-clustering is also a lower bound under $(\alpha,\beta)$-clustering, we work with the $\alpha$-clustering property here. First, we show that to identify whether a given pair of entities $i,j$ belong to the same  true cluster or not, 
every (randomized or deterministic) algorithm must make $\Omega(1/\alpha)$ interventions for both $i$ and $j$. Missing details from this section are presented in Appendix~\ref{app:lb}.

Our main idea here is to use the famous Yao's minimax theorem~\citep{yao1977probabilistic} to get lower bounds on randomized algorithms. Yao's theorem states that an {\em average case} lower bound on a deterministic algorithm implies a {\em worst case} lower
bound on randomized algorithms. 

To show a lower bound using Yao's minimax theorem, we construct a distribution $\mu$ on MAG pairs and show that every deterministic algorithm requires $\Omega(1/\alpha)$ interventions for distinguishing a pair of MAGs drawn from $\mu$. 

\smallskip
\noindent \textbf{Outline of the Lower Bound.} Our distribution $\mu$ places a probability of $1/2$ for pairs of MAGs that have distance zero and a probability of $1/2$ equally distributed among all pairs of MAGs with distance equal to $\alpha n$. This ensures that both the events considered are equally likely, and we show that to distinguish them, with success probability at least $2/3$ (over the distribution $\mu$), every deterministic algorithm must make $\Omega(1/\alpha)$ interventions for both the MAGs. Then, we use Yao's theorem to translate this into a worst case lower bound for any randomized algorithm. In particular, this means that any  algorithm that is based on recovering the clusters to construct the MAGs will require $\Omega(1/\alpha)$ interventions for every entity in $[M]$. 

\smallskip
\noindent \textbf{Details.} For the lower bound, consider the case when  $M=2$, and assuming causal sufficiency, where we wish to identify the clusters of two MAGs $\MMM_1, \MMM_2$. We observe that a lower bound on the number of interventions required for every entity in the case of identifying two clusters will also extend for the general case of identifying $k$ clusters with latents. 

Consider two MAGs $\MMM_1, \MMM_2$ on a node set $V$, with the promise that either $d(\MMM_1, \MMM_2)=0$  or  $d(\MMM_1, \MMM_2)= \alpha n$, and the goal is to identify which case holds. Note that in the first case the two entities are in the same cluster ($k=1$), and in the second case they are in different clusters ($k=2$).

Let $V = \{v_1,\dots,v_n\}$ be the set of observable nodes of these MAGs. Consider the node difference set of the MAGs $\MMM_1, \MMM_2$ given by $\diff(\MMM_1, \MMM_2)$ and let $e \in \{0,1\}^n$ denote its characteristic vector where $l$th coordinate of $e$ is $1$ iff $v_l \in \diff(\MMM_1, \MMM_2)$. We can observe that, under the above promise, $e$ is either $0^n$ or has exactly $\alpha n$ ones. Therefore, we have reduced our problem to that of finding whether the vector $e$ contains all zeros or not. Using this reduction, we focus on establishing a lower bound for this modified problem.

We want to check if a given $n$-dimensional binary vector is a zero vector, i.e., $0^n$ or not, with a promise that if it is not a zero vector, then, it contains $\alpha n$ coordinates with $1$ in them. Using Lemma~\ref{lem:veclb}, we show that $\Omega\left(\frac{1}{\alpha}\right)$ queries to co-ordinates of $x$ are required, for any randomized or deterministic algorithm to distinguish between these two cases.

\begin{lemma}\label{lem:veclb}
Suppose we are given a vector $x \in \{0, 1\}^n$ with the promise that either $x = 0^n$ or $x$ contains $\alpha n$ ones. In order to distinguish these two cases with probability more than $2/3$, every randomized or deterministic algorithm must make at least $\Omega(1/\alpha)$ queries to the coordinates of the vector $x$.
\end{lemma}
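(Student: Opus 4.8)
The plan is to prove this by Yao's minimax principle \citep{yao1977probabilistic}, exactly as the preceding discussion sets up: it suffices to exhibit a distribution $\mu$ over inputs $x \in \{0,1\}^n$ and show that \emph{every deterministic} algorithm making fewer than $c/\alpha$ coordinate queries fails to decide the promise problem with probability more than $2/3$ over $\mu$. I would take the natural hard distribution: with probability $1/2$ set $x = 0^n$, and with probability $1/2$ plant the ones by choosing a subset of exactly $\alpha n$ coordinates uniformly at random and setting those to $1$. Both promise cases then occur with probability $1/2$, so a correct decision must in effect detect whether the planted set was ever queried.

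Next I would analyze an arbitrary deterministic decision tree $T$ of depth $t$. The key observation is that on the input $0^n$ all answers are $0$, so $T$ follows a fixed ``all-zero'' root-to-leaf path querying a fixed set of coordinates $Q = \{i_1,\dots,i_t\}$ and outputs a fixed label at the leaf $\ell_0$. If that label is ``nonzero'', then $T$ already errs on $0^n$, contributing error at least $1/2$ under $\mu$, so $T$ cannot succeed with probability $>2/3$. If instead the label is ``zero'', then consider any planted input whose support is disjoint from $Q$: on such an input $T$ makes the identical sequence of queries, sees only zeros, reaches $\ell_0$, and wrongly answers ``zero''. The crucial point is that this argument never uses the adaptive structure off the all-zero branch, so it applies to fully adaptive trees. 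The $\mu$-error is therefore at least $\tfrac12 \, q$, where $q = \Pr[\text{planted set avoids } Q] = \binom{n-t}{\alpha n}/\binom{n}{\alpha n}$.

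It then remains to lower bound $q$ and fix constants. Writing $q = \prod_{j=0}^{t-1}\bigl(1 - \tfrac{\alpha n}{n-j}\bigr)$ and using $n - j \ge n - t \ge n/2$ for $t \le n/2$, each factor is at least $1 - 2\alpha$, so $q \ge (1-2\alpha)^t \ge 1 - 2\alpha t$ by Bernoulli's inequality. Hence whenever $t < \tfrac{1}{6\alpha}$ we get $q > 2/3$, and the $\mu$-error exceeds $1/3$, i.e.\ the success probability is at most $2/3$. (The condition $t \le n/2$ is harmless since the promise $\alpha n \ge 1$ forces $\tfrac{1}{6\alpha} \le n/6$.) Thus any deterministic algorithm succeeding with probability more than $2/3$ over $\mu$ must use $\Omega(1/\alpha)$ queries, and Yao's principle lifts this to $\Omega(1/\alpha)$ for every randomized algorithm on the worst case. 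I expect the main obstacle to be the bookkeeping in the last step: getting a clean constant in the bound on $q$ so that the error provably clears the $1/3$ threshold (rather than merely $1/4$), and arguing carefully that the decision-tree reduction depends only on the all-zero path so that adaptivity buys the algorithm nothing.
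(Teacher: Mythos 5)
Your proof is correct, and it shares the paper's skeleton: Yao's minimax principle applied to exactly the same hard distribution (mass $1/2$ on $0^n$, mass $1/2$ spread uniformly over vectors with exactly $\alpha n$ ones), with the error of any deterministic algorithm reduced to the probability $\tau = \binom{n-t}{\alpha n}/\binom{n}{\alpha n}$ that the planted support avoids the coordinates queried on the all-zeros execution, yielding success probability at most $1-\tau/2$. Where you genuinely depart from the paper is in the two places that carry the technical weight. First, the concluding estimate: the paper lower-bounds $\tau$ via entropy-based binomial coefficient bounds (the MacWilliams--Sloane inequalities) combined with a mean-value-theorem computation on the binary entropy function, whereas you write $\tau$ as the product $\prod_{j=0}^{t-1}\bigl(1-\tfrac{\alpha n}{n-j}\bigr)$ and invoke Bernoulli's inequality; your route is considerably more elementary, gives an explicit constant ($t<\tfrac{1}{6\alpha}$ forces $\tau>2/3$), and avoids any asymptotic entropy manipulation. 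The only care needed is that Bernoulli's inequality with $x=-2\alpha$ requires $\alpha\le 1/2$ in its standard form, but as you note the regime $\alpha>1/2$ is vacuous since then $1/(6\alpha)<1$ and a zero-query algorithm trivially errs with probability $1/2$. Second, you make explicit the adaptivity argument that the paper glosses over: the paper's identity $\Pr[Q(x)\cap L(x)=\emptyset\mid x\neq 0^n]=\tau$ treats the query set $Q(x)$ of an adaptive algorithm as if it were fixed, and this is only legitimate because any input whose support avoids the all-zeros path's queries drives the decision tree down that very path to the same leaf; your proof states and uses precisely this fact, which makes the treatment of adaptive algorithms airtight rather than implicit.
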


For the above problem of identifying whether a vector is zero or not, we can replace each coordinate query by an intervention on the corresponding node for the two entities (due to the equivalency between the two as explained above). Therefore, from Lemma~\ref{lem:veclb}, we have the following corollary about recovering the clusters. 
\begin{corollary}\label{cor:lb}
Suppose we are given two MAGs $\MMM_1$ and $\MMM_2$ corresponding to two entities, with the promise that either $d(\MMM_1, \MMM_2) = 0$ or $d(\MMM_1, \MMM_2) = \alpha n$. In order to distinguish these two cases with probability at least $2/3$, every  (randomized or deterministic) algorithm must make at least $\Omega(1/\alpha)$ interventions on both the entities.
\end{corollary}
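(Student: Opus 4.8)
The plan is to prove Lemma~\ref{lem:veclb} directly, since the final statement (Corollary~\ref{cor:lb}) follows from it immediately by the coordinate-query-to-intervention correspondence already established in the surrounding text. As the excerpt explains, distinguishing whether $d(\MMM_1,\MMM_2)=0$ or $d(\MMM_1,\MMM_2)=\alpha n$ is equivalent to deciding whether the characteristic vector $e$ of $\diff(\MMM_1,\MMM_2)$ is $0^n$ or has exactly $\alpha n$ ones, and the text states that each coordinate query can be simulated by intervening on the corresponding node in both entities. So once Lemma~\ref{lem:veclb} gives an $\Omega(1/\alpha)$ query lower bound, an algorithm making $o(1/\alpha)$ interventions on both entities would yield an $o(1/\alpha)$-query algorithm for the vector problem, a contradiction. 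I would state this reduction explicitly as the first step, being careful that one intervention on a node $v$ (performed on both entities) reveals exactly whether $v \in \diff(\MMM_1,\MMM_2)$, i.e.\ one bit $e_v$, so queries and interventions are in one-to-one correspondence.

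The core of the work is the query lower bound in Lemma~\ref{lem:veclb}, which I would prove via Yao's minimax principle as the outline suggests. First I would fix the hard distribution $\mu$: with probability $1/2$ output $x=0^n$, and with probability $1/2$ output a vector drawn uniformly from the $\binom{n}{\alpha n}$ strings with exactly $\alpha n$ ones. By Yao's principle it suffices to show that every \emph{deterministic} algorithm making at most $q = c/\alpha$ queries (for a suitably small constant $c$) errs with probability greater than $1/3$ over $\mu$. A deterministic $q$-query algorithm reading coordinates of $x$ can be modelled as a decision tree of depth $q$; on the all-zeros input it follows one fixed root-to-leaf path reading $q$ fixed coordinates (the path is fixed because every read returns $0$). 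The key observation is that the algorithm can only distinguish the two cases if, on the planted input, at least one of its queries lands on a coordinate that is set to $1$.

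The main obstacle, and the step I would spend the most care on, is bounding the probability that a depth-$q$ decision tree queries a $1$-coordinate under the planted distribution. I would argue that on any input the algorithm that has so far seen only zeros is forced down the same path as on $0^n$, so with $q$ queries it inspects a fixed set of at most $q$ coordinates unless and until it hits a one. The probability that a uniformly random $\alpha n$-subset of $[n]$ avoids any fixed set of $q$ coordinates is $\binom{n-q}{\alpha n}/\binom{n}{\alpha n}$, which for $q \le c/\alpha$ is at least a constant bounded away from $0$ (roughly $(1-\alpha)^q \ge (1-\alpha)^{c/\alpha} \ge e^{-2c}$ for small $\alpha$). Hence with constant probability the planted input is indistinguishable from $0^n$, and since the two cases have equal prior mass $1/2$, the algorithm's error probability exceeds $1/3$ once $c$ is chosen small enough. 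I would then invoke Yao to transfer this to randomized algorithms, and finally apply the reduction to conclude the $\Omega(1/\alpha)$ intervention bound on both entities, completing Corollary~\ref{cor:lb}. A minor point to verify is that the same lower bound holds whether the algorithm is adaptive or non-adaptive, which the decision-tree formulation handles uniformly since adaptivity is exactly what the tree branching captures.
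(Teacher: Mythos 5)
Your proposal is correct and follows essentially the same route as the paper: the identical reduction replacing coordinate queries of the difference vector $e$ by interventions, followed by Yao's minimax principle with the same hard distribution ($1/2$ mass on $0^n$, $1/2$ uniform over vectors with exactly $\alpha n$ ones), and the same key quantity $\tau = \binom{n-q}{\alpha n}/\binom{n}{\alpha n}$ controlling the error probability via the threshold $\tau \leq 2/3$. The only differences are presentational: you bound $\tau$ with the elementary product estimate $\bigl(1-\tfrac{\alpha n}{n-q}\bigr)^{q} \geq e^{-O(c)}$ rather than the paper's binary-entropy and mean-value-theorem calculation, and you make explicit the decision-tree observation that an adaptive deterministic algorithm queries a fixed coordinate set along the all-zeros path, a point the paper's proof of Lemma~\ref{lem:veclb} uses implicitly when treating $Q(x)$ as fixed.
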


\noindent Using Corollary~\ref{cor:lb}, we obtain the following final result about recovering clusters under $\alpha$-clustering property:
\begin{theorem}
Suppose the underlying MAGs $\MMM_1,\dots,\MMM_M$ satisfy $\alpha$-clustering property. In order to recover the clusters with probability $2/3$, every (randomized or deterministic) algorithm requires $\Omega(1/\alpha)$ interventions for every entity in $[M]$.
\end{theorem}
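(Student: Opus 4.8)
The plan is to derive the theorem from Corollary~\ref{cor:lb} by a reduction that isolates a single entity. Fix an arbitrary target entity $i \in [M]$; I would exhibit a family of $M$-entity instances, all satisfying the $\alpha$-clustering property, on which any algorithm that recovers the clusters with probability $2/3$ must make $\Omega(1/\alpha)$ interventions on entity $i$. Choose a partner entity $j \neq i$ and designate the remaining $M-2$ entities as \emph{spectators}. I would assign to the spectators fixed MAGs that are pairwise at node-distance at least $\alpha n$ and also at node-distance at least $\alpha n$ from every candidate MAG used for $i$ and $j$; this forces each spectator into its own singleton cluster no matter what happens to the pair $(i,j)$. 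On the pair $(i,j)$ I plant the hard distribution $\mu$ underlying Corollary~\ref{cor:lb}: with probability $1/2$ we set $\MMM_i = \MMM_j$ (so $i,j$ share a cluster), and with probability $1/2$ we draw $\MMM_j$ so that $d(\MMM_i,\MMM_j) = \alpha n$ (so $i,j$ fall in different clusters), using Definitions~\ref{def:dist} and~\ref{def:alphaclustering}. By construction every such instance is a legal $\alpha$-clustering instance.

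The key step is to argue that recovering the clustering on this instance is \emph{exactly} the two-MAG distinguishing task of Corollary~\ref{cor:lb}, and that only interventions on $i$ and $j$ can help with it. Correctly outputting the partition forces the algorithm to decide whether $i$ and $j$ lie in the same cluster, i.e.\ whether $d(\MMM_i,\MMM_j)=0$ or $d(\MMM_i,\MMM_j)=\alpha n$. I would arrange the node-differences realizing the planted pair to lie on a set of nodes disjoint from where the fixed spectator MAGs differ from one another and from $i,j$; then, by Yao's principle, fixing any deterministic algorithm and averaging over $\mu$ with the spectators held fixed, the outcomes of all interventions on spectator entities are determined by their fixed MAGs and are therefore independent of the planted bit. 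Hence the only interventions carrying information about the bit are those on $i$ and $j$, and a correct algorithm must solve the distinguishing problem using these alone. Corollary~\ref{cor:lb} (itself proved from Lemma~\ref{lem:veclb} via Yao's minimax) then yields that $\Omega(1/\alpha)$ interventions on entity $i$ are necessary. Since $i$ was arbitrary, the bound holds for every entity in $[M]$, and the Yao argument already accounts for randomized algorithms.

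The main obstacle is the packing step: realizing the $M-2$ spectator MAGs together with the candidate MAGs for $i$ and $j$ so that all the required pairs are at node-distance at least $\alpha n$, while keeping the planted difference supported on nodes disjoint from the spectator differences. For $M=2$ no spectators are needed and the theorem is immediate from Corollary~\ref{cor:lb}; for larger $M$ one needs a code of MAGs with minimum node-distance $\alpha n$ of size at least $M$, which a standard volume / Gilbert--Varshamov argument supplies whenever $n$ is large enough relative to $\log M$ (for constant $\alpha<1$ the number of such MAGs is exponential in $n$). A secondary check is that planting the difference on a disjoint node block keeps each spectator at distance at least $\alpha n$ from both versions of $\MMM_i$ and $\MMM_j$, so that the instance remains a valid $\alpha$-clustering instance in both branches of $\mu$; verifying this is routine once the node blocks are chosen disjoint.
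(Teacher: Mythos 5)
Your proposal is correct and takes essentially the same route as the paper: the paper's proof of this theorem is exactly the reduction to Corollary~\ref{cor:lb} (the two-entity bound, itself obtained from Lemma~\ref{lem:veclb} via Yao's principle), with the extension from two entities to $M$ entities merely asserted rather than spelled out, which your spectator construction makes explicit. One simplification worth noting: the packing/Gilbert--Varshamov step you flag as the main obstacle can be dispensed with entirely by giving all $M-2$ spectators one common fixed MAG at node-distance at least $\alpha n$ from every candidate MAG for $i$ and $j$, so that the spectators form a single cluster in both branches of $\mu$ and no code of pairwise-far MAGs is needed.
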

\begin{proof}
From Corollary~\ref{cor:lb}, we have that to identify whether two MAGs belong to the same cluster or not, we have to make at least $\Omega(1/\alpha)$ interventions for every entity. Therefore, to recover all the clusters, we have to make at least $\Omega(1/\alpha)$ many interventions for every entity $i \in [M]$.
\end{proof}
%

\section{Experimental Evaluation} \label{sec:expts}

In this section, we provide an evaluation of our approaches on data generated from real and synthetic causal networks for learning MAGs satisfying $(\alpha, \beta)$-clustering property. We defer additional details, results, and evaluation for $\alpha$-clustering to Appendix~\ref{app:expts}. 

\smallskip
\noindent\textbf{Causal Networks.} We consider the following real-world Bayesian networks from the \href{https://www.bnlearn.com/bnrepository/}{\em Bayesian Network Repository} which cover a wide variety of domains: \textit{Asia} (Lung cancer) (8 nodes, 8 edges), \textit{Earthquake} (5 nodes, 4 edges), \textit{Sachs} (Protein networks) (11 nodes, 17 edges), and \textit{Survey} (6 nodes, 6 edges). For the synthetic data, we use Erd\"os-R\'enyi random graphs (10 nodes). {We use the term ``causal network'' to refer to these ground-truth Bayesian networks.}

\smallskip
\noindent\textbf{Setup}. We used a personal Apple Macbook Pro laptop with 16GB RAM and Intel i5 processor for conducting all our experiments. We use the FCI algorithm implemented in~\citep{pcalg}. For every causal network, each experiment took less than 10 minutes to finish all the 10 runs. 

\smallskip
\noindent\textbf{(Synthetic) Data Generation}. We use following process for each of the five considered causal network (\textit{Asia}, \textit{Earthquake}, \textit{Sachs}, \textit{Survey}, and \textit{Erd\H{o}s-Renyi}). We construct causal MAGs for $M$ entities distributed among the clusters $\Copt_1, \Copt_2, \cdots \Copt_k$ equally, i.e., $|\Copt_i| = M/k$ for all $i \in [k]$. In our experiments, we set $k=2$ (i.e., two clusters), and start with $k=2$ DAGs that are sufficiently far apart. To do so, we create two copies of the original causal network $\DDD$, and denote the DAG copies by $\DDD_1$ and $\DDD_2$. For each of the DAGs $\DDD_1$ and $\DDD_2$, we select a certain number of pairs of nodes randomly, and include a latent variable between them, that has a causal edge to both the nodes. In our experiments, we used 2 latents per DAG. This results in two new DAGs $\DDD'_1$ and $\DDD'_2$. To ensure $\alpha n$ node distance between clusters, we modify $\DDD'_2$ using random changes until the two MAGs corresponding to the DAGs $\DDD'_1$ and $\DDD'_2$ are separated by a distance of $\alpha n$. These two MAGs, denoted by $\MMMdom_1$ and $\MMMdom_2$ form the dominant MAG for each of the two clusters. 

Then, we create $(1-\gamma) M/k = (1-\gamma) M/2$ copies of the dominant MAG and assign it to distinct entities in each cluster. Consider cluster $\Copt_1$ with dominant MAG $\MMMdom_1$, and corresponding DAG $\DDD'_1$. Note that each cluster has $M/k = M/2$ entities. For the remaining entities in $\Copt_1$, we start with $\DDD_1$ and include 2 latent variables between randomly selected pairs of nodes. Then, we repeat the previous procedure, of performing a series of random insertions or deletions of edges to the DAG until the distance between the corresponding MAG and $\MMMdom_1$ increases to $\beta n$. We follow the same procedure for cluster $\Copt_2$ with dominant MAG $\MMMdom_2$. Note that in this construction different entities could differ both in latents and their observable graphs. This construction ensures the entities satisfy $(\alpha,\beta)$-clustering property. As an example, see Figure~\ref{fig:earthquake} containing two dominant MAGs of the Causal Network \textit{Earthquake}.

\begin{figure}[!h]
    \centering
    \begin{minipage}{0.6\textwidth}
    \includegraphics[scale=0.9]{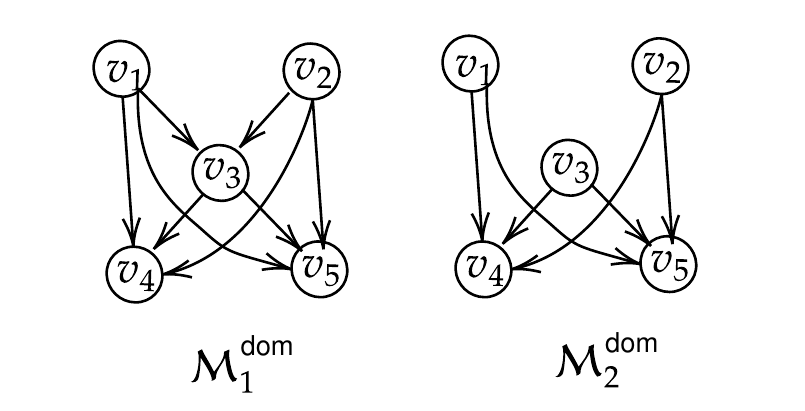}
    \end{minipage}
    \begin{minipage}{0.34\textwidth}
    \caption{Dominant MAGs of the causal network \textit{Earthquake} constructed using the described procedure.}
    \label{fig:earthquake}
    \end{minipage}
\end{figure}

\smallskip
\noindent \textbf{Parameters}. We set number of entities $M = 40$, number of clusters $k=2$, $\alpha = 0.60, \beta = 0.20$, and dominant MAG parameter $\gamma = 0.90$ for both the clusters. For the synthetic data generated using Erd\"os-R\'enyi model, we use $n = 10$, probability of edge $0.3$. 

\smallskip
\noindent \textbf{Evaluation of Clustering.} First, we focus on recovering the clustering using Algorithm~\abBoundedDegree. As a baseline, we employ the well-studied FCI algorithm based on purely observational data~\citep{spirtes2000causation}. After recovering the PAGs corresponding to the MAGs using FCI, we cluster them by constructing a similarity graph (similar to \abBoundedDegree) defined on the set of entities. For Algorithm~\abBoundedDegree, we first construct a sample $S$, and perform various interventions based on the set $S$ for every entity to obtain the clusters. We also implemented another baseline algorithm (\textsc{Greedy}) that uses interventions, based on a greedy idea that selects nodes to set $S$ in Algorithm~\abBoundedDegree by considering nodes in increasing order of their degree in the PAGs returned by FCI. We use this ordering to minimize the no. of interventions as we intervene on every node in $S$ and their neighbors.

\smallskip
\noindent \textbf{Construction of Clusters from FCI Output.} Our first focus is on recovering the true clustering using Algorithm~\abBoundedDegree. As a baseline, we employ the well-studied FCI algorithm~\citep{spirtes2000causation}. We know that FCI returns the Partial Ancestral Graph(PAG) corresponding to the causal MAG using only the observational data. After recovering the PAGs corresponding to the MAGs using FCI, we cluster them by constructing a weighted graph (similar to Algorithm~\abBoundedDegree) defined on the set of entities. For every pair of entities $i, j$, we calculate the number of nodes $n_{ij}$ that share the same neighborhood using the PAGs associated with them, and assign the weight of the edge as $n_{ij}$. This weight captures the similarity between two entities, and whether they belong to the same cluster or not. Now, we use minimum-$k$-cut algorithm to partition the set of entities into $k$ components or clusters. In Algorithm~\abBoundedDegree, we first construct a sample $S$, and perform various interventions based on the set $S$ for every entity to finally obtain the $k$ clusters.

\smallskip
\noindent \textbf{Metrics}. We use the following standard metrics for comparing the clustering performance: \textit{precision} (fraction of pairs of entities correctly placed in a cluster together to the total number of pairs placed in a cluster together), \textit{recall} (fraction of pairs of entities correctly placed in a cluster together to the total number of pairs in the same ground truth clusters), and \textit{accuracy} (fraction of pairs of entities correctly placed or not placed in a cluster to the total number of pairs of entities).
\begin{table*}[!h]
\centering
\small
  \begin{tabular}{|l|l|l|l|l|l|l|c|}
    \toprule
      Causal &
      \multicolumn{3}{c|}{FCI} &
      \multicolumn{3}{c|}{\abBoundedDegree(Alg.~\ref{alg:boundlatents})} &
      \multicolumn{1}{c|}{Maximum} \\
     Network &  {Precision} & {Recall} & {Accuracy} & {Precision} & {Recall} & {Accuracy} & {\# Interventions}\\
      \midrule
    \textit{Earthquake} & $0.57\pm0.18$ & $0.94\pm 0.013$ &$0.58\pm0.18 $  &$0.78\pm0.24 $  & $0.92\pm0.03 $ & $0.77\pm0.23 $ & 4  \\
    \textit{Survey} & $0.62\pm0.21$ & $0.94\pm0.013$ & $0.62\pm 0.2$ & $0.64\pm 0.23$ & $0.97\pm 0.02$ & $0.63 \pm 0.23$ & 5\\
    \textit{Asia} & $0.57\pm 0.18$ & $0.94\pm0.013$ & $0.58\pm 0.18$ & $0.92\pm 0.14$ & $0.95\pm 0.03$ & $0.91 \pm 0.14$ & 5 \\
    \textit{Sachs} & $0.52 \pm 0.12$ & $0.94\pm 0.01$ & $0.52\pm 0.12$ & $0.89\pm 0.20$ & $0.96 \pm 0.02$& $0.88\pm 0.19$ &  6\\
    \textit{Erd\"os-R\'enyi} & $0.62\pm0.21$ & $0.94\pm0.02$ & $0.62\pm 0.21$ & $1.0\pm 0.00$ & $0.95\pm 0.02$ & $0.97\pm 0.013$ & 6 \\
    \bottomrule
  \end{tabular}
  \caption{In this table, we present the precision, recall and accuracy values obtained by our Algorithm~\abBoundedDegree and using FCI. Each cell includes the mean value along with the standard deviation computed over 10 runs.  The last column represents the maximum number of interventions per entity including both Algorithms~\abBoundedDegree and~\Recovery.}
  \label{table:alpha_beta}
\end{table*}

\smallskip
\noindent \textbf{Sample Set $S$ Size.} For Algorithm ~\abBoundedDegree, we use different sample sizes $S$ ranging from $1$ to $3$. In Figure~\ref{fig:int_vs_sample}, we plot the mean value of the maximum number of interventions per entity with change in sample set size. With increase in sample set size, our Algorithm \abBoundedDegree requires  more interventions (see Lemma~\ref{lem:latent_alpha_beta_appendix}) and we observe the same in Figure~\ref{fig:int_vs_sample}. We chose the smallest size $|S|=1$ in our experiments, as increasing the size will increase the number of interventions but did not lead to much improved clustering results. As a sample set of size $1$ roughly corresponds to around $3$ interventions (across all causal networks), we use that for results presented in Table~\ref{table:alpha_beta}.

\begin{figure}[H]
\centering
    \includegraphics[scale=0.65]{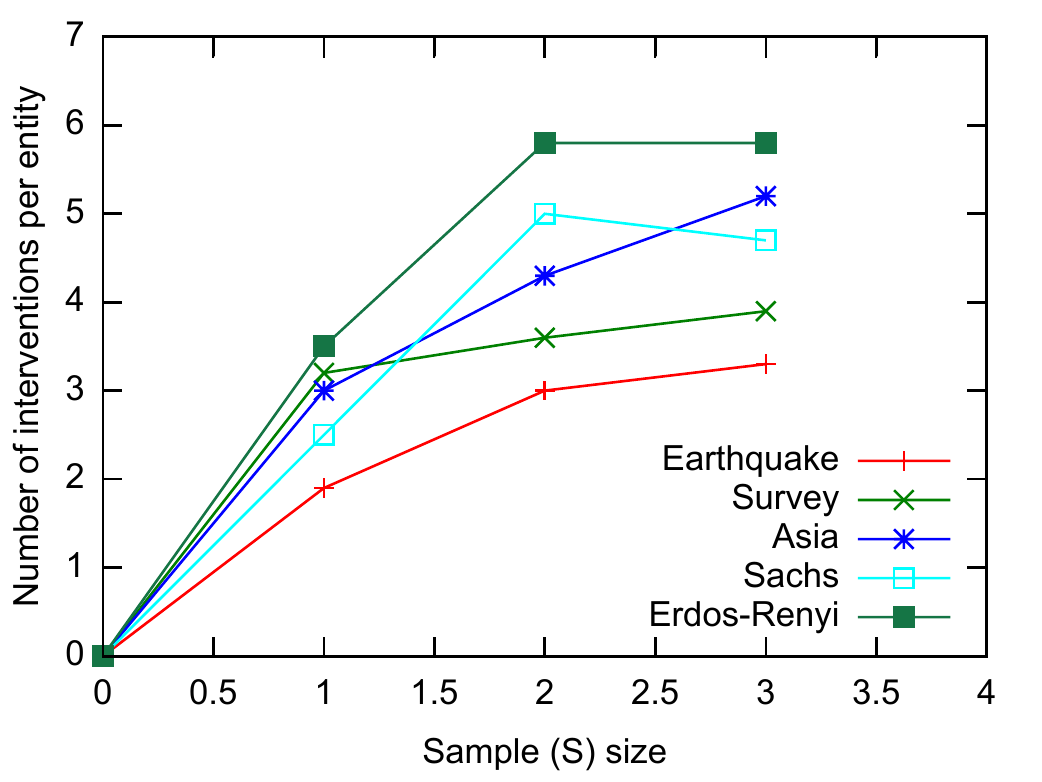}
    \caption{Sample size vs.\ maximum number of interventions per entity used by Algorithm~\abBoundedDegree.}
    \label{fig:int_vs_sample}
\end{figure}


\smallskip
\noindent \textbf{Results.} In Table~\ref{table:alpha_beta}, we compare Algorithm~\abBoundedDegree to FCI on the clustering results. For Algorithm~\abBoundedDegree, we use a sample $S$ of size $1$, and observe in Figure~\ref{fig:int_vs_sample}, that this corresponds to about $3$ interventions per entity. With increase in sample size, we observed that the results were either comparable or better. We observe that our approach leads to considerably better performance in terms of the accuracy metric with an average difference in mean accuracy of about $0.25$. This is because FCI recovers partial graphs, and clustering based on the partial information results in poor accuracy.  Because of the presence of a dominant MAG with in each cluster, we observe that the corresponding entities are always assigned to the same cluster, resulting in high recall for both \abBoundedDegree and FCI. We observe a higher value of precision for our algorithms, because FCI is unable to correctly classify the MAGs that are different from the dominating MAG. 

Algorithm~\abBoundedDegree outperforms the \textsc{Greedy} baseline for the same sample(S) size. For example, on the \textit{Earthquake} and \textit{Survey} causal networks, Algorithm~\abBoundedDegree obtains the mean accuracy values of $0.77$ and $0.63$ respectively, while  \textsc{Greedy} for the same number of interventions obtained an accuracy of only $0.487$ and $0.486$ respectively. For the remaining networks, the accuracy values of  \textsc{Greedy} are almost comparable to our Algorithm~\abBoundedDegree.

After clustering, we recover the dominant MAGs using Algorithm~\Recovery, and observe that the additional interventions needed are bounded by the maximum degree of the graphs (see Theorem~\ref{thm:dominantMAG}). This is represented in the last column in Table~\ref{table:alpha_beta}.  We observe that our \emph{collaborative} algorithms use fewer interventions for dominant MAG recovery compared to the number of nodes in each graph. For example, in the Erd\"os-R\'enyi setup, the number of nodes $n=10$, whereas we use at most $6$ interventions per entity. Thus, compared to the worst-case, cutting the number of interventions for each entity by $40\%$.

%

\section{Conclusion}
We introduce a new model for causal discovery to capture practical scenarios where are multiple entities with different causal structures. Under natural clustering assumption(s), we give efficient provable algorithms for causal learning with atomic interventions in this setup and demonstrate its empirical performance. Our model can be naturally extended to the setting where all interventions are non-adaptive, and we plan to study it as part of future work.  An interesting future direction would be to extend the interventional equivalence between DAGs studied in~\citep{hauser2012characterization, katz2019size} to the setting without the causal sufficiency assumption, similar to~\citep{jaber2020causal, kocaoglu2019characterization}, and exploit that for learning.

\newpage
\nocite{dataset}
\bibliography{references}
\bibliographystyle{plainnat}

\newpage
\appendix
\noindent\rule{\textwidth}{1pt}
\begin{center}
{\Large \sf Appendix}
\end{center}
\noindent\rule{\textwidth}{1pt}

\section{Missing Details from Sections~\ref{sec:intro} and~\ref{sec:model}} \label{app:model}

\begin{figure}[!ht]
 	\centering
	\begin{subfigure}[t]{0.22\textwidth}
		\includegraphics[scale=0.3]{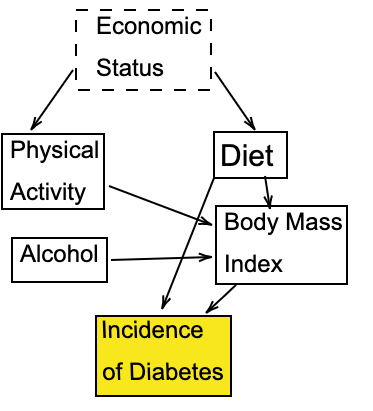}
		\caption{Causal DAG $\DDD_1$}
		\label{fig:g1}	\end{subfigure}
		\hspace*{.5in}
	\begin{subfigure}[t]{0.22\textwidth}
		\includegraphics[scale=0.3]{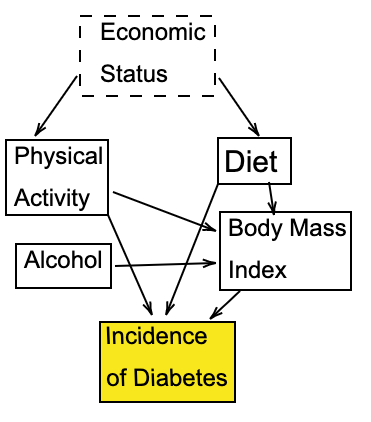}
		\caption{Causal DAG $\DDD_2$}
		\label{fig:g2}
	\end{subfigure}
			\caption{Two possible diabetes incidence graphs for an individual from~\citep{joffe2012causal} differing in the causal edge between \textit{Physical Activity} and~\textit{Incidence of Diabetes}. The observed variables include: \textit{Diet, Body Mass Index (BMI), Physical Activity, Alcohol (consumption), Incidence of Diabetes}, and the unobserved variable (latent) is \textit{Economic Status}. The variable \textit{Incidence of Diabetes} is observable but can't be intervened on, this is not an issue as it has no outgoing edges in the graphs. In this paper, we do not know the underlying causal graphs or which individuals share the same graph. As intervening on variables such as \textit{Diet, BMI} might need expensive and careful experimental organization, we ask the following question -- given a collection of independent entities (in this diabetes example, they can refer to a collection of people), can we collaboratively learn each entity's causal graphs while minimizing the number of interventions per entity?}
\end{figure}

\begin{figure}[!ht]
    \centering
    \includegraphics[scale=0.30]{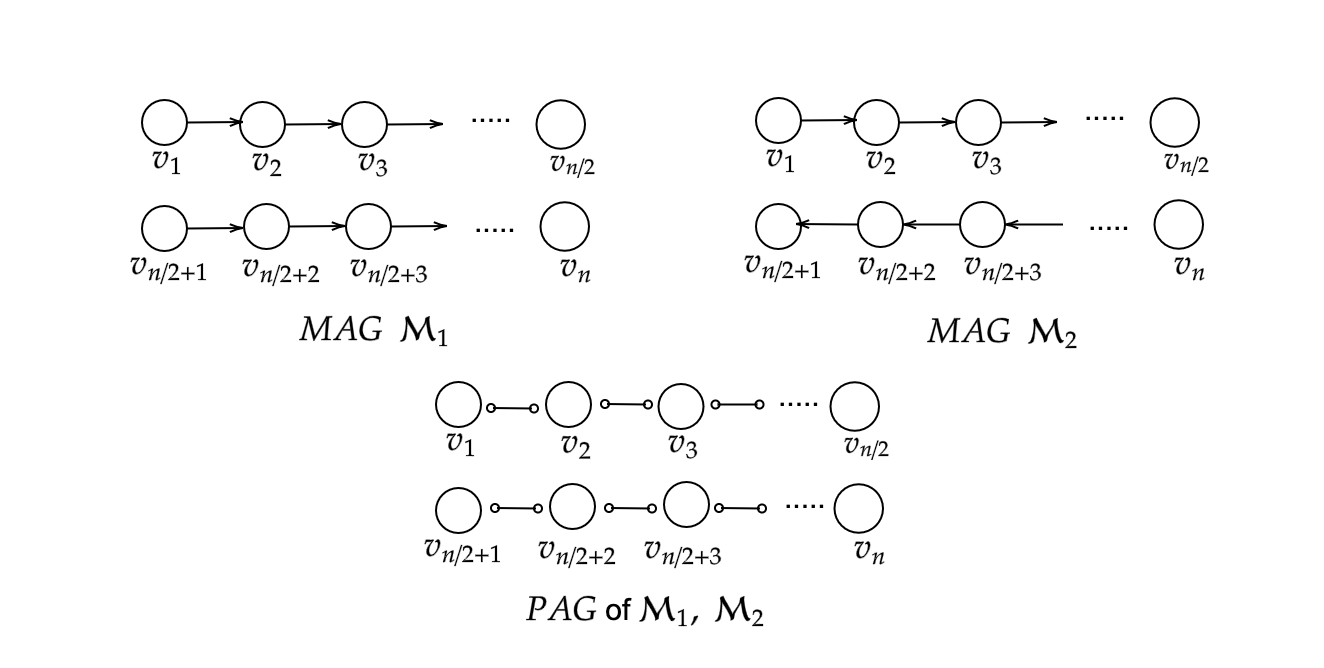}
    \caption{An example of MAGs $\MMM_1$ and $\MMM_2$ with large distance $d(\MMM_1,\MMM_2)$ but generating the same PAG.}
    \label{fig:pag_distance}
    \vspace*{1ex}
\end{figure}

\section{Helper Routines}\label{app:recovery_naive}

\begin{claim}\label{cl:intervention_ancestor}
 Suppose $\DDD_i$ is the DAG and $\MMM_i$ is the corresponding MAG for some entity $i \in [M]$. Then, $u \notindep v \mid \doo(u)$ iff $u$ is an ancestor of $v$ in the graph $\DDD_i$.
\end{claim}
\begin{proof}
We follow a proof similar to Lemma 1 in~\citep{neurips17}. If $u$ is an ancestor of $v$ in the graph $\DDD_i$ using the path $\pi_{uv}$, then, in the mutilated graph corresponding to $\doo(u)$, the path $\pi_{uv}$ remains intact. From d-separation~\citep{pearl}, $\pi_{uv}$ can only be blocked by conditioning on one of the nodes that are not end points. As we do not condition on any variables in the CI-test $u \indep v \mid \doo(u)$ and therefore do not block the path $\pi_{uv}$, we have $u \notindep v \mid \doo(u)$.

Now, we consider the other direction. If $u \notindep v \mid \doo(u)$, then, there is at least a path $\pi_{uv}$ between $u$ and $v$ that is not blocked. In the mutilated graph corresponding to the interventional distribution $\doo(u)$, the incoming edges into the node $u$ are removed. In the path $\pi_{uv}$, the edge incident on $u$ is an outgoing edge. If there is a collider on $\pi_{uv}$, we have blocked the path by not conditioning on it (from d-separation). As the path is not blocked, it implies that there is no collider on the path. Therefore, the path $\pi_{uv}$ is a directed path from $u$ to $v$. Hence, the claim.
\end{proof}

\begin{claim}\label{cl:outnbr}
Given an entity $i \in [M]$, and a node $u \in V$, Algorithm~\IdentifyNbr identifies all outgoing edges of $u$ in $\MMM_i$ ($\out{i}{u}$) correctly using an intervention on $u$.
\end{claim}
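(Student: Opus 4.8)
The plan is to reduce the correctness of Algorithm~\IdentifyNbr to the ancestral characterization in Claim~\ref{cl:intervention_ancestor} together with the soundness of the PAG marks produced by FCI. The first fact I would pin down is a purely graphical equivalence: for any $v$ that is adjacent to $u$ in $\MMM_i$, the edge is oriented $u \rightarrow v$ if and only if $u$ is an ancestor of $v$ in the underlying DAG $\DDD_i$. This is immediate from the DAG-to-MAG construction recalled in the model section: among adjacent vertices the edge is $u \rightarrow v$ exactly when $u$ is an ancestor of $v$, $u \leftarrow v$ exactly when $v$ is an ancestor of $u$, and $u \leftrightarrow v$ otherwise; since $\DDD_i$ is acyclic at most one of the first two can hold, so the three cases are mutually exclusive and exhaustive. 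Combining this with Claim~\ref{cl:intervention_ancestor}, which states $u \notindep v \mid \doo(u)$ iff $u$ is an ancestor of $v$ in $\DDD_i$, yields the key test identity: for $v$ adjacent to $u$, the edge is $u \rightarrow v$ in $\MMM_i$ iff $u \notindep v \mid \doo(u)$.

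Next I would argue that the algorithm inspects exactly the right candidate set. The PAG skeleton equals the MAG skeleton, so $\D{i}(u)$ is precisely the set of neighbors of $u$ in $\MMM_i$, and I split these neighbors by the $\PPP_i$ endpoint mark at $u$. By soundness of FCI every non-circle mark is invariant across the Markov equivalence class and hence agrees with the true MAG $\MMM_i$. A neighbor with a definite \emph{tail} at $u$ must be a directed edge $u \rightarrow v$ (without selection bias a tail is always paired with an arrowhead), and these are placed in $\out{i}{u}$ at initialization. A neighbor with a definite \emph{arrowhead} at $u$ (the edges $u \leftarrow v$, $u \leftrightarrow v$, and $u \leftarrowcirc v$) can never be an outgoing edge of $u$, and the loop correctly skips all of these. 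The only remaining neighbors are those with a \emph{circle} at $u$, namely the patterns $u \circtailcirc v$ and $u \rightarrowcirc v$; for these $u \rightarrow v$ is possible but not forced, and these are exactly the two cases the loop tests.

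Finally I would combine the pieces. For each circle-at-$u$ candidate $v$, the algorithm adds $v$ to $\out{i}{u}$ iff $u \notindep v \mid \doo(u)$, which by the key test identity holds iff $u \rightarrow v$ in $\MMM_i$; hence no ambiguous edge is misclassified in either direction. Together with the correctly initialized definite out-edges and the correctly skipped definite non-out-edges, the returned set equals $\out{i}{u}$ exactly, and every CI-test invoked shares the single intervention $\doo(u)$, giving the claimed one-intervention bound.

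I expect the main obstacle to be the bookkeeping over PAG endpoint marks: one must verify that every possible directional form of an edge between $u$ and $v$ lands in exactly one of the three buckets (definite out-edge, definite non-out-edge, circle-at-$u$ candidate), and that this case split uses only the \emph{soundness} (invariance of non-circle marks) of FCI, not any completeness property. The ancestral equivalence itself is a short consequence of Claim~\ref{cl:intervention_ancestor} and the MAG construction and should pose no difficulty.
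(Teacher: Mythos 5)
Your proposal is correct and follows essentially the same route as the paper's proof: both reduce the test's correctness to Claim~\ref{cl:intervention_ancestor} (intervention detects ancestry) combined with the DAG-to-MAG orientation rule, and both justify the candidate set via the soundness of the PAG endpoint marks. You simply spell out the three-way case split over marks (definite tail, definite arrowhead, circle at $u$) that the paper's terser argument leaves implicit, which is a fair elaboration rather than a different approach.
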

\begin{proof}
We know that $\PPP_i = (V,\hat{E}_i)$ represents the partial ancestral graph of $\MMM_i$. We observe that any outgoing edge $(u, v)$ incident on a node $u$ in the PAG $\PPP_i$ can be of the form $u \circtailcirc v$ or $u \rightarrowcirc v$. Otherwise, we already know that the edge is not an outgoing edge from $u$. We claim that we can identify an outgoing edge $(u, v)$ from a node $u$ correctly, if CI-test returns $u \notindep v \mid \doo(u)$ for every $v \in \DD{i}{u}$ satisfying the condition mentioned above. From Claim~\ref{cl:intervention_ancestor}, we have that $u \notindep v \mid \doo(u)$ iff  $u$ is an ancestor of $v$ in $\DDD_i$, which implies $u \rightarrow v$ is present in $\MMM_i$ and $v \in \out{i}{u}$. 
\end{proof}

\begin{claim}\label{cl:bidir}
Given an entity $i \in [M]$, and a node $u \in V$, Algorithm~\Identifybidir identifies all bidirected edges incident on $u$ in $\MMM_i$ ($\bi{i}{u}$) correctly using atomic interventions on all nodes in $\DD{i}{u}$.
\end{claim}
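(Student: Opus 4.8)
The plan is to reduce the correctness of \Identifybidir entirely to the ancestry characterization already established in Claim~\ref{cl:intervention_ancestor}, combined with the orientation rule used to construct the MAG $\MMM_i$ from the DAG $\DDD_i$. Recall that for adjacent nodes $u,v$ the edge in $\MMM_i$ is $u \rightarrow v$ exactly when $u$ is an ancestor of $v$ in $\DDD_i$, $u \leftarrow v$ exactly when $v$ is an ancestor of $u$, and $u \leftrightarrow v$ precisely in the remaining case where neither is an ancestor of the other. Since Claim~\ref{cl:intervention_ancestor} gives $u \notindep v \mid \doo(u)$ iff $u$ is an ancestor of $v$ (and, by symmetry of independence, $u \notindep v \mid \doo(v)$ iff $v$ is an ancestor of $u$), I would first establish the key equivalence: for adjacent $u,v$, the edge $u \leftrightarrow v$ is present in $\MMM_i$ if and only if both $u \indep v \mid \doo(u)$ and $u \indep v \mid \doo(v)$ hold. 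Here $u \indep v \mid \doo(u)$ certifies that $u$ is not an ancestor of $v$ (ruling out $u \rightarrow v$), and $u \indep v \mid \doo(v)$ certifies that $v$ is not an ancestor of $u$ (ruling out $u \leftarrow v$); as the bidirected edge is the only remaining option for adjacent nodes, both tests pass exactly for $u \leftrightarrow v$.

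Next I would argue that the loop in \Identifybidir inspects exactly the right candidate set and classifies each candidate correctly, addressing soundness and completeness through the invariance of PAG marks. The skeleton of $\PPP_i$ coincides with that of $\MMM_i$, so the candidates $v \in \DD{i}{u}$ are precisely the MAG-neighbors of $u$. An edge whose PAG marks are already fully determined as $u \leftrightarrow v$ is genuinely bidirected, since an invariant arrowhead appears in every MAG of the equivalence class including $\MMM_i$; these are added at initialization. An edge displaying an (invariant) tail at $u$ or at $v$ is a determined directed edge and is correctly skipped. The only remaining edges---those of the form $u \circtailcirc v$, $u \leftarrowcirc v$, or $u \rightarrowcirc v$, i.e.\ carrying at least one circle endpoint---are exactly the ones the loop processes, and for each, the key equivalence above shows the two CI-tests add $v$ to $\bi{i}{u}$ iff $u \leftrightarrow v$ in $\MMM_i$. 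For completeness I would note that a true bidirected edge can never exhibit an invariant tail in $\PPP_i$ (a tail would force a directed orientation in every equivalent MAG, contradicting the arrowheads of $u \leftrightarrow v$), so every bidirected edge falls into either the initialization case or a circle-endpoint case and is therefore recovered.

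The step deserving the most care is verifying that this case analysis over PAG edge types is exhaustive and that no determined directed edge ever passes both tests while no bidirected edge is excluded as a determined edge. This hinges on the acyclicity of MAGs: if $\MMM_i$ has $u \leftarrow v$, then $v$ is an ancestor of $u$, so $u$ cannot also be an ancestor of $v$, which guarantees the test $u \indep v \mid \doo(v)$ fails and the directed edge is correctly rejected; the symmetric argument handles $u \rightarrow v$ via the test $u \indep v \mid \doo(u)$. Finally, the intervention count is immediate: the algorithm uses the single atomic intervention $\doo(u)$ together with one atomic intervention $\doo(v)$ for each neighbor $v \in \DD{i}{u}$, so all interventions are atomic and performed on $u$ and on the nodes of $\DD{i}{u}$, matching the claimed bound.
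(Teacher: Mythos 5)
Your proof is correct and follows essentially the same route as the paper's: both reduce the two CI-tests to ancestry via Claim~\ref{cl:intervention_ancestor}, invoke the MAG orientation rule to conclude that passing both tests characterizes $u \leftrightarrow v$ among adjacent pairs, and count one intervention on $u$ plus one per neighbor in $\DD{i}{u}$. Your explicit case analysis over PAG endpoint marks (invariant arrowheads at initialization, invariant tails correctly skipped, circles handled by the tests) is simply a more careful spelling-out of what the paper asserts in one line, namely that any bidirected edge of $\MMM_i$ appears in $\PPP_i$ either as $u \leftrightarrow v$ or with a circle endpoint.
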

\begin{proof}
We observe that any bi-directed edge $(u, v)$ incident on a node $u \in V$ in the PAG $\PPP_i$ can be of the form  $u \circtailcirc v$ or $u \leftarrowcirc v$ or $u \rightarrowcirc v$. Otherwise, we already know that the edge is not a bi-directed edge incident at $u$. In Algorithm~\Identifybidir, for every neighbor $v$ of $u$ in the PAG $\PPP_i$ satisfying the above condition, we check if $u \indep v \mid \doo(u)$ and $v \indep u \mid \doo(v)$ is satisfied. From Claim~\ref{cl:intervention_ancestor}, we know that if $u \notindep v \mid \doo(u)$, then $u$ is an ancestor of $v$ in $\DDD_i$ (similarly, $v$ is an ancestor of $u$ in $\DDD_i$ if $u \notindep v \mid \doo(v)$). So, if $u \indep v \mid \doo(u)$ and $u \indep v \mid \doo(v)$, then, $u$ is not an ancestor of $v$ or vice-versa, which implies $u \leftrightarrow v$ is present in $\MMM_i$, i.e., $v \in \bi{i}{u}$. As we perform an intervention for every neighbor of $u$ in $\UUU_i$, we have the claim.
\end{proof}

\begin{lemma}[Lemma~\ref{lem:naive_main} restated] \label{lem:naive}
Algorithm~\RecoverG recovers all edges of $\MMM_i$, for an entity $i \in [M]$ using $n$ atomic interventions.
\end{lemma}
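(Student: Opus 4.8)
The plan is to show that sweeping the two helper routines over every vertex reconstructs the entire edge set of $\MMM_i$, and then to argue that all the conditional independence tests these routines perform can be serviced by a single atomic intervention per vertex, for a total of $n$. I would begin by decomposing the incidence structure of $\MMM_i$ by edge type: every edge incident to a vertex $u$ is exactly one of $u \rightarrow v$, $u \leftarrow v$, or $u \leftrightarrow v$, i.e.\ a member of $\out{i}{u}$, $\inc{i}{u}$, or $\bi{i}{u}$. Since the skeleton of the PAG $\PPP_i$ returned by FCI coincides with the skeleton of $\MMM_i$, the candidate loop ranging over $\D{i}(u)$ in both routines visits every vertex adjacent to $u$ in $\MMM_i$, so no adjacency is overlooked.

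Next I would invoke the correctness of the per-vertex recovery, which is already established. By Claim~\ref{cl:outnbr}, a single call to \IdentifyNbr$(\PPP_i, u)$ returns exactly $\out{i}{u}$ (with tail at $u$ and arrowhead at $v$ fully determined), and by Claim~\ref{cl:bidir}, a single call to \Identifybidir$(\PPP_i, u)$ returns exactly $\bi{i}{u}$ (arrowheads at both endpoints). Running both for every $u \in V$ therefore recovers, with correct endpoint marks and with no spurious edges, all outgoing and all bidirected edges at each vertex. Incoming edges need no separate routine: an edge $u \leftarrow v$ is precisely the outgoing edge $v \rightarrow u$ at $v$, so it is discovered as $u \in \out{i}{v}$ when the sweep processes $v$. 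Thus each directed edge is recovered once, from its tail, and each bidirected edge from either endpoint; taking the union over all $u \in V$ reconstructs every edge of $\MMM_i$, which settles correctness.

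The step I expect to be the main obstacle is the intervention count, because read literally Claim~\ref{cl:bidir} states that \Identifybidir$(\PPP_i,u)$ ``uses interventions on all nodes in $\D{i}(u)$,'' which might suggest far more than $n$ interventions across the $n$ calls. The resolving observation is that an atomic intervention is accounted for by its target vertex, not by the number of CI-tests it supports: once the interventional distribution $\doo(w)$ is realized for a vertex $w$, arbitrarily many tests of the form $a \indep b \mid \doo(w)$ can be evaluated at no further intervention cost. Every CI-test appearing anywhere in \RecoverG --- namely $u \indep v \mid \doo(u)$ inside \IdentifyNbr and the pair $u \indep v \mid \doo(u)$, $u \indep v \mid \doo(v)$ inside \Identifybidir --- is conditioned on $\doo(w)$ for some $w \in V$. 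Hence performing the single atomic intervention $\doo(w)$ for each of the $n$ vertices $w \in V$ supplies every interventional distribution the algorithm ever queries, and no vertex is intervened on more than once. This gives the claimed bound of exactly $n$ atomic interventions, matching the $\Omega(n)$ lower bound of Proposition~\ref{prop:lb_naive}.
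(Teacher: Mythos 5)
Your proposal is correct and follows essentially the same route as the paper's proof: sweep \IdentifyNbr and \Identifybidir over every vertex, cite Claims~\ref{cl:outnbr} and~\ref{cl:bidir} for per-vertex correctness, and charge exactly one atomic intervention to each node of $V$ since every CI-test in the algorithm conditions on $\doo(w)$ for some $w \in V$. The paper states this more tersely, leaving implicit the accounting points you spell out (skeleton coverage via the PAG, incoming edges recovered from their tails, and reuse of each interventional distribution across many CI-tests), so your write-up is simply a more explicit version of the same argument.
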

\begin{proof}
Given an entity $i \in [M]$, we obtain the partial ancestral graph $\PPP_i$ from observational data. Using Algorithm~\RecoverG, we create interventions for every node $u \in V$.  For every node $u$, we correctly identify all the outgoing neighbors of $u$ using Algorithm~\IdentifyNbr  (Claim~\ref{cl:outnbr}) and all the bidirected edges using Algorithm~\Identifybidir (Claim~\ref{cl:bidir}). Therefore, we have recovered all edges of $\MMM_i$ using $n$ atomic interventions.
\end{proof}

\begin{proposition}\label{prop:lb_naive_appendix}[Proposition~\ref{prop:lb_naive} restated]
There exists a causal MAG $\MMM$ such that every adaptive or non-adaptive algorithm requires $n$ many {atomic} interventions to recover $\MMM$. 
\end{proposition}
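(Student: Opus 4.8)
The plan is to exhibit a single MAG $\MMM$ whose observational PAG reveals nothing about edge orientations, and then argue by an adversary/indistinguishability argument that every un-intervened node leaves an irreducible ambiguity. Concretely, I would take $n$ even and let $\MMM$ consist of $n/2$ vertex-disjoint bidirected edges, i.e.\ a perfect matching $v_1 \leftrightarrow v_2,\ v_3 \leftrightarrow v_4,\ \dots$, with no other edges. This is a valid MAG (ancestral and maximal, each pair being an isolated component). First I would check that running FCI on observational data yields a PAG consisting of $n/2$ disjoint edges of the form $\circtailcirc$: each matched pair is marginally dependent and cannot be separated by any conditioning set, while every non-matched pair is independent given the empty set; and since no vertex is adjacent to both endpoints of any edge, no v-structure or subsequent FCI orientation rule ever fires. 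Hence observational data alone fixes the skeleton but leaves every endpoint a circle.

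Next I would prove the key per-pair fact: to certify that the matched pair $\{u,u'\}$ carries $u \leftrightarrow u'$ (as opposed to $u \rightarrow u'$ or $u \leftarrow u'$), one must intervene on \emph{both} $u$ and $u'$. By Claim~\ref{cl:intervention_ancestor}, the only test that separates $u \leftrightarrow u'$ from $u \rightarrow u'$ is $\doo(u)$ (which gives $u \indep u' \mid \doo(u)$ in the former and $u \notindep u' \mid \doo(u)$ in the latter), and the only test separating $u \leftrightarrow u'$ from $u \leftarrow u'$ is $\doo(u')$. Crucially, because $\{u,u'\}$ is an isolated component, no observational conditioning test and no intervention $\doo(x)$ on a vertex $x\notin\{u,u'\}$ changes the joint law of $(u,u')$, so these two interventions are the only informative ones for this pair. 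Thus ruling out both alternative orientations forces interventions on $u$ and on $u'$.

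Then I would assemble the lower bound via an adversary argument that also covers adaptive algorithms. Fix the true MAG to be $\MMM$ and suppose an algorithm makes strictly fewer than $n$ atomic interventions on input $\MMM$; then some vertex $w$ (with matched partner $w'$) is never intervened on. Consider the alternative MAG $\MMM'$ identical to $\MMM$ except that the edge $w \leftrightarrow w'$ is replaced by $w \rightarrow w'$. By the previous paragraph, $\MMM$ and $\MMM'$ agree on the outcome of every CI-test except those of the form $w \indep w' \mid \doo(w)$, which the algorithm never performs. Hence the algorithm receives identical responses on $\MMM$ and $\MMM'$; since its (possibly adaptive) queries and final output are determined by the responses, it produces the same graph on both inputs, and therefore errs on at least one of the two distinct MAGs. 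Consequently any algorithm that correctly recovers the MAG on every input must use at least $n$ interventions when the input is $\MMM$, giving the claimed bound ($2\lfloor n/2\rfloor = \Omega(n)$ for general $n$).

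The main obstacle, and the step I would be most careful about, is the per-pair necessity claim: I must rule out that some combination of conditioning sets or interventions on the other components could indirectly reveal the orientation of an un-intervened pair. This is exactly where isolating the pairs into separate connected components pays off, since it guarantees that the marginal law of each pair is insensitive to everything happening elsewhere and to all conditioning, leaving $\doo(u)$ and $\doo(u')$ as the only discriminating experiments. A secondary point to nail down is that $\MMM$ and $\MMM'$ are genuinely observationally (Markov) equivalent, so that no purely observational preprocessing can shortcut an intervention.
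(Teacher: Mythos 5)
Your proof is correct and follows essentially the same route as the paper: both build a perfect-matching MAG and run an adversary/indistinguishability argument showing that a vertex $w$ left un-intervened makes the orientation of its matched edge (directed vs.\ bidirected) unresolvable, since only $\doo(w)$ separates those cases. Your all-bidirected base MAG with a single alternative $\MMM'$ per pair is a mild streamlining of the paper's construction (which uses a directed matching and three alternatives $\MMM_{i1},\MMM_{i2},\MMM_{i3}$ with a case split on whether the partner vertex is intervened on), but the key insight and overall structure are identical.
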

\begin{proof}
Suppose the set of nodes of an unknown MAG $\MMM$ is given by $V = \{ v_1, v_2, \cdots v_n \}$. We denote $\ALG$ by any adaptive or non-adaptive \emph{deterministic} algorithm that recovers $\MMM$ using the set of interventions $\SSS \subseteq V$. For the sake of contradiction, let $\ALG$ recover $\MMM$ correctly and $v_i$ be the vertex that has not been intervened on, i.e., $v_i \not\in \SSS$. 

\begin{figure}
    \centering
    \includegraphics[scale=0.80]{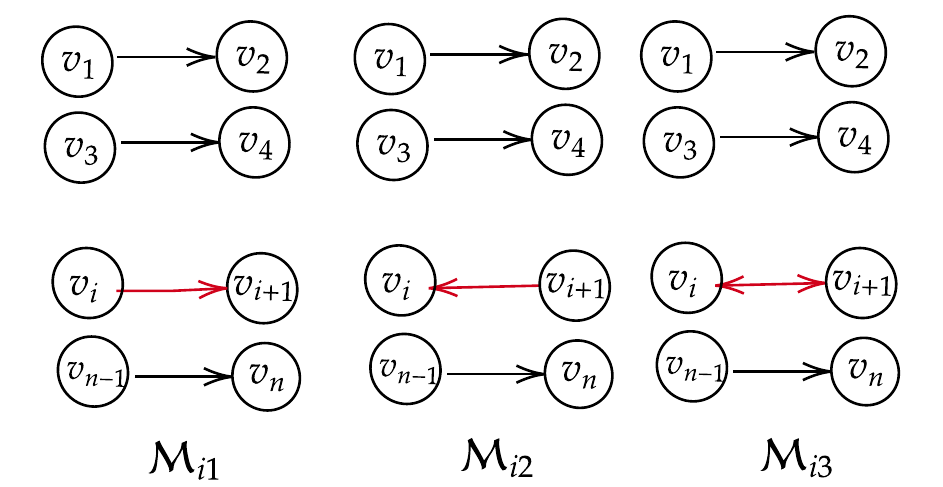}
    \caption{The MAGs used in the proof of Proposition~\ref{prop:lb_naive_appendix}.}
    \label{fig:lb}
\end{figure}
Construct the MAGs $\MMM_{i1}, \MMM_{i2}, \MMM_{i3}$ with edges $E_{i1} = \{ v_1 \rightarrow v_2, v_3 \rightarrow v_4,  \cdots, v_i\rightarrow v_{i+1}, \cdots, v_{n-1}\rightarrow v_n \}, E_{i2} = \{ v_1 \rightarrow v_2, v_3 \rightarrow v_4,  \cdots, v_i\leftarrow v_{i+1}, \cdots, v_{n-1}\rightarrow v_n \}, E_{i3} = \{ v_1 \rightarrow v_2, v_3 \rightarrow v_4,  \cdots, v_i\leftrightarrow v_{i+1}, \cdots, v_{n-1}\rightarrow v_n \}$ respectively (see Figure~\ref{fig:lb}). 

Upon termination, $\ALG$ will have recovered one of the MAGs $\MMM_{i1}$, $\MMM_{i2}$ or $\MMM_{i3}$. As $v_i \not\in \SSS$, we will argue that the true MAG is different from the recovered MAG. We consider two cases:
\begin{enumerate}
    \item If $v_{i+1} \in \SSS$. First, we observe that for all three MAGs $\MMM_{i1}, \MMM_{i2}$ and $\MMM_{i3}$, the CI-test $v_i \notindep v_{i+1}$. For MAGs $\MMM_{i1}$ and $\MMM_{i2}$, we have $v_{i} \indep v_{i+1} \mid \doo(v_{i+1})$ while $v_i \notindep v_{i+1} \mid \doo(v_{i+1})$ for MAG $\MMM_{i2}$. As these are the only possible CI-tests for vertices $v_{i}$ and $v_{i+1}$, the algorithm $\ALG$ cannot differentiate between $\MMM_{i1}$ and $\MMM_{i3}$. If $\ALG$ recovers $\MMM_{i1}$, then, we can set $\MMM$ to be $\MMM_{i3}$. This is a contradiction.
    \item If $v_{i+1} \not\in \SSS$. We observe that for all three MAGs $\MMM_{i1}, \MMM_{i2}$ and $\MMM_{i3}$, the CI-test $v_i \notindep v_{i+1}$, and it is the only possible CI-test involving vertices $v_{i}$ and $v_{i+1}$. Therefore, the algorithm $\ALG$ cannot differentiate between $\MMM_{i1}, \MMM_{i2}$  and $\MMM_{i3}$. If $\ALG$ recovers $\MMM_{i1}$, then, we can set $\MMM$ to be $\MMM_{i2}, \MMM_{i3}$ and similarly for other cases. This is a contradiction.
\end{enumerate}

Therefore, to recover $\MMM \in \{ \MMM_{i1}, \MMM_{i2}, \MMM_{i3}\}$ correctly, we must have $v_i \in \SSS$. As $i$ is chosen arbitrarily, and for every $i$ we can construct the MAGs $\MMM_{i1}, \MMM_{i2}, \MMM_{i3}$, such that any adaptive or non-adaptive deterministic algorithm requires interventions on every node.

We can extend the proof to include \emph{randomized} algorithms, with success probability strictly greater than $1/2$, by observing that when $v_i \not\in \SSS$, $\ALG$ has at least two MAGs among $\MMM_{i1}, \MMM_{i2}, \MMM_{i3}$ that it cannot differentiate (as argued using two cases above). 
\end{proof}

\section{Discovery under $(\alpha,\beta)$-Clustering}\label{app:alphabeta}

In this section, we present an algorithm that recovers the underlying clusters $\Copt_1, \Copt_2, \cdots, \Copt_k$ provided they satisfy $(\alpha, \beta)$-clustering property. After recovering the clusters, in Section~\ref{app:alphabeta_recovery}, we give an algorithm that recovers an approximate MAG for every entity with only few additional interventions.

Firstly, using the next lemma, we show that the threshold used by Algorithm~\abBoundedDegree correctly identifies whether two entities belong to the same true cluster or not. This implies that our algorithm \abBoundedDegree recovers the clusters with high probability. 

\begin{lemma}[Lemma~\ref{lem:latent_alpha_beta} Restated]
\label{lem:latent_alpha_beta_appendix}
If the underlying MAGs $\MMM_1,\dots,\MMM_M$ satisfy $(\alpha,\beta)$-clustering property with true clusters $\Copt_1,\dots,\Copt_k$ and have maximum undirected degree $\Delta$. Then, the Algorithm~\abBoundedDegree recovers the clusters $\Copt_1,\dots,\Copt_k$ with  probability at least $1-\delta$. Every entity $i \in [M]$ uses at most $4(\Delta+1) \log(M/\delta)/(\alpha-\beta)^2$ many atomic interventions.
\end{lemma}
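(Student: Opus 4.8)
The plan is to prove two things separately: correctness of the clustering (every cluster $\Copt_a$ is recovered exactly, with probability at least $1-\delta$) and the intervention bound. The intervention bound is immediate: the sampled set $S$ has size $|S| = 4\log(M/\delta)/(\alpha-\beta)^2$, and for each sampled node $u$ we run \IdentifyNbr (which intervenes once, on $u$) and \Identifybidir (which by Claim~\ref{cl:bidir} intervenes on every node in $\D{i}(u)$, i.e.\ at most $\Delta$ interventions since the PAG skeleton matches the MAG skeleton). The incoming neighbors in line~7 are computed from already-gathered information with no extra interventions. Hence each $u \in S$ costs at most $\Delta+1$ interventions, for a total of $(\Delta+1)|S| = 4(\Delta+1)\log(M/\delta)/(\alpha-\beta)^2$ per entity, as claimed.

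For correctness, the key observation is that for a fixed node $u$ the helper routines recover $\NN{i}{u}$ exactly (Claims~\ref{cl:outnbr} and~\ref{cl:bidir}, plus the fact that any remaining PAG-neighbor not outgoing and not bidirected must be incoming). So the indicator $\mathbf{1}\{\NN{i}{u} = \NN{j}{u}\}$ genuinely tests whether $u \notin \diff(\MMM_i,\MMM_j)$. The plan is to fix a pair $(i,j)$ and analyze $\Count(i,j) = \sum_{u\in S}\mathbf{1}\{\NN{i}{u}=\NN{j}{u}\}$. Since $S$ is drawn uniformly with replacement, each summand is an independent Bernoulli with success probability $p_{ij} = 1 - d(\MMM_i,\MMM_j)/n$. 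I would split into two cases. If $i,j$ lie in the same true cluster then $d(\MMM_i,\MMM_j)\le \beta n$, so $\E[\Count(i,j)/|S|] = p_{ij} \ge 1-\beta$; I want this to exceed the threshold $1-(\alpha+\beta)/2$, i.e.\ to show $\Count(i,j) \ge (1-(\alpha+\beta)/2)|S|$. The gap between the mean $1-\beta$ and the threshold $1-(\alpha+\beta)/2$ is $(\alpha-\beta)/2$. Symmetrically, if $i,j$ lie in different true clusters then $d(\MMM_i,\MMM_j)\ge \alpha n$, so $p_{ij}\le 1-\alpha$, and the same gap $(\alpha-\beta)/2$ separates the mean from the threshold on the other side.

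The technical heart is a Hoeffding (or Chernoff) bound: for each fixed pair, the probability that $\Count(i,j)/|S|$ deviates from its mean by more than $(\alpha-\beta)/2$ is at most $2\exp(-2((\alpha-\beta)/2)^2|S|) = 2\exp(-(\alpha-\beta)^2|S|/2)$. Plugging in $|S| = 4\log(M/\delta)/(\alpha-\beta)^2$ gives a failure probability of at most $2\exp(-2\log(M/\delta)) = 2(\delta/M)^2$ per pair (I may need to be slightly careful with the constant, and may instead use $|S|$ to drive this below $\delta/M^2$; the constant $4$ is chosen to make this work). A union bound over all $\binom{M}{2} \le M^2/2$ pairs then bounds the total failure probability by $\delta$. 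Conditioned on this good event, every same-cluster pair gets an edge in $\mathcal{P}$ and every cross-cluster pair does not, so the connected components of $\mathcal{P}$ are exactly the true clusters $\Copt_1,\dots,\Copt_k$ (here I use that each true cluster is internally a clique in $\mathcal{P}$, so it forms one connected component, and no cross-cluster edges exist to merge two of them).

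The main obstacle I anticipate is getting the constant in the concentration bound to line up cleanly with the threshold placement. The threshold $1-(\alpha+\beta)/2$ is the midpoint of the two relevant means $1-\beta$ and $1-\alpha$, which is exactly what makes a single deviation radius $(\alpha-\beta)/2$ work symmetrically for both cases — this symmetry is the reason the threshold is chosen as it is, and I would foreground that in the writeup. A minor subtlety worth checking is that $\Delta$ is defined as the maximum undirected degree of the \emph{MAGs}, and I must confirm the PAG neighborhood $\D{i}(u)$ used by \Identifybidir has the same size as the MAG neighborhood (they share a skeleton), so the per-node cost is genuinely bounded by $\Delta+1$ rather than something larger.
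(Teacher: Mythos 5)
Your proposal is correct and follows essentially the same route as the paper's proof: the same reduction of the indicator $\mathbf{1}\{\NN{i}{u}=\NN{j}{u}\}$ to membership in $\diff(\MMM_i,\MMM_j)$ via the helper-routine claims, the same Hoeffding bound with deviation radius $(\alpha-\beta)|S|/2$ around the means $(1-\beta)|S|$ and $(1-\alpha)|S|$ straddling the midpoint threshold, the same union bound over pairs, and the same $(\Delta+1)|S|$ intervention count. Your added observations (that each summand is an i.i.d.\ Bernoulli with success probability $1-d(\MMM_i,\MMM_j)/n$, and that the PAG and MAG share a skeleton so $|\D{i}(u)|\leq\Delta$) are correct refinements of points the paper leaves implicit.
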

\begin{proof}
Let $\Count(i, j) = \sum_{u \in S} \mathbf{1}\{ \NN{i}{u} = \NN{j}{u} \}$ for distinct entities $i,j$. If $i, j$ belong to the same true cluster $\Copt_t$ for some $t \in [k]$, we have :
$$\E[\Count(i, j)] = \E\left[\sum_{u \in S} \mathbf{1}\{ \NN{i}{u} = \NN{j}{u} \} \right] \geq (1-\beta)|S|$$

Using Hoeffding's inequality, with probability at least $1-\exp{(-\frac{\Lambda^2}{2|S|})}$
\[ \Count(i, j) \geq \E[\Count(i,j)] - \frac{\Lambda}{2}\]

If $i, j$ belong to different true clusters, then, we have : 

\[ \E[\Count(i, j)] = \E\left[\sum_{u \in S} \mathbf{1}\{ \NN{i}{u} = \NN{j}{u} \} \right] \leq (1-\alpha)|S|\]

Using Hoeffding's inequality, with probability at least $1-\exp{(-\frac{\Lambda^2}{2|S|})}$
\[ \Count(i, j) < \E[\Count(i,j)] + \frac{\Lambda}{2}\]

Set $\Lambda = |S|(\alpha - \beta)$ and $|S| = \frac{4\log M/\delta}{(\alpha-\beta)^2}$.

Using union bound for every pair of entities in $[M]$, we have with probability at least $1-\delta$:

\[ \text{if entities } i, j \in \Copt_t \mbox{ (belong to the same true cluster)}: \Count(i, j) \geq \left( 1- \frac{\alpha+\beta}{2} \right)|S| \  \text{ and }\]
\[ \text{if} \ \text{ entities } i, j \not\in \Copt_b  \  \forall b \in [k] \mbox{ (do not belong to the same true cluster)}: \Count(i, j) < \left( 1- \frac{\alpha+\beta}{2} \right)|S|\]

Therefore, every pair of entities from same true cluster satisfy the condition that  $\Count$ value is larger than $(1-\frac{\alpha+\beta}{2})|S|$ and will include an edge in $\mathcal{P}$, while we do not include an edge between pair of entities from different clusters. The resulting graph $\mathcal{P}$, will have $k$ connected components and Algorithm~\abBoundedDegree~will return the true clusters correctly. 

As we intervene on all the neighbors of every node in $S$, it will increase the interventions for every entity by a multiplicative $\Delta+1$ factor. For an entity $i$, the total number of interventional distributions constructed is 
$$\sum_{u \in S}(1+|\DD{i}{u}|) \leq |S|(\Delta + 1) = 4(\Delta+1) \log(M/\delta)/(\alpha-\beta)^2 \ \text{ as } \max_{i \in [M], w \in V} |\DD{i}{w}| \leq \Delta.$$

\end{proof}

\subsection{Learning Causal Graphs from Clusters}\label{app:alphabeta_recovery}

In this section, we provide additional details about the Algorithm~\Recovery that returns an approximate causal graph for every entity $i \in [M]$.

Consider the cluster $\Copt_a$ for some $a \in [k]$. In the next claim, we show that if the size of $\Copt_a$ is sufficiently large, then, each node $u \in V$ is assigned a large number of entities by \Recovery using the set $T_u$.
\begin{claim}
Consider a cluster $\Copt_a$ such that $|\Copt_a| \geq \frac{8 n\log(nM/\delta)}{(2\gamma_a-1)^2}$. Let $T_u$ denote the set of entities assigned to node $u$ in Algorithm~\ref{alg:graphs_boundlatents}. Then, we have with probability $1-\delta$, $|T_u| \geq \frac{4\log(n M/\delta)}{(2\gamma_a-1)^2}$ for every node $u \in V$.
\end{claim}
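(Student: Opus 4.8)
The plan is to recognize the random assignment of entities to nodes as a balls-into-bins process and to control the bin occupancies with a lower-tail Chernoff bound, closed off by a union bound over the $n$ nodes.

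First I would fix the cluster $\Copt_a$ and, for each entity $i \in \Copt_a$, let $X_i \in V$ be the node it is assigned (chosen independently and uniformly at random in Algorithm~\Recovery). Then $|T_u| = \sum_{i \in \Copt_a} \mathbf{1}\{X_i = u\}$ is a sum of $|\Copt_a|$ independent $\mathrm{Bernoulli}(1/n)$ variables, i.e.\ $\mathrm{Binomial}(|\Copt_a|,1/n)$, with mean $\mu := \E[|T_u|] = |\Copt_a|/n$. The hypothesis $|\Copt_a| \geq 8 n\log(nM/\delta)/(2\gamma_a-1)^2$ then gives $\mu \geq 8\log(nM/\delta)/(2\gamma_a-1)^2$, which is exactly twice the target quantity $c := 4\log(nM/\delta)/(2\gamma_a-1)^2$.

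Next I would apply the multiplicative lower-tail Chernoff bound $\Pr[|T_u| \leq (1-\epsilon)\mu] \leq \exp(-\epsilon^2\mu/2)$ with $\epsilon = 1/2$. Since $c \leq \mu/2$, the event $\{|T_u| < c\}$ is contained in $\{|T_u| \leq \mu/2\}$, so $\Pr[|T_u| < c] \leq \exp(-\mu/8)$ for each fixed $u$. Here I would use the one elementary observation that, as $\gamma_a$ is a fraction in $[0,1]$, we have $(2\gamma_a-1)^2 \leq 1$, hence $\mu/8 \geq \log(nM/\delta)/(2\gamma_a-1)^2 \geq \log(nM/\delta)$ and therefore $\Pr[|T_u| < c] \leq \delta/(nM)$. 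A union bound over the $n$ nodes $u \in V$ then yields $\Pr[\exists u : |T_u| < c] \leq n \cdot \delta/(nM) = \delta/M \leq \delta$, so with probability at least $1-\delta$ every node satisfies $|T_u| \geq c = 4\log(nM/\delta)/(2\gamma_a-1)^2$, as required.

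There is no genuine obstacle here; the statement is a routine concentration estimate. The only points that need care are the constant bookkeeping: verifying that the lower bound on $|\Copt_a|$ makes $\mu$ at least twice the target threshold (so that the crude choice $\epsilon = 1/2$ already suffices), and invoking $(2\gamma_a-1)^2 \leq 1$ so that the Chernoff exponent $\mu/8$ dominates $\log(nM/\delta)$ even after the union bound contributes the extra factor of $n$ inside the logarithm.
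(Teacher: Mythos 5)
Your proof is correct and follows essentially the same route as the paper: compute $\E[|T_u|] = |\Copt_a|/n$, apply the multiplicative lower-tail Chernoff bound with deviation $1/2$, use $(2\gamma_a-1)^2 \leq 1$ to dominate $\log(nM/\delta)$, and finish with a union bound over nodes. Your write-up is in fact slightly more explicit than the paper's (which compresses the choice $\epsilon = 1/2$ and the bounding of the exponent into one line, and takes the union bound over clusters $a \in [k]$ as well, though the claim as stated fixes a single cluster).
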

\begin{proof}
For a node $u \in V$, and cluster $\Copt_a$, we have:
$$ \E[T_u] = \frac{|\Copt_a|}{n} \geq \frac{8\log(nM/\delta)}{(2\gamma_a-1)^2}.$$

Using Chernoff bound, with probability at least $1-\exp{(-\log(nM/\delta)/(2\gamma_a-1)^2)} \geq 1-{\delta}/{n M}$, we have: 
\[ T_u \geq \frac{\E[T_u]}{2}  \geq \frac{4\log(nM/\delta)}{(2\gamma_a-1)^2}.\]

Applying union bound for every node $u \in V$ and $a \in [k]$, gives us the claim.
\end{proof}

\smallskip
\noindent Consider a partitioning of $\Copt_a$ given by $S^1_a, S^2_a, \cdots S^t_a$ where each $S^i_a$ for any $i \in [t]$ represents the maximal collection of MAGs that are equal. Formally, we have: $$S^i_a = \{ \MMM_p \mid \MMM_p \in \Copt_a \text{ and } \MMM_p = \MMM_q \quad \forall \MMM_q \in S^i_a\}.$$

\smallskip
\noindent Let $|S^\text{dom}_a| \geq |S^i_a|$ for every partition $i \in [t]$ and $\text{dom}_a$ denote an entity in $S^\text{dom}_a$. We define:
$$G_a(u) = \{ j \mid j \in \Copt_a \text{ and } N_i(u) = N_{\text{dom}_a}(u) \} \text{ and } B_a(u) = \Copt_a \setminus G_a(u). $$

We can observe that:
\[ |G_a(u)| \geq |S^\text{dom}_a| \text{ and } |B_a(u)| \leq  |\Copt_a| - |S^\text{dom}_a|.\]

Conditioned on the previous claim that each set $T_u$ for all $u \in V$ is large, we argue that for any pair of entities $i, j \in \Copt_a$ where $\MMM_i = \MMMdom_a$, and $\MMM_j \neq \MMMdom_a$, the $\NCount$ value calculated by \Recovery of entity $i$  for the node $u$ is always larger than that of entity $j$. Intuitively, after assigning the entities to nodes, we observe that for every node $u \in V$, the set $T_u$ contains a large number of entities with the dominant MAG, i.e., $|T_u \cap G_a(u)|$ is large. Because dominant MAGs share the same neighborhood (as they represent the same graph), we can show that the $\NCount$ value of dominant MAG is larger than any other MAG in the cluster. We formalize this statement using the following lemma.

\begin{lemma}\label{lem:dominantCount_appendix}
For every $a \in [k], u \in V$ and any pair of entities $i, j \in \Copt_a$ that satisfy $i \in G_a(u)$ and $j \in B_a(u)$, we have with probability $1-\delta$,  \[\NCount(i, u) > \NCount(j, u).\]
\end{lemma}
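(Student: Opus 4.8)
The plan is to reduce the comparison of the two $\NCount$ values to a purely combinatorial comparison of how many entities of each ``type'' land in $T_u$, and then to control that comparison by concentration. First I would unpack $\NCount$ on the good and bad sets. Writing $X_G = |T_u \cap G_a(u)|$ and $X_B = |T_u \cap B_a(u)|$, observe that for any $i \in T_u \cap G_a(u)$ every other entity $j'$ agreeing with $i$ about the neighborhood of $u$ must satisfy $\NN{j'}{u} = N_{\text{dom}_a}(u)$, i.e.\ $j' \in G_a(u)$; hence $\NCount(i,u) = X_G - 1$ exactly. For any $j \in T_u \cap B_a(u)$ the entities agreeing with $j$ have $\NN{j'}{u} = \NN{j}{u} \neq N_{\text{dom}_a}(u)$ and so form a subset of $B_a(u)$, giving $\NCount(j,u) \leq X_B - 1$. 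Therefore the \emph{single} event $X_G > X_B$ already forces $\NCount(i,u) > \NCount(j,u)$ simultaneously for every eligible pair $(i,j)$, so no union bound over pairs is required --- only one event per $(a,u)$.

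It then remains to show $X_G > X_B$ with high probability. Since each entity of $\Copt_a$ is assigned to $u$ independently with probability $1/n$, we have $\E[X_G] = |G_a(u)|/n \geq \gamma_a |\Copt_a|/n$ and $\E[X_B] = |B_a(u)|/n \leq (1-\gamma_a)|\Copt_a|/n$, using $|G_a(u)| \geq |S^{\text{dom}}_a|$, $|B_a(u)| \leq |\Copt_a| - |S^{\text{dom}}_a|$ and $\gamma_a = |S^{\text{dom}}_a|/|\Copt_a|$. Thus the difference $Y = X_G - X_B$ is a sum of $|\Copt_a|$ independent variables, each taking values in $\{-1,0,1\}$, with mean $\E[Y] \geq (2\gamma_a-1)|\Copt_a|/n$ and variance at most $|\Copt_a|/n$; the goal is to bound $\Pr[Y \leq 0]$.

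The key step, and the main obstacle, is this concentration bound: the plain range-based Hoeffding inequality only yields an exponent of order $(2\gamma_a-1)^2|\Copt_a|/n^2$, which is far too weak. Instead I would use Bernstein's inequality, exploiting that each indicator is Bernoulli$(1/n)$ so that the variance $|\Copt_a|/n$ is much smaller than the crude range suggests. Taking the deviation to be $\E[Y]$, Bernstein gives an exponent of order $(2\gamma_a-1)^2|\Copt_a|/n$; substituting the hypothesis $|\Copt_a| \geq 8n\log(nM/\delta)/(2\gamma_a-1)^2$ from the preceding claim makes this $\Omega(\log(nM/\delta))$, so $\Pr[Y \leq 0] \leq (nM/\delta)^{-\Omega(1)}$. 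A final union bound over all $u \in V$ and all clusters $a \in [k]$ (at most $nM$ events) then yields the statement with probability at least $1-\delta$. It is worth noting that the argument implicitly relies on $\gamma_a > 1/2$, which is exactly what makes $(2\gamma_a-1)$ positive in the sample-size hypothesis; this is the reason the dominant partition must be a strict majority of the cluster.
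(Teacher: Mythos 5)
Your proposal is correct, and it reaches the lemma by a genuinely different route than the paper's proof. The paper bounds $\NCount(i,u)$ and $\NCount(j,u)$ separately: it relies on the preceding claim that every $|T_u|$ is large, applies Hoeffding to each count with exponent $\Lambda^2/(2|T_u|)$ (which is only meaningful conditionally on $|T_u|$, since the count is a sum of $|T_u|-1$ terms), shows with $\Lambda = (2\gamma_a-1)|\Copt_a|/n$ that the two counts fall on opposite sides of the common threshold $|\Copt_a|/(2n)$, and finishes with a union bound over all entity pairs and all nodes. You instead collapse the whole comparison into the single event $X_G > X_B$ per pair $(a,u)$, where $X_G = |T_u\cap G_a(u)|$ and $X_B = |T_u\cap B_a(u)|$, via the exact identity $\NCount(i,u)=X_G-1$ and the bound $\NCount(j,u)\le X_B-1$ (the paper uses the same identity, but only at the level of expectations), and then control $Y=X_G-X_B$ \emph{unconditionally} by Bernstein, exploiting the per-term variance $1/n$. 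This buys three things: no union bound over entity pairs (only over the at most $nk \le nM$ pairs $(a,u)$); no reliance on the preceding claim about $|T_u|$, hence no conditioning; and it sidesteps a looseness in the paper, which combines an unconditional mean bound $\E[\NCount(i,u)]\ge \gamma_a|\Copt_a|/n$ with a conditional-looking Hoeffding denominator $2|T_u|$. Your diagnosis that plain Hoeffding on the unconditional $|\Copt_a|$-term sum is too weak (exponent of order $(2\gamma_a-1)^2|\Copt_a|/n^2$) is exactly right, and it is precisely why the paper needs to condition on $|T_u|$ and why you need Bernstein instead. What the paper's two-step structure buys in exchange is modularity and a more elementary tool: once $|T_u| = \Omega(\log(nM/\delta)/(2\gamma_a-1)^2)$ is established, vanilla Hoeffding on $|T_u|$-term sums suffices. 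Finally, your observation that the argument implicitly requires $\gamma_a > 1/2$ is a good catch; the paper never states this, but both proofs need the mean gap $(2\gamma_a-1)|\Copt_a|/n$ to be positive for the separation to exist.
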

\begin{proof}
From Algorithm~\ref{alg:graphs_boundlatents}, we know that $\NCount(i, u) = \sum_{j \neq i, j \in T_u} \mathbf{1} \{ N_i(u) = N_j (u) \}$ for an entity $i \in T_u$ and a node $u \in V$. Consider the case $i \in G_a(u)$. Then, we have:
\begin{align*}
    \E[\NCount(i, u)] &= \E\left[\sum_{j \neq i, j \in T_u} \mathbf{1} \{ N_i(u) = N_j (u) \}  \right] \\
                      &= \E\left[\sum_{j \neq i, j \in T_u} \mathbf{1} \{ N_{\text{dom}_a}(u) = N_j (u) \}  \right] \\
                      &= \E[|T_u \cap G_a(u)|]\\
                      &\geq \frac{|S^\text{dom}_a|}{n} = \frac{|S^\text{dom}_a|}{|\Copt_a|} \cdot \frac{|\Copt_a|}{n}= \gamma_a \cdot \frac{|\Copt_a|}{n}
\end{align*}
Using Hoeffding's inequality, with probability at least $1-\exp{(-\frac{\Lambda^2}{2|T_u|})}$
\[ \NCount(i, u) \geq \E[\NCount(i,u)] - \frac{\Lambda}{2} \geq \gamma_a \cdot \frac{|\Copt_a|}{n} - \frac{\Lambda}{2} \]
If $i \in B_a(u)$, then, we have : 
\begin{align*}
    \E[\NCount(i, u)] &= \E\left[\sum_{j \neq i, j \in T_u} \mathbf{1} \{ N_i(u) = N_j (u) \} \right] \\
    &= \E\left[\sum_{j \neq i, j \in T_u \cap G_a(u)} \mathbf{1} \{ N_i(u) = N_{\text{dom}_a}(u) \} + \sum_{j \neq i, j \in T_u \cap B_a(u)} \mathbf{1} \{ N_i(u) = N_{j}(u) \}  \right] \\
    &= \E\left[\sum_{j \neq i, j \in T_u \cap B_a(u)} \mathbf{1} \{ N_i(u) = N_{j}(u) \}  \right] \\
    &= \E[|T_u \cap B_a(u)|]\\
    &\leq \frac{|\Copt_a| - |S^{\text{dom}}_a|}{n} = \left( 1 - \frac{|S^{\text{dom}}_a|}{|\Copt_a|} \right) \cdot \frac{|\Copt_a|}{n} = (1-\gamma_a)\cdot \frac{|\Copt_a|}{n}
\end{align*}
Using Hoeffding's inequality, with probability at least $1-\exp{(-\frac{\Lambda^2}{2|T_u|})}$
\[ \NCount(i, u) < \E[\Count(i,u)] + \frac{\Lambda}{2} <  (1-\gamma_a)\cdot \frac{|\Copt_a|}{n} + \frac{\Lambda}{2}\]
Set $\Lambda = \frac{|\Copt_a|}{n}(2\gamma_a-1)$ and $|T_u| \geq \frac{4\log(n M/\delta)}{(2\gamma_a-1)^2}$. Then, for any pair of entities $i, j \in \Copt_a$ such that $i \in G_a(u)$ and $j \in B_a(u)$, we have, with a probability $1-\delta/n M^2$: $$\NCount(i, u) > \NCount(j, u).$$ 

Using union bound for every pair of entities in $[M]$ and $u \in V$, with probability at least $1-\delta$, we have the final claim.
\end{proof}

From the previous Lemma~\ref{lem:dominantCount_appendix}, we know that $\NCount$ values are always larger for the dominant MAG partition, and therefore merging the neighborhoods of all the nodes gives us the dominant MAG. As dominant MAG is within a distance of at most $\beta \cdot n$ from every MAG in the cluster, the dominant MAG returned is a sufficiently good approximation of the true MAG. We formalize this using the following statement. 

\begin{theorem}[Theorem~\ref{thm:dominantMAG} Restated]\label{thm:dominantMAG_appendix}
Suppose $\MMM_1, \MMM_2, \cdots \MMM_M$ satisfy $(\alpha, \beta)$ clustering property. If $\Copt_a = \Omega(\frac{n\log(n/M\delta)}{(2\gamma_a-1)^2})$ for all $a \in [k]$, then, Algorithm~\Recovery recovers graphs $\hat{\MMM}_1, \cdots \hat{\MMM}_M$ such that for every entity $i \in [M]$, we have $d(\MMM_i, \hat{\MMM}_i) \leq \beta n$  with probability $1-\delta$. Moreover, every entity uses at most $ (\Delta+1) + \frac{4(\Delta+1) \log(M/\delta)}{(\alpha-\beta)^2}$ many atomic interventions. 
\end{theorem}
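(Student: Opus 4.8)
The plan is to split the analysis into three stages that mirror the three phases of Algorithm~\Recovery: correct recovery of the true clusters, exact reconstruction of each cluster's dominant MAG by per-node majority voting, and the final translation into the distance guarantee and intervention count. Since the overall failure probability must be $\delta$, I would allocate a budget of roughly $\delta/3$ to each stage and absorb the resulting constant-factor changes into the $\Omega(\cdot)$ and $O(\cdot)$ notation.

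First I would invoke Lemma~\ref{lem:latent_alpha_beta_appendix}: with probability at least $1-\delta/3$ the initial call to Algorithm~\abBoundedDegree returns exactly the true clusters $\Copt_1,\dots,\Copt_k$, and this phase costs at most $4(\Delta+1)\log(M/\delta)/(\alpha-\beta)^2$ interventions per entity. Conditioning on this event lets the rest of the argument treat each recovered component as a genuine cluster $\Copt_a$.

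Second, and this is the heart of the proof, I would argue that the merged graph $\hatMMMdom_a$ equals the dominant MAG $\MMMdom_a$ exactly. The first sub-step is to show every node $u \in V$ receives a large assignment set $T_u$: since each entity in $\Copt_a$ picks its node uniformly, $\E[|T_u|] = |\Copt_a|/n$, and the hypothesis $|\Copt_a| = \Omega(n\log(nM/\delta)/(2\gamma_a-1)^2)$ together with a Chernoff bound and a union bound over the $n$ nodes gives $|T_u| \ge 4\log(nM/\delta)/(2\gamma_a-1)^2$ simultaneously. Conditioned on this, I would apply Lemma~\ref{lem:dominantCount_appendix}: an entity agreeing with the dominant MAG at $u$ has $\E[\NCount(\cdot,u)] \ge \gamma_a|\Copt_a|/n$, while a disagreeing entity has $\E[\NCount(\cdot,u)] \le (1-\gamma_a)|\Copt_a|/n$; the gap $(2\gamma_a-1)|\Copt_a|/n$ is precisely what a Hoeffding bound with the guaranteed $|T_u|$ can separate, so the $\arg\max$ entity $u_{\max}$ lies in $G_a(u)$ and hence $N_{u_{\max}}(u) = N_{\text{dom}_a}(u)$. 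A union bound over all nodes then forces $\hatMMMdom_a = \MMMdom_a$.

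Finally, since Algorithm~\Recovery sets $\hat{\MMM}_i = \hatMMMdom_a = \MMMdom_a$ for every $i \in \Copt_a$, and the $(\alpha,\beta)$-clustering property guarantees that any two MAGs in the same cluster are within node-distance $\beta n$, I would conclude $d(\MMM_i,\hat{\MMM}_i) = d(\MMM_i,\MMMdom_a) \le \beta n$ (and exactly $0$ for entities whose MAG is itself dominant). For the interventions, after clustering each entity is assigned a single node $u$ and runs \IdentifyNbr and \Identifybidir on it, which by Claims~\ref{cl:outnbr} and~\ref{cl:bidir} costs at most $\Delta+1$ interventions (one for $\doo(u)$ and at most $\Delta$ for its PAG-neighbors); adding the clustering cost yields the claimed bound $(\Delta+1) + 4(\Delta+1)\log(M/\delta)/(\alpha-\beta)^2$. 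The main obstacle is the second stage: the majority vote is correct only when the dominant partition is a strict majority ($\gamma_a > 1/2$, so $2\gamma_a-1 > 0$), and getting clean concentration across all $n$ nodes and all entity pairs is exactly where the $(2\gamma_a-1)^2$ factor and the lower bound on $|\Copt_a|$ must be balanced against the failure budget $\delta$.
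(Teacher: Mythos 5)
Your proposal is correct and follows essentially the same route as the paper's proof: recover the clusters via Lemma~\ref{lem:latent_alpha_beta_appendix}, show each $|T_u|$ concentrates to at least $4\log(nM/\delta)/(2\gamma_a-1)^2$ by Chernoff plus a union bound, separate $\NCount$ values of dominant versus non-dominant entities by Hoeffding (Lemma~\ref{lem:dominantCount_appendix}) so that $u_{\max} \in G_a(u)$ for every node, and then invoke the $(\alpha,\beta)$-clustering property for the $\beta n$ distance bound and the $(\Delta+1)$ accounting for the extra interventions. Your explicit $\delta/3$ budget split and the remark that $\gamma_a > 1/2$ is needed for the gap to be positive are slightly more careful than the paper's presentation, but these are cosmetic refinements, not a different argument.
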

\begin{proof}
From Lemma~\ref{lem:dominantCount_appendix}, we have that $\NCount(i, u) > \NCount(j, u)$, which implies $u_{\max} \in G_a(u)$. Using Algorithm~\ref{alg:graphs_boundlatents}, every entity $i$ in the cluster $\Copt_a$ is assigned the graph $\hat{\MMM}_i  = \MMMdom_a$. From the definition of $(\alpha, \beta)-$clustering property, we have that all entities $i \in \Copt_a$ are such that $d(\MMM_i, \hat{\MMM}_i) = d(\MMM_i, \MMMdom_a) \leq \beta n$.

Using Algorithm~\ref{alg:graphs_boundlatents} we assign every entity to a single node $u \in V$, and perform at most $\Delta + 1$ interventions to identify all the neighbors of $u$ for every entity in $T_u$. Therefore, we perform at most $\Delta+1$ interventions per entity. For obtaining clusters, from Lemma~\ref{lem:latent_alpha_beta}, we know that every entity performs at most $\frac{4(\Delta+1) \log(M/\delta)}{(\alpha-\beta)^2}$ interventions. Hence, the theorem.
\end{proof}

\section{Discovery under $\alpha$-Clustering Property} \label{app:graph_recovery}

In this section we provide additional details about learning causal graphs under $\alpha$-clustering property. 

\subsection{From $\alpha$-Clustering to Learning Causal Graphs} \label{app:meta}
 
\begin{lemma}\label{lem:cluster_to_graph_gen_appendix}
Suppose there is an Algorithm $\mathcal{A}$ that recovers the true clusters $\Copt_1, \Copt_2, \cdots, \Copt_k$ of the underlying MAGs $\MMM_1, \MMM_2, \cdots, \MMM_M$ satisfying $\alpha$-clustering property such that every entity $i \in [M]$ uses at most $f(M)$ interventions. Then, there is an algorithm that can learn all the MAGs $\MMM_1, \MMM_2, \cdots, \MMM_M$ such that every entity $i \in [M]$ uses at {most $f(M) + \lceil n/\Upsilon \rceil$ many  interventions, where $\Upsilon = \min_{b \in [k]} \Copt_b$.}
\end{lemma}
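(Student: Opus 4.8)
The plan is to first recover the clustering with $\mathcal{A}$ and then learn the single MAG of each cluster by spreading the $n$ interventions that a single MAG requires (Lemma~\ref{lem:naive_main}) across all the entities of that cluster. First I would run $\mathcal{A}$, which by hypothesis recovers the true clusters $\Copt_1,\dots,\Copt_k$ using at most $f(M)$ interventions per entity. Under the $\alpha$-clustering property (Definition~\ref{def:alphaclustering}), any two entities in the same cluster have node-distance $0$, so every entity in a cluster $\Copt_b$ realizes the \emph{same} MAG $\MMM$ and hence shares the same PAG $\PPP$. In particular the skeleton of $\MMM$ is already fixed by $\PPP$, and the only thing left to determine is, for each adjacency $(u,v)$ of $\PPP$, whether the edge is $u\rightarrow v$, $u\leftarrow v$, or $u\leftrightarrow v$.

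Within each cluster $\Copt_b$ I would assign the $n$ nodes of $V$ to the entities of $\Copt_b$ so that no entity receives more than $\lceil n/|\Copt_b|\rceil$ nodes, and have each entity run Algorithm~\IdentifyNbr on each of its assigned nodes. The accounting hinges on the observation that \IdentifyNbr$(\PPP_i,u)$ consumes only the single intervention $\doo(u)$: all of its CI-tests $u\notindep v\mid\doo(u)$ are read off from the one mutilated distribution, so (as in Claim~\ref{cl:outnbr}) identifying $\out{i}{u}$ costs exactly one intervention. Thus each entity spends at most $\lceil n/|\Copt_b|\rceil\le\lceil n/\Upsilon\rceil$ interventions in this phase, giving a total of $f(M)+\lceil n/\Upsilon\rceil$ per entity.

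For correctness I would show that the outgoing-neighbor sets computed at the two endpoints suffice to orient every adjacency. Fix an adjacency $(u,v)$ of $\PPP$ and let $i,j\in\Copt_b$ be the entities that intervened on $u$ and $v$ respectively (possibly $i=j$). By Claim~\ref{cl:outnbr}, \IdentifyNbr returns the true sets $\out{i}{u}$ and $\out{j}{v}$ of $\MMM$, and since all entities of $\Copt_b$ realize the \emph{same} MAG these two computations are mutually consistent. I then orient by the exhaustive rule: put $u\rightarrow v$ if $v\in\out{i}{u}$; else put $u\leftarrow v$ if $u\in\out{j}{v}$; else put $u\leftrightarrow v$. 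This is correct because a directed edge $u\rightarrow v$ forces $v\in\out{i}{u}$, a directed edge $u\leftarrow v$ forces $u\in\out{j}{v}$ while leaving $v\notin\out{i}{u}$, and a bidirected edge $u\leftrightarrow v$ leaves $u$ and $v$ out of each other's outgoing sets. Merging these orientations over all adjacencies reconstructs $\MMM$ exactly; I then assign $\MMM$ to every entity of $\Copt_b$ and repeat over $b\in[k]$ to learn all $M$ MAGs.

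The main obstacle is precisely this accounting shortcut: bidirected edges must be detected \emph{without} invoking Algorithm~\Identifybidir, which would cost interventions on all of $u$'s neighbors. Instead, the certificate $v\notin\out{i}{u}$ comes from the single intervention $\doo(u)$ owned by entity $i$, and the certificate $u\notin\out{j}{v}$ comes from the single intervention $\doo(v)$ owned by entity $j$, so each node incurs only one intervention and is reused for every adjacency it participates in. This is exactly what keeps the per-node cost at one and yields the $\lceil n/\Upsilon\rceil$ bound; its validity rests entirely on the $\alpha$-clustering guarantee that both endpoints' interventions describe one and the same MAG.
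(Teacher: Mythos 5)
Your proposal is correct and is essentially the paper's own Meta-Algorithm: partition the $n$ nodes among the entities of each cluster via a mapping $\phi$ (at most $\lceil n/|\Copt_b|\rceil$ nodes per entity), intervene once per assigned node to get outgoing-neighbor sets via Algorithm~\IdentifyNbr, and orient each PAG adjacency $(u,v)$ by the same three-way rule ($u\rightarrow v$ if $v\in\out{i}{u}$, else $u\leftarrow v$ if $u\in\out{j}{v}$, else $u\leftrightarrow v$), exploiting that $\alpha$-clustering makes all entities in a cluster share one MAG. Your explicit justification that bidirected edges are certified by the two endpoint interventions alone (rather than by Algorithm~\Identifybidir) is a point the paper leaves implicit in its orientation rule, but the argument and accounting are the same.
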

\begin{proof}
Consider a cluster $\Copt_b$ for $b \in [k]$. As the mapping $\phi$ assigns every entity at most $\lceil n/|\Copt_b| \rceil$ many nodes to intervene on, we have that every entity in $\Copt_b$ uses at most $\lceil n/|\Copt_b| \rceil$ additional interventions. Therefore, over all true clusters, every entity  uses at most $f(M) + \lceil n/\Upsilon \rceil$ many interventions. 

Consider any cluster $\Copt_b$. The mapping $\phi$ in the {Meta-Algorithm} is well-defined and satisfies the claim that for every node $u \in V$, there exists an entity in $\Copt_b$ for which we construct an interventional distribution $\doo(u)$. Therefore, for  cluster $\Copt_b$, we have $n$ interventional distributions one for every node in $V$, and we use Algorithm~\RecoverG to learn the MAG for this cluster (i.e., MAG for all the entities in $\Copt_b$). Repeating this for every cluster $\Copt_1, \Copt_2, \cdots, \Copt_k$, we obtain all the MAGs $\MMM_1, \MMM_2, \cdots, \MMM_M$.
\end{proof}

\smallskip
\noindent From the above lemma, we have the immediate corollary:
\begin{corollary}[Corollary~\ref{cor:cluster_to_graph} restated]\label{cor:cluster_to_graph_appendix}
Suppose there is an Algorithm $\mathcal{A}$ that recovers the true clusters $\Copt_1, \Copt_2, \cdots, \Copt_k$ of the underlying MAGs $\MMM_1, \MMM_2, \cdots, \MMM_M$ satisfying the $\alpha$-clustering property such that every entity $i \in [M]$ uses at most $f(M)$ interventions. Suppose $\min_{b \in [k]} |\Copt_b| \geq n$. Then, there is an algorithm that can learn all the MAGs $\MMM_1, \MMM_2, \cdots, \MMM_M$ such that every entity $i \in [M]$ uses at most $f(M) + 1$  many interventions.
\end{corollary}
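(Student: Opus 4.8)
The plan is to obtain this corollary as an immediate specialization of the more general Lemma~\ref{lem:cluster_to_graph_gen_appendix}, which already does the substantive work: given any cluster-recovery algorithm $\mathcal{A}$ that spends at most $f(M)$ interventions per entity, it produces an algorithm that learns all the MAGs $\MMM_1,\dots,\MMM_M$ using at most $f(M) + \lceil n/\Upsilon \rceil$ interventions per entity, where $\Upsilon = \min_{b \in [k]} |\Copt_b|$. So the only thing left to verify in the corollary's regime is that the additive overhead $\lceil n/\Upsilon \rceil$ collapses to $1$.

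First I would run $\mathcal{A}$ to recover the true clusters $\Copt_1,\dots,\Copt_k$ at cost $f(M)$ per entity, and then invoke the Meta-Algorithm guaranteed by Lemma~\ref{lem:cluster_to_graph_gen_appendix}: within each cluster $\Copt_b$, a balanced mapping $\phi$ assigns the $n$ nodes of $V$ to the entities so that no entity receives more than $\lceil n/|\Copt_b|\rceil$ nodes, each entity intervenes on its assigned nodes, and the locally recovered neighborhoods are merged into a single MAG via \RecoverG (correctness follows because, under $\alpha$-clustering, all entities in $\Copt_b$ share the same MAG). The per-entity overhead of this step is at most $\lceil n/\Upsilon \rceil$. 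Using the hypothesis $\min_{b \in [k]} |\Copt_b| \geq n$, i.e.\ $\Upsilon \geq n$, I would observe that $0 < n/\Upsilon \leq 1$, so $\lceil n/\Upsilon \rceil = 1$. Substituting this into the lemma's bound yields the claimed total of $f(M) + 1$ interventions per entity.

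There is no real obstacle specific to the corollary: it is purely the ceiling computation above once the general lemma is in hand. The genuinely delicate points — that $\phi$ can be chosen so that every node of $V$ is intervened on by some entity of the cluster, and that the three-way tail/head/bidirected classification produced by \RecoverG from these distributed interventions reconstructs the shared cluster MAG exactly — are all handled inside the proof of Lemma~\ref{lem:cluster_to_graph_gen_appendix} and need not be revisited here.
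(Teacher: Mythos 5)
Your proposal is correct and matches the paper exactly: the paper also obtains this corollary as an immediate consequence of Lemma~\ref{lem:cluster_to_graph_gen_appendix}, with the only remaining observation being that $\min_{b \in [k]} |\Copt_b| \geq n$ forces $\lceil n/\Upsilon \rceil = 1$. Your filling-in of that ceiling computation is exactly the intended (and only needed) step.
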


\subsection{Discovery without Latents}\label{app:nolatents}

In this section, we present a randomized algorithm that recovers (with high probability) all the $M$ MAGs $\MMM_1,\dots,\MMM_M$ when the underlying data generating process for each of these entities do not have any latents (i.e., causal DAGs $\DDD_1,\dots,\DDD_M$ satisfy causal sufficiency). This translates into the fact that the MAGs $\MMM_1,\dots,\MMM_M$ do not have bidirected edges. 

We first make the observation that to identify that two graphs, say $\MMM_i$ and $\MMM_j$ belong to different clusters, it suffices to find a node $u$ from the node-difference set $\diff(\MMM_i,\MMM_j)$ and checking their  outgoing neighbors using Algorithm~\IdentifyNbr. We argue that, with probability at least $1-\delta$, we can identify one such node $u \in \diff(\MMM_i,\MMM_j)$ by sampling $2\log (M/\delta)/\alpha$ nodes uniformly from $V$ as $|\diff(\MMM_i,\MMM_j)| = d(\MMM_i, \MMM_j) \geq \alpha n$.

In Algorithm~\NoLatents, we obtain a sample of nodes $S$ and construct interventional distribution for every entity in $[M]$, and for every node in $S$. After finding the outgoing neighbors for every entity $i$ and node in $S$, we construct a graph $\mathcal{P}$ on entities (i.e., the node set of $\mathcal{P}$ is $[M]$). We include an edge between two entities if they share the same outgoing neighbors for every $u \in S$. This ensures that every entity is connected only to the entities belonging to the same true cluster, and we return the connected components in $\mathcal P$ as our clusters.

\begin{algorithm}[!ht]
\caption{\NoLatents}
\label{alg:nolatents}
\begin{algorithmic}[1]
\STATE \textbf{Input:} $\alpha > 0$, confidence parameter $\delta > 0$, PAGs $\UUU_1,\dots,\UUU_M$ of $M$ entities.
\STATE \textbf{Output:} Partition of $[M]$ into clusters
\STATE Let $S$ denote a uniform sample of $\frac{2\log M/\delta}{\alpha}$ nodes from $V$ selected with replacement.
\FOR{every entity $i \in [M]$ and $u \in S$}
\STATE $\out{i}{u} \leftarrow \IdentifyNbr(i, u)$
\ENDFOR
\STATE Let $\mathcal{P}$ denote an empty graph on set of entities $[M]$
\FOR{every pair of entities $i, j$}
\IF{$\out{i}{u} = \out{j}{u}$  and $\D{i}(u) = \D{j}(u)$ for every $u \in S$}
\STATE Add an edge between entities $i$ and $j$ in $\mathcal{P}$
\ENDIF
\ENDFOR
\STATE Return connected components in $\mathcal{P}$
\end{algorithmic}
\end{algorithm}

\begin{claim}\label{cl:separation}
Let $S$ denote a set of $2\log(M/\delta)/\alpha$ nodes sampled with replacement uniformly from $V$. Then, for every pair of entities $i,j$ that belong to different true clusters, we have with probability at least $1-\delta$,  $u \in \diff(\MMM_i, \MMM_j)$  for some $u \in S$.  
\end{claim}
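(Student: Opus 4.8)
The plan is to combine the $\alpha$-clustering lower bound on the size of the node-difference set with a standard hitting-probability argument, and then union bound over all entity pairs. First I would invoke Definition~\ref{def:alphaclustering}: since $i$ and $j$ lie in different clusters, $d(\MMM_i,\MMM_j) = |\diff(\MMM_i,\MMM_j)| \geq \alpha n$. This is the only structural fact the argument needs; everything else is elementary probability.

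Next, I would fix one cross-cluster pair $(i,j)$ and analyze a single uniform sample $u$: the probability that it misses the difference set is $1 - |\diff(\MMM_i,\MMM_j)|/n \leq 1-\alpha$. Because the $|S| = 2\log(M/\delta)/\alpha$ samples are independent (drawn with replacement), I get $\Pr[S \cap \diff(\MMM_i,\MMM_j) = \emptyset] \leq (1-\alpha)^{|S|} \leq e^{-\alpha|S|} = (\delta/M)^2$, using $1-\alpha \leq e^{-\alpha}$ together with the prescribed choice of $|S|$.

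Finally, I would take a union bound over the at most $\binom{M}{2} \leq M^2/2$ pairs of entities. The per-pair failure probability $(\delta/M)^2$ is small enough that the total failure probability is at most $(M^2/2)(\delta/M)^2 = \delta^2/2 \leq \delta$, which yields the claim: with probability at least $1-\delta$, every cross-cluster pair $(i,j)$ admits some $u \in S$ lying in $\diff(\MMM_i,\MMM_j)$.

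There is no serious obstacle here, as the statement is essentially a textbook hitting-set bound. The only thing to get right is the bookkeeping of the quantifiers: the factor of $2$ in the sample size $2\log(M/\delta)/\alpha$ is precisely what lets the per-pair bound survive the quadratic union bound over pairs. One subtlety worth flagging is that the claim should be read as holding \emph{simultaneously} over all cross-cluster pairs, which is why $\log M$ (rather than merely $\log(1/\delta)$) appears in the sample size, and the proof is organized so that the union bound consumes exactly this slack.
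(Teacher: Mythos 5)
Your proof is correct and follows essentially the same route as the paper: the $\alpha n$ lower bound on $|\diff(\MMM_i,\MMM_j)|$ from the clustering property, the $(1-\alpha)^{|S|} \leq e^{-\alpha|S|} = (\delta/M)^2$ per-pair miss bound, and a union bound over all cross-cluster pairs. Your bookkeeping is in fact slightly tighter (ending at $\delta^2/2 \leq \delta$ rather than exactly $\delta$), but the argument is the same.
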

\begin{proof} Let $S$ denote a set of sampled nodes such that $|S| = 2\log(M/\delta)/{\alpha}$. Therefore, we have
\begin{align*}
   \Pr_{u \sim V}[ u \in \diff(\MMM_i, \MMM_j)] &\geq \alpha, \mbox{ and } \\
   \Pr_{S \sim V}[\ \forall u \in S \ : u \not\in \diff(\MMM_i, \MMM_j) ] &\leq (1-\alpha)^{|S|} \\ 
   &\leq e^{-\alpha |S|} \leq \frac{\delta}{M^2}.  
\end{align*}

Using union bound for every pair of entities in $[M]$ that belong to two different clusters, we have:
\[ \forall i, j \in [M], \ \Pr_{S \sim V}[\forall \ u \in S, u \not\in \diff(\MMM_i, \MMM_j)] \leq \delta. \]

Therefore, for every pair of entities $ i, j \in [M]$ belonging to different true clusters, there exists $u \in S$ such that:
\[ \Pr[u \in \diff(\MMM_i, \MMM_j)] \geq 1-\delta. \]
\end{proof}

\begin{lemma}\label{lem:nolatent_app}
Assume causal sufficiency. If MAGs $\MMM_1,\dots,\MMM_M$ satisfy $\alpha$-clustering property with true clusters $\Copt_1,\dots,\Copt_k$, then  Algorithm~\NoLatents    exactly recovers the clusters $\Copt_1,\dots,\Copt_k$ with probability at least $1-\delta$. Every entity $i \in [M]$ uses $2 \log(M/\delta)/\alpha$ many atomic interventions.
\end{lemma}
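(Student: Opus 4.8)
The plan is to show that the entity graph $\mathcal{P}$ built by Algorithm~\NoLatents has exactly the true clusters as its connected components, and then read off the intervention count directly from the size of the sample $S$. The starting point is the observation that under causal sufficiency every $\MMM_i$ is a DAG, so $\NN{i}{u}$ contains only $(\cdot,\text{tail})$ and $(\cdot,\text{head})$ entries and no $(\cdot,\text{bidirected})$ entries. Consequently $\NN{i}{u}$ is completely determined by the pair $(\out{i}{u}, \D{i}(u))$: the tails are exactly the nodes in $\out{i}{u}$ and the heads are exactly the nodes in $\D{i}(u)\setminus\out{i}{u}$. This gives the key equivalence
\[
\NN{i}{u}=\NN{j}{u} \quad\Longleftrightarrow\quad \out{i}{u}=\out{j}{u} \ \text{ and } \ \D{i}(u)=\D{j}(u),
\]
which is precisely the edge-inclusion test of Algorithm~\NoLatents. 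Here I would invoke that the skeleton of the PAG $\UUU_i$ returned by FCI matches the skeleton of $\MMM_i$, so that $\D{i}(u)$ read off the PAG really is the MAG-neighborhood, together with Claim~\ref{cl:outnbr}, which guarantees that \IdentifyNbr recovers $\out{i}{u}$ correctly using the single intervention $\doo(u)$.

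Next I would handle the two directions of correctness. For a pair $i,j$ in the same true cluster, the $\alpha$-clustering (i.e.\ $\beta=0$) assumption forces $\MMM_i=\MMM_j$, hence $\NN{i}{u}=\NN{j}{u}$ for every $u\in V$ and in particular for every $u\in S$; by the equivalence above the test succeeds and an edge $(i,j)$ is added. Thus every true cluster becomes a clique in $\mathcal{P}$ and is contained in a single connected component. For a pair $i,j$ in different true clusters, Claim~\ref{cl:separation} gives that, after a union bound over all $O(M^2)$ cross-cluster pairs, with probability at least $1-\delta$ there is some $u\in S$ with $u\in\diff(\MMM_i,\MMM_j)$, i.e.\ $\NN{i}{u}\neq\NN{j}{u}$; by the equivalence this violates the test, so no edge $(i,j)$ is added. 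Therefore, on the same $(1-\delta)$-probability event, $\mathcal{P}$ contains no cross-cluster edges at all.

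Combining the two directions, on that event the edge set of $\mathcal{P}$ is exactly the disjoint union of cliques, one per true cluster, so its connected components coincide with $\Copt_1,\dots,\Copt_k$ and the algorithm returns them exactly. For the intervention bound, note that for each sampled node $u\in S$ the only intervention \IdentifyNbr performs is the single atomic intervention $\doo(u)$ (the head/tail split is obtained from this one do-test together with the observational PAG), while $\D{i}(u)$ requires no intervention. Hence each entity performs exactly $|S| = 2\log(M/\delta)/\alpha$ atomic interventions, matching the claimed bound and notably avoiding any factor of $\Delta$.

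I expect the only genuinely delicate point to be the displayed equivalence: it is where the no-latents hypothesis is essential, since with bidirected edges present the pair $(\out{i}{u},\D{i}(u))$ would no longer pin down $\NN{i}{u}$ (a neighbor in $\D{i}(u)\setminus\out{i}{u}$ could be either a head or a spouse), and the simple two-quantity test would fail to detect some differences. Everything else is a routine combination of Claims~\ref{cl:separation} and~\ref{cl:outnbr} with a union bound.
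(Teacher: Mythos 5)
Your proposal is correct and follows essentially the same route as the paper's proof: both rely on Claim~\ref{cl:separation} to hit the node-difference set of every cross-cluster pair, use the absence of bidirected edges to reduce the test $\NN{i}{u}=\NN{j}{u}$ to the algorithm's check on $(\out{i}{u},\D{i}(u))$, and count one intervention per sampled node to get the $2\log(M/\delta)/\alpha$ bound. If anything, your explicit equivalence $\NN{i}{u}=\NN{j}{u}\iff\bigl(\out{i}{u}=\out{j}{u}\text{ and }\D{i}(u)=\D{j}(u)\bigr)$ is a slightly cleaner rendering of the step the paper handles by first case-splitting on whether the two entities' PAGs coincide.
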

\begin{proof}
Consider two entities $i,j$ and their corresponding MAGs $\MMM_i$ and $\MMM_j$ respectively. We first observe that if the PAGs of these two entities are different then they belong to different clusters. Now consider the case where the PAGs for both these entities are the same, i.e.,  $\UUU_i = \UUU_j$.

Now if $i$ and $j$ belong to different true clusters, then we claim that it suffices to find a  node $u$ from the node-difference set $\diff(\MMM_i,\MMM_j) = \{ u \mid \NN{i}{u} \neq \NN{j}{u} \}$ to notice this fact. As there are no latents (causal sufficiency), we can identify whether $u \in \diff(\MMM_i, \MMM_j)$, by checking only the outgoing neighbors of $u$ for entities $i, j$, i.e., $\diff(\MMM_i, \MMM_j) = \{ u \mid \out{i}{u} \neq \out{j}{u} \} $. When we identify such a node $u$, the set of outgoing neighbors of node $u$ are different for entities $i, j$, and therefore must belong to different true clusters (by $\alpha$-clustering property). In order to identify at least one node $u \in \diff(\MMM_i, \MMM_j)$, we use sampling. 

Let $S$ denote the set of sampled nodes (with replacement) from $V$ such that $|S| = 2\log(M/\delta)/{\alpha}$. In Algorithm~\NoLatents, we construct interventional distributions for every node $u \in S$, for every entity $i \in [M]$. Using these interventional distributions we obtain the outgoing neighbors of nodes in $S$ using Algorithm~\IdentifyNbr.

From Claim~\ref{cl:separation}, we have that for every pair of entities $ i, j$ belonging to different true clusters, there exists $u \in S$ such that:
\[ \Pr[\out{i}{u} \neq \out{j}{u}] = \Pr[u \in \diff(\MMM_i, \MMM_j)] \geq 1-\delta. \]

This implies that, with probability at least $1-\delta$, for every $i,j$ pair we have the following: in the entity graph $\mathcal{P}$ there would not be an edge between $i,j$ if they belong to different true clusters, and there would be an edge if they belong to the same true cluster. The resulting graph $\mathcal{P}$, will have $k$ connected components and Algorithm~\NoLatents will return the true clusters correctly.

Hence, with probability at least $1-\delta$, we can recover all the true clusters $\Copt_1,\dots,\Copt_k$ using Algorithm~\NoLatents.
\end{proof}

\begin{theorem}\label{thm:nolatent_app}
Assume causal sufficiency. If MAGs $\MMM_1,\dots,\MMM_M$ satisfy $\alpha$-clustering property with true clusters $\Copt_1,\dots,\Copt_k$ then  Algorithm~\NoLatents exactly recovers these clusters with probability at least $1-\delta$. Furthermore, if $\min_{b \in [k]} |\Copt_b| \geq n$,  then there
is an algorithm that exactly learns all these MAGs with probability at least $1-\delta$. Every entity $i \in [M]$ uses $2 \log(M/\delta)/\alpha + 1$ many atomic interventions.
\end{theorem}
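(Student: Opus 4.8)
The plan is to obtain this theorem as a direct composition of two results already established for the latent-free setting: the cluster-recovery guarantee of Lemma~\ref{lem:nolatent_app} and the cluster-to-graph reduction of Corollary~\ref{cor:cluster_to_graph}. No new analysis is needed beyond invoking these two statements with matching parameters and tracking the failure probability.

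First I would observe that the opening assertion---that Algorithm~\NoLatents exactly recovers the true clusters $\Copt_1,\dots,\Copt_k$ with probability at least $1-\delta$ while each entity performs $2\log(M/\delta)/\alpha$ atomic interventions---is precisely the content of Lemma~\ref{lem:nolatent_app}. So for that part it suffices to apply the lemma with the same sample size $|S| = 2\log(M/\delta)/\alpha$; the guarantee carries over verbatim.

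Second, for the claim about learning all the MAGs, I would instantiate Corollary~\ref{cor:cluster_to_graph} with the recovery routine $\mathcal{A}$ taken to be Algorithm~\NoLatents and $f(M) = 2\log(M/\delta)/\alpha$. The hypothesis $\min_{b \in [k]} |\Copt_b| \geq n$ of the corollary is exactly the extra assumption in the theorem, so the corollary applies and produces an algorithm that learns all MAGs using at most $f(M)+1 = 2\log(M/\delta)/\alpha + 1$ interventions per entity. Concretely, once the clusters are recovered, the meta-algorithm distributes the $n$ nodes of $V$ among the $\geq n$ entities of each cluster $\Copt_b$ so that each entity is assigned at most $\lceil n/|\Copt_b|\rceil = 1$ node, runs a single \IdentifyNbr intervention on its assigned node (which fully fixes the orientation in the latent-free case, since the MAGs have no bidirected edges), and merges the per-node answers as in Algorithm~\RecoverG to reconstruct the common MAG of the cluster.

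The only point requiring care is the probability bookkeeping, and it is minor. The sole randomized event is the cluster recovery performed by Algorithm~\NoLatents, which fails with probability at most $\delta$; conditioned on correct clusters, the meta-algorithm is deterministic and always outputs the correct MAGs, so no additional failure probability accumulates and the overall success probability remains $1-\delta$. I do not expect a genuine obstacle here---the substance lives entirely in the cited lemma and corollary---and the one thing worth double-checking is that the extra ``$+1$'' is precisely the per-entity cost of the meta-algorithm under the size assumption, which follows from $\lceil n/|\Copt_b|\rceil = 1$ whenever $|\Copt_b| \geq n$.
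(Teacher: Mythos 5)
Your proposal is correct and follows essentially the same route as the paper's own proof: the paper likewise proves this theorem by invoking Lemma~\ref{lem:nolatent_app} for cluster recovery and then applying the Meta-Algorithm via Corollary~\ref{cor:cluster_to_graph_appendix} (the restatement of Corollary~\ref{cor:cluster_to_graph}) to get the single extra intervention per entity. Your additional remarks---that $\lceil n/|\Copt_b|\rceil = 1$ under the size assumption and that the meta-algorithm is deterministic given correct clusters, so no extra failure probability accrues---are exactly the (implicit) bookkeeping behind the paper's two-line argument.
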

\begin{proof}
From Lemma~\ref{lem:nolatent_app}, we can recover the clusters correctly with probability at least $1-\delta$. Using the {Meta-Algorithm} discussed in Section~\ref{app:meta}, we can learn the graphs of every entity with a single additional intervention (see Corollary~\ref{cor:cluster_to_graph_appendix}). This establishes the result.
\end{proof}

\subsection{Discovery with Latents: Bounded Degree MAGs}\label{app:boundedlatents}
Throughout this section, we let : 
\[\Delta = \max_{i \in [M], u \in V} |\D{i}(u)|. \]

We now discuss an algorithm that recovers clusters $\Copt_1, \Copt_2 \cdots, \Copt_k$ using ideas developed in Section~\ref{app:nolatents} but now with latents in the system. In the presence of latents, the collection of MAGs $\MMM_1,\dots,\MMM_M$ are mixed graphs that also contain bidirected edges, which introduces issues, as bidirected edges cannot be detected easily. For example, two entities $i$ and $j$ might be such that $u \leftrightarrow v$ could be present in $\MMM_i$ and $u \leftarrow v $ could be present in $\MMM_j$, in which case intervening on just $u$ alone will not suffice to distinguish $i$ from $j$, we need interventions on both $u$ and $v$. This is the idea behind Algorithm~\BoundedDegree, which
identifies all the outgoing and bidirected edges incident on the set of sampled nodes (say $S$), for every entity in $[M]$. Since from this we can compute all neighboring relations of $u$ ($N_i(u)$), Algorithm~\BoundedDegree then checks whether these neighborhoods are the same or not for every node $u \in S$. We can now leverage the $\alpha$-clustering property to argue that this process succeeds with probability at least $1-\delta$.

As we use Algorithm~\Identifybidir, to find all bidirected edges incident on a node $u \in S$, we use an additional $O(\Delta)$ atomic interventions (per entity) where $\Delta = \max_{i \in [M], u \in V} \D{i}(u)$ is the maximum undirected degree in the PAGs $\PPP_1,\dots,\PPP_M$.

\begin{algorithm}[!ht]
\caption{\BoundedDegree}
\label{alg:alphaboundlatents}
\begin{algorithmic}[1]
\STATE \textbf{Input:} $\alpha > 0$, confidence parameter $\delta > 0$, PAGs $\UUU_1,\dots,\UUU_M$ of $M$ entities
\STATE \textbf{Output:} Partition of $[M]$ into clusters
\STATE Let $S$ denote a uniform sample of $\frac{2\log M/\delta}{\alpha}$ nodes from $V$ selected with replacement.
\FOR{every entity $i \in [M]$ and $u \in S$}
\STATE $\out{i}{u} \leftarrow \IdentifyNbr(i, u)$  \STATE $\bi{i}{u} \leftarrow \Identifybidir(i, u)$
\STATE $\inc{i}{u} \leftarrow \D{i}(u) \setminus \left( \out{i}{u} \cup \bi{i}{u} \right)$ 
\STATE Construct $\NN{i}{u}$ (defined in~\eqref{eqn:N})
\ENDFOR
\STATE Let $\mathcal{P}$ denote an empty graph on set of entities $[M]$
\FOR{every pair of entities $i, j$}
\IF{$\NN{i}{u} = \NN{j}{u}$ for every $u \in S$}
\STATE Include an edge between $i$ and $j$ in $\mathcal{P}$
\ENDIF
\ENDFOR
\STATE Return connected components in $\mathcal{P}$
\end{algorithmic}
\end{algorithm}

\begin{lemma}\label{lem:boundedlatent_app}
If the underlying MAGs $\MMM_1,\dots,\MMM_M$ satisfy $\alpha$-clustering property with true clusters $\Copt_1,\dots,\Copt_k$, then  Algorithm~\BoundedDegree exactly recovers the clusters $\Copt_1,\dots,\Copt_k$ with probability at least $1-\delta$. Every entity $i \in [M]$ uses at most $2 (\Delta+1) \log(M/\delta)/\alpha$ many atomic interventions.
\end{lemma}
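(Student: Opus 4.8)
The plan is to mirror the structure of the proof of Lemma~\ref{lem:nolatent_app} (the no-latents case), but to account for bidirected edges by comparing the full incidence sets $\NN{i}{u}$ rather than only the outgoing neighborhoods $\out{i}{u}$. I would first establish that the neighborhood information computed by the algorithm is correct: for every entity $i$ and every sampled node $u \in S$, Claim~\ref{cl:outnbr} guarantees that \IdentifyNbr returns $\out{i}{u}$ exactly, and Claim~\ref{cl:bidir} guarantees that \Identifybidir returns $\bi{i}{u}$ exactly. Since $\inc{i}{u}$ is then recovered as $\D{i}(u) \setminus (\out{i}{u} \cup \bi{i}{u})$, the incidence set $\NN{i}{u}$ built by the algorithm matches the ground-truth incidence set of $u$ in $\MMM_i$.

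Next I would analyze the entity graph $\mathcal{P}$ in the two directions. For a pair $i,j$ in the same true cluster, $\alpha$-clustering ($\beta = 0$) forces $\MMM_i = \MMM_j$, so $\NN{i}{u} = \NN{j}{u}$ for every $u \in V$, and in particular for every $u \in S$; hence the edge $(i,j)$ is always present in $\mathcal{P}$, deterministically. For a pair $i,j$ in different clusters, $d(\MMM_i,\MMM_j) \geq \alpha n$ means $|\diff(\MMM_i,\MMM_j)| \geq \alpha n$, and I would invoke Claim~\ref{cl:separation}: since $S$ is a uniform sample of size $2\log(M/\delta)/\alpha$, with probability at least $1-\delta$ (after the union bound over all cross-cluster pairs, which is already folded into that claim) there is some $u \in S$ with $u \in \diff(\MMM_i,\MMM_j)$, i.e. $\NN{i}{u} \neq \NN{j}{u}$, so the algorithm does not add the edge $(i,j)$. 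Conditioning on this event, the edge set of $\mathcal{P}$ consists of exactly the within-cluster pairs, so its connected components are precisely $\Copt_1,\dots,\Copt_k$.

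For the intervention count, I would observe that processing a single node $u \in S$ for a single entity $i$ requires one intervention $\doo(u)$ (used both by \IdentifyNbr and \Identifybidir) together with an intervention $\doo(v)$ for each $v \in \D{i}(u)$ (used by \Identifybidir to test the bidirected condition $u \indep v \mid \doo(v)$). This is at most $1 + |\D{i}(u)| \leq \Delta + 1$ interventions. Summing over the $|S| = 2\log(M/\delta)/\alpha$ sampled nodes gives the claimed bound $|S|(\Delta+1) = 2(\Delta+1)\log(M/\delta)/\alpha$.

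The main obstacle --- and the reason the bound degrades by a factor of $\Delta$ relative to the no-latents case --- is exactly the detection of bidirected edges. Two entities from different clusters can agree on all outgoing neighbors at every sampled node yet differ by having $u \leftrightarrow v$ versus $u \leftarrow v$; distinguishing these requires intervening on $v$ as well as on $u$, which is what forces the per-node cost to grow from $1$ to $\Delta+1$. Making precise that the full incidence comparison (rather than the outgoing-neighbor comparison of \NoLatents) correctly certifies cluster membership is the crux, and it rests entirely on the correctness of \Identifybidir established in Claim~\ref{cl:bidir}.
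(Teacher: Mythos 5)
Your proposal is correct and follows essentially the same route as the paper's proof: it reduces cluster identification to finding a node in $\diff(\MMM_i,\MMM_j)$ via Claim~\ref{cl:separation}, certifies the computed incidence sets $\NN{i}{u}$ through Claims~\ref{cl:outnbr} and~\ref{cl:bidir}, and counts $\sum_{u \in S}(1+|\D{i}(u)|) \leq |S|(\Delta+1)$ interventions per entity. If anything, your write-up is slightly more explicit than the paper's in spelling out the deterministic within-cluster direction (edges always present since $\alpha$-clustering forces identical MAGs), which the paper leaves implicit.
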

\begin{proof}
We follow a proof idea similar to Lemma~\ref{lem:nolatent_app}. Again if two entities $i,j$ have different PAGs then they belong to different true clusters.

Consider two entities $i,j$ belonging to different true clusters but having the same PAG.
Again it suffices to find a node $u$ from the node-difference set $\diff(\MMM_i,\MMM_j) = \{ u \mid \NN{i}{u} \neq \NN{j}{u} \}$ to conclude that they belong to different clusters.

As there are latents (causal sufficiency), we cannot identify whether $u \in \diff(\MMM_i, \MMM_j)$, by checking only the outgoing neighbors of $u$ for entities $i, j$, and have to check the set of bidirected edges incident on $u$ as well. We can identify all the bidirected edges incident on $u$ for both $i, j$ using Algorithm~\Identifybidir.  Identifying such a node $u$, whose set of neighbors of node $u$ are different for entities $i, j$, provides a certificate that $i,j$ belong to different true clusters ($\alpha$-clustering property). In order to identify at least one node $u \in \diff(\MMM_i, \MMM_j)$, we use sampling.

Let $S$ denote the set of sampled nodes (with replacement) from $V$ such that $|S| = 2\log(M/\delta)/{\alpha}$. In Algorithm~\BoundedDegree, we construct interventional distributions for every node $u \in S$ and all the neighbors in the PAG given by $\DD{i}{u}$, for every entity $i \in [M]$. From these interventional distributions, we can compute $\NN{i}{u}$ and $\NN{j}{u}$ for all the nodes $u \in S$ (using Algorithms~\IdentifyNbr and~\Identifybidir).

From Claim~\ref{cl:separation}, we have that for every pair of entities $ i, j$ belonging to different true clusters, there exists $u \in S$ such that:
\[ \Pr[\NN{i}{u} \neq \NN{j}{u}] = \Pr[u \in \diff(\MMM_i, \MMM_j)] \geq 1-\delta. \]

Hence, with probability at least $1-\delta$, we can recover all the true clusters using Algorithm~\BoundedDegree. 

For an entity $i$, the total number of interventional distributions constructed is 
$$\sum_{u \in S}(1+|\DD{i}{u}|) \leq |S|(\Delta + 1) = 2 (\Delta+1) \log(M/\delta)/\alpha \ \text{ as } \max_{i \in [M], w \in V} |\DD{i}{w}| \leq \Delta.$$
\end{proof}

\begin{theorem}\label{thm:boundedlatent_app}
If the underlying MAGs $\MMM_1,\dots,\MMM_M$ satisfy $\alpha$-clustering property with true clusters $\Copt_1,\dots,\Copt_k$, then  Algorithm~\BoundedDegree exactly recovers these clusters with  probability at least $1-\delta$. Furthermore, if $\min_{b \in [k]} |\Copt_b| \geq n$, then there is an algorithm that exactly learns all these MAGs with probability at least $1-\delta$. Every entity $i \in [M]$ uses at most $2 (\Delta +1)\log(M/\delta)/\alpha + 1$ many atomic interventions.
\end{theorem}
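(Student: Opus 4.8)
The plan is to obtain this theorem as a clean two-step composition, since both ingredients have already been established earlier. First I would invoke Lemma~\ref{lem:boundedlatent_app} verbatim for the first statement: it already guarantees that Algorithm~\BoundedDegree exactly recovers the true clusters $\Copt_1,\dots,\Copt_k$ with probability at least $1-\delta$, and that every entity uses at most $2(\Delta+1)\log(M/\delta)/\alpha$ atomic interventions. Nothing further is needed at this stage; the sampling-plus-\Identifybidir analysis that powers it (via Claim~\ref{cl:separation}, together with the fact that under $\alpha$-clustering a single node of $\diff(\MMM_i,\MMM_j)$ certifies separation) is entirely contained in that lemma.

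For the second statement I would condition on the success event of the clustering step, which occurs with probability at least $1-\delta$. On this event the recovered partition equals the true partition, so I can feed it to the Meta-Algorithm of Section~\ref{app:meta}. The key numerical observation is that the size hypothesis $\min_{b\in[k]}|\Copt_b|\geq n$ forces the per-cluster load $\lceil n/|\Copt_b|\rceil = 1$, so distributing the $n$ node-interventions across the (at least $n$) entities of a cluster costs each entity only a single extra intervention. Formally this is exactly Corollary~\ref{cor:cluster_to_graph_appendix} applied with $f(M) = 2(\Delta+1)\log(M/\delta)/\alpha$, which yields an algorithm learning all MAGs using at most $f(M)+1 = 2(\Delta+1)\log(M/\delta)/\alpha + 1$ interventions per entity, matching the claimed bound.

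The remaining point is correctness of the learning step and the probability bookkeeping. Under $\alpha$-clustering (i.e.\ $\beta=0$) all entities within a cluster share the identical MAG, so once the common PAG is fixed, the per-node interventions collected across the cluster reconstruct every incident edge via the \IdentifyNbr/\Identifybidir logic of Algorithm~\RecoverG; this is precisely the content of the Meta-Algorithm and Corollary~\ref{cor:cluster_to_graph_appendix}, and assigning the reconstructed MAG back to each entity is exact because the clustering is exact. Since the Meta-Algorithm is deterministic once conditioned on the correct clustering, it introduces no additional failure probability, so the overall guarantee remains $1-\delta$ without any further union bound across the two phases.

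I do not expect a genuine obstacle here: all the real technical difficulty — the uniform-sampling argument that some node of $\diff(\MMM_i,\MMM_j)$ is hit, and the necessity of intervening on all neighbors of each sampled node to detect bidirected edges — is already discharged inside Lemma~\ref{lem:boundedlatent_app} and Corollary~\ref{cor:cluster_to_graph_appendix}. The only care required is the trivial arithmetic $\lceil n/\Upsilon\rceil = 1$ under the cluster-size hypothesis (with $\Upsilon = \min_{b\in[k]}|\Copt_b|$) and the observation that the second phase succeeds deterministically given the first, so the final probability is inherited unchanged from the clustering guarantee.
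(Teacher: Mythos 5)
Your proposal is correct and follows essentially the same route as the paper's own proof: invoke Lemma~\ref{lem:boundedlatent_app} for the clustering guarantee and intervention count, then apply the Meta-Algorithm via Corollary~\ref{cor:cluster_to_graph_appendix} (using $\lceil n/|\Copt_b|\rceil = 1$ under the size hypothesis) to get one extra intervention per entity. Your explicit remark that the second phase is deterministic conditioned on correct clustering, so the failure probability is inherited unchanged, is a point the paper leaves implicit but is exactly the right bookkeeping.
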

\begin{proof}
From Lemma~\ref{lem:boundedlatent_app}, we can recover the clusters correctly with probability at least $1-\delta$. Using the {Meta-Algorithm} discussed in Section~\ref{app:meta}, and from Corollary~\ref{cor:cluster_to_graph_appendix}, we can obtain an algorithm to learn the graphs of every entity with an additional intervention per entity. This completes the proof.
\end{proof}

\subsection{Missing Details from Section~\ref{sec:graph_recovery_general}}\label{app:graph_recovery_general}

 In Algorithm~\GenRecovery, we obtain all the outgoing neighbors of the sampled set of nodes $S$. Then, we construct a graph on set of entities, $[M]$ such that an edge between a pair of entities $i, j$ is included if they share same PAGs, i.e.,  $\UUU_i =  \UUU_j$ and same outgoing neighbors for every node in $S$. However, it is possible that the graph $\mathcal{P}$ can contain more than one true cluster. In the next lemma, we show that we can detect this, and remove all the edges between entities belonging to two different clusters using $2$ interventions.
 
\begin{lemma}\label{lem:peeling}
 Suppose a component $T_a$ in $\mathcal{P}_\itr$ for some $\itr \geq 1$ contains all the entities from two true clusters $\Copt_b, \Copt_c$. If $\min_{r \in [k]} |\Copt_r| \geq \Omega(n \log M/\delta)$,  then, we can identify, with a probability $1-\delta/2 k^2$, all the pairs of entities $i', j' \in T_a$ such that $i' \in \Copt_b$ and $j' \in \Copt_c$ (or vice-versa) using at most $2$ interventions for every entity in $T_a$ .
\end{lemma}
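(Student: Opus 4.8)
The plan is to produce, with high probability, a single \emph{witness} pair of nodes $(w,v)$ certifying that the component $T_a$ has merged $\Copt_b$ and $\Copt_c$, and then to delete exactly the cross-cluster edges by intervening on $v$. Under $\alpha$-clustering every entity of a cluster shares one MAG, so write $\MMM_b,\MMM_c$ for the MAGs of $\Copt_b,\Copt_c$; distinctness gives $|\diff(\MMM_b,\MMM_c)| = d(\MMM_b,\MMM_c)\ge\alpha n$. I would first record two structural facts about $\mathcal P$. The edge rule used to build $\mathcal P$ (agreement of $\out{\cdot}{u}$ and $\D{\cdot}(u)$ on all $u\in S$) depends only on the cluster, and within-cluster edges are never deleted, since entities with identical MAGs agree on every spouse set and so the deletion test never fires on them; hence each of $\Copt_b,\Copt_c$ remains a clique in every $\mathcal P_\itr$. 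Moreover the peeling step deletes, for each detected witness, \emph{every} edge of $T_a$ whose endpoints disagree about that witness, so once two clusters are separated they remain separated. Consequently the hypothesis that $\Copt_b,\Copt_c$ lie in a common component of $\mathcal P_\itr$ forces the existence of a surviving direct cross edge $(i,j)$ with $i\in\Copt_b$, $j\in\Copt_c$ and $\out{i}{u}=\out{j}{u}$, $\D{i}(u)=\D{j}(u)$ for all $u\in S$.

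Next I would pin down the type of discrepancy. By Claim~\ref{cl:separation} applied with $|S|=2\log(2M/\delta)/\alpha$, with probability at least $1-\delta/(4k^2)$ the sample contains a node $w\in\diff(\MMM_b,\MMM_c)$, so $\NN{i}{w}\neq\NN{j}{w}$; but $\out{i}{w}=\out{j}{w}$ and $\D{i}(w)=\D{j}(w)$ by the previous paragraph. The only remaining possibility is a common neighbor $v$ that is a spouse of $w$ on one side and an in-neighbor on the other, say $v\in\bi{i}{w}$ and $v\in\inc{j}{w}$. This is the witness $(w,v)$, and crucially $v\in\bi{i'}{w}$ for every $i'\in\Copt_b$ while $v\notin\bi{j'}{w}$ for every $j'\in\Copt_c$.

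I would then show the random assignment $\pi$ places $v$ on an entity of each cluster: $\Pr[\text{no }i'\in\Copt_b\text{ has }\pi(i')=v]\le(1-1/n)^{|\Copt_b|}\le e^{-|\Copt_b|/n}\le\delta/(8k^2)$ once $|\Copt_b|=\Omega(n\log(M/\delta))$, and symmetrically for $\Copt_c$. On this event there exist $i\in\Copt_b$, $j\in\Copt_c$ with $\pi(i)=\pi(j)=v$; since $(i,j)$ is an edge of $\mathcal P_\itr$ (paragraph one), the detection branch of Algorithm~\GenRecovery triggers, and the spouse test of Algorithm~\Identifybidir, run from $\doo(w)$ (available from the sampling phase) and $\doo(v)=\doo(\pi(i))$, correctly certifies $v\in\bi{i}{w}$ and $v\notin\bi{j}{w}$ by Claim~\ref{cl:bidir}. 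Finally, intervening on $v$ for every entity of $T_a$ (the second intervention, the first being $\pi$) yields $\bi{i'}{w}$ for all $i'\in T_a$, and the algorithm deletes an edge $(i',j')$ exactly when $v$ lies in exactly one of $\bi{i'}{w}$ and $\bi{j'}{w}$; by the last sentence of paragraph two this removes precisely all $\Copt_b$--$\Copt_c$ pairs and no within-cluster pair, using at most two interventions per entity of $T_a$. A union bound over the two failure events gives success probability at least $1-\delta/(2k^2)$.

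I expect the crux to be the first two paragraphs: arguing that a merged component must be certified by a \emph{spouse-versus-in-neighbor} disagreement located at a \emph{sampled} node $w$. This combines the equivalence-relation structure of $\mathcal P$'s edge rule (guaranteeing the surviving cross edge and hence agreement of $\out{\cdot}{u}$ and $\D{\cdot}(u)$ on all of $S$) with the sampling guarantee of Claim~\ref{cl:separation} (guaranteeing $S$ meets $\diff(\MMM_b,\MMM_c)$). Together these force the obstruction to be of exactly the type that a single intervention on $w$ cannot detect and that the extra intervention on $v$ resolves; the bookkeeping that the global peeling preserves the ``once-separated-stays-separated'' invariant is the delicate part to get right.
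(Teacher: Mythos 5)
Your proposal is correct and follows essentially the same route as the paper's own proof: use Claim~\ref{cl:separation} to locate a sampled node $w$ on which the two cluster MAGs differ, observe that agreement of out-neighbors and adjacency on all of $S$ forces the discrepancy to be a spouse-versus-parent disagreement at a common neighbor $v$, use the random assignment $\pi$ to place $v$ on an entity of each of $\Copt_b$ and $\Copt_c$ so that the detection branch fires, and then intervene on $v$ for all of $T_a$ to delete exactly the cross-cluster edges, for at most two interventions per entity. If anything, your write-up is slightly more careful than the paper's in two spots: you bound the events that $\pi$ hits $v$ in each cluster separately and union-bound (the paper's single computation only bounds the probability that no entity in $\Copt_b \cup \Copt_c$ receives $v$, which is not quite the needed event), and you justify the existence of a surviving direct cross edge via the equivalence-relation/clique structure of $\mathcal{P}$, which the paper asserts without argument.
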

\begin{proof}
We claim that if a component $T_a$ containing $\Copt_b$ and $\Copt_c$ exists, then, we can identify a pair of entities $i, j$ that are joined by an edge in $\mathcal{P}_\itr$ such that $i \in \Copt_b$ and $j \in \Copt_c$ or vice-versa. 

It suffices to find a node $u$ from the node-difference set $\diff(\MMM_i,\MMM_j) = \{ u \mid \NN{i}{u} \neq \NN{j}{u} \}$ to conclude that they belong to different clusters. From Claim~\ref{cl:separation}, we know that when $|S| = 2\log(2M/\delta)/\alpha$, we can identify such a $u \in \diff(\MMM_i, \MMM_j)$ with a probability $1-\delta/2$. We make the observation that a pair of entities $i, j$ that have an edge in this $\mathcal{P}_\itr$ and from different true clusters, can differ only if there is a node $u \in \diff(\MMM_i, \MMM_j)$ such that $u$ has a bidirected edge $u \leftrightarrow v$ in $\MMM_i$, and a directed edge $u \leftarrow v$ in $\MMM_j$ (or vice-versa). Intervening on both $u$ and $v$ will separate these entities, our main idea is to ensure that this happens. 

Consider a mapping $\pi : [M] \rightarrow V$ where $\pi(i)$ is assigned a node from $V$ selected uniformly at random. Using this mapping, we ensure that there are two entities $i \in T_a \cap \Copt_b, j \in T_a \cap \Copt_c$ joined by an edge, such that $\pi(i) = \pi(j) = v$ and $u \leftrightarrow v$ in $\MMM_i$, $u \leftarrow v$ in $\MMM_j$ (or vice-versa) for some $u \in S$. We have:

\begin{align*}
   \Pr[\text{ for any }i \in T_a, \ \pi(i) \neq v ] &=  1-\frac{1}{n}, \mbox{ and}\\
    \Pr[\forall i \in \Copt_b \ : \pi(i) \neq v] &= \left( 1-\frac{1}{n}\right)^{|\Copt_b|}.
\end{align*}
Similarly, we have \[  \Pr[\forall j \in \Copt_c \ : \pi(j) \neq v] = \left( 1-\frac{1}{n}\right)^{|\Copt_c|}. \]
\begin{align*}
    \Pr[\forall \ i \in \Copt_b, j \in \Copt_c \text{ such that } \pi(i) \neq v ,\ \pi(j) \neq v ] &= \left( 1- \frac{1}{n} \right)^{|\Copt_b| + |\Copt_c|} \\
    &\leq \frac{\delta}{2M^2} \leq \frac{\delta}{2k^2}\\
    \Rightarrow \Pr[\exists i \in \Copt_b, \exists j \in \Copt_c \ : \pi(i) = \pi(j) = v ] &\geq 1-\frac{\delta}{2k^2}.
\end{align*}

As we intervene on $\pi(i)$ for every entity $i \in T_a$, we know that there exists $i \in \Copt_b, j \in \Copt_c$, both in $T_a$ and that are assigned $v$ by $\pi$. Therefore, we can separate $i, j$ and remove the edge from $\mathcal{P}_\itr$. Now, we create an intervention on $\pi(i) = \pi(j) = v$ for every entity in $T_a$ and separate all the entity pairs $(i', j')$ joined by an edge in $\mathcal{P}_\itr$ that satisfy: $u \leftarrow v $ in $\MMM_{i'}$ and $u \leftrightarrow v $ in $\MMM_{j'}$ (or vice-versa). As we use at most two interventions for every entity in $T_a$, the lemma follows.
\end{proof}

\begin{lemma} \label{lem:latents_app}
If the underlying MAGs satisfy $\alpha$-clustering property with true clusters $\Copt_1,\dots,\Copt_k$ such that $\min_{b \in [k]} \Copt_b = \Omega(n \log (M/\delta))$ entities, the Algorithm~\GenRecovery exactly recovers the clusters $\Copt_1,\dots,\Copt_k$ with  probability at least $1-\delta$. Every entity $i \in [M]$ uses at most $O(\log (M/\delta)/\alpha+ k^2)$ many atomic interventions.
\end{lemma}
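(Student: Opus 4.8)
The plan is to establish two things separately: (i) \textbf{correctness}, that under a suitable high-probability event Algorithm~\GenRecovery halts with the connected components of $\mathcal{P}$ equal to the true clusters $\Copt_1,\dots,\Copt_k$, and (ii) the per-entity budget of $O(\log(M/\delta)/\alpha + k^2)$ interventions.

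For correctness I would first record the structural invariant that drives everything. Under $\alpha$-clustering all entities in a common cluster $\Copt_b$ have \emph{identical} MAGs, hence identical PAGs, outgoing neighborhoods $\out{i}{u}$, and adjacency sets $\D{i}(u)$; so any two such entities pass the edge-inclusion test for every $u \in S$, and neither is ever deleted by the removal step (which only cuts pairs differing in a spouse relation $\bi{\cdot}{u}$). Thus each true cluster remains inside a single connected component of $\mathcal{P}_{\itr}$ throughout the run, and the only way a component can fail to be a true cluster is by containing entities from at least two true clusters. I would then characterize which cross-cluster edges survive into $\mathcal{P}_0$: conditioned on the sampling event of Claim~\ref{cl:separation}, which holds with probability $\geq 1-\delta/2$ for $|S| = 2\log(2M/\delta)/\alpha$ (apply the claim with confidence $\delta/2$), every cross-cluster pair $i,j$ admits some $u \in S$ with $\NN{i}{u} \neq \NN{j}{u}$; but an edge was added only when $\out{i}{u} = \out{j}{u}$ and $\D{i}(u) = \D{j}(u)$ for all $u$, so the witnessing difference must be a node $v$ adjacent to $u$ in both MAGs with one relation $u \leftrightarrow v$ and the other $u \leftarrow v$ --- exactly the case Lemma~\ref{lem:peeling} is designed to exploit.

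The core is a progress argument for the while loop. In any iteration where some component still contains entities from two distinct clusters $\Copt_b,\Copt_c$, Lemma~\ref{lem:peeling} guarantees (with probability $\geq 1-\delta/2k^2$ over the random map $\pi$) a collision $\pi(i)=\pi(j)=v$ on a separating node, so the removal step deletes \emph{all} cross-edges between $\Copt_b$- and $\Copt_c$-entities differing at $(u,v)$; since MAGs are identical within each cluster, one such iteration eliminates every $\Copt_b$--$\Copt_c$ cross-edge at once. As there are at most $\binom{k}{2} < k^2$ unordered cluster pairs, after at most $k^2$ successful iterations no cross-cluster edge remains, the edge set stabilizes, and the stopping test returns exactly the true clusters. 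I must argue carefully that the termination check (edge set unchanged between consecutive iterations) cannot fire prematurely: under the good event, while any mixed component persists at least one edge is removed, so stabilization coincides with full recovery. A union bound over the $\delta/2$ sampling failure and the $k^2$ peeling steps (each failing with probability $\leq \delta/2k^2$) gives overall success probability $\geq 1-\delta$.

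For the intervention count I would tally the two phases. The initial phase calls \IdentifyNbr once per $u \in S$, each using a single intervention on $u$, for $|S| = O(\log(M/\delta)/\alpha)$ interventions per entity. Each while iteration costs at most two interventions per entity: one on the random target $\pi(i)$, which combined with the already-available $\doo(u)$ for $u \in S$ lets us run the \Identifybidir test at the collision node, and one on the separating node $v$ when its component is processed. Over the $\leq k^2$ iterations this contributes $O(k^2)$, yielding the claimed $O(\log(M/\delta)/\alpha + k^2)$. The main obstacle I anticipate is the loop progress/termination analysis --- cleanly arguing that each successful peeling iteration removes a full cluster-pair's worth of cross-edges and that the ``no change'' stopping rule cannot trigger while a mixed component remains; the intervention accounting and the correctness of the initial phase are comparatively routine once Claim~\ref{cl:separation} and Lemma~\ref{lem:peeling} are in hand.
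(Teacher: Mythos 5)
Your proposal is correct and follows essentially the same route as the paper's proof: it conditions on Claim~\ref{cl:separation} with $|S| = 2\log(2M/\delta)/\alpha$, invokes Lemma~\ref{lem:peeling} to peel off all cross-edges between one cluster pair per successful iteration, bounds the number of iterations by $k^2$, and splits the failure budget as $\delta/2$ for sampling plus $k^2 \cdot \delta/2k^2$ for the peeling steps, with the identical intervention tally $|S| + 2k^2$. The extra care you take with the within-cluster invariant (identical MAGs are never separated) and with ruling out premature firing of the stopping rule are points the paper leaves implicit, but they elaborate the same argument rather than constitute a different one.
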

\begin{proof}
From Claim~\ref{cl:separation}, with probability at least $1-\delta/2$, we have that the set of sampled nodes $S$ (where $|S| = 2\log(2M/\delta)/\alpha$) satisfy that for every pair of entities from different clusters there is a node $u \in S$ that can be used to identify that they belong to different clusters. Using Lemma~\ref{lem:peeling}, we have that, in every iteration $\itr$, we remove all the edges in $\mathcal{P}_\itr$ between entities that are part of the same component but from different true clusters. After $k^2$ iterations, we would have separated all the pairs of entities between all the true clusters. In Algorithm~\GenRecovery, we return the connected components in $\mathcal{P}_\itr$ when there is no change in the set of edges between entities between $\mathcal{P}_{\itr-1}$ and $\mathcal{P}_\itr$. 

From Lemma~\ref{lem:peeling}, we have that, every entity performs at most $|S| + 2 k^2$ interventions. As there are at most $k^2$ iterations, and from Lemma~\ref{lem:peeling}, each iteration fails with probability at most $\delta/2k^2$, using union bound, we have that at least one of the iterations fails with probability at most $\delta/2$. 

Finally, using union bound for failure probability of calculating $S$ correctly, and failing in at least one of the iterations, we have, with probability at least $1-\delta$, Algorithm~\GenRecovery recovers the true clusters.
\end{proof}
From Lemma~\ref{lem:latents_app}, we know that we can recover the clusters correctly with probability at least $1-\delta$. Using the {Meta-Algorithm} discussed in Appendix~\ref{app:meta}, and from Corollary~\ref{cor:cluster_to_graph_appendix}, we can obtain an algorithm to learn the graphs of every entity with an additional intervention per entity. Combining it with guarantees obtained by Algorithm~\BoundedDegree in Theorem~\ref{thm:boundedlatent_app}, gives us the following result.

\begin{theorem}[Theorem~\ref{thm:latents} Restated]
If MAGs $\MMM_1,\dots,\MMM_M$ satisfy $\alpha$-clustering property with true clusters $\Copt_1,\dots,\Copt_k$ such that $\min_{b \in [k]} |\Copt_b| = \Omega(n \log (M/\delta))$. Then, there is an algorithm that exactly learns all these MAGs with probability at least $1-\delta$. Every entity $i \in [M]$ uses  
$\min \left\{ O(\Delta\log(M/\delta)/\alpha), O(\log(M/\delta)/\alpha + k^2) \right\}$ many atomic interventions.
\end{theorem}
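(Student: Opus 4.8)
The plan is to assemble this statement directly from the two cluster-recovery guarantees already in hand together with the Meta-Algorithm of Appendix~\ref{app:meta}, taking whichever branch gives fewer interventions. Since the $\alpha$-clustering property is the special case $\beta=0$, and since the learning problem reduces to first recovering the true partition $\Copt_1,\dots,\Copt_k$ and then filling in each cluster's MAG, I would treat cluster recovery and MAG reconstruction as two separate, composable stages.

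First I would invoke the two recovery routines. By Theorem~\ref{thm:boundedlatent_app}, Algorithm~\BoundedDegree exactly recovers the clusters $\Copt_1,\dots,\Copt_k$ with probability at least $1-\delta$ using at most $2(\Delta+1)\log(M/\delta)/\alpha = O(\Delta\log(M/\delta)/\alpha)$ atomic interventions per entity, with no assumption on cluster sizes. By Lemma~\ref{lem:latents_app}, Algorithm~\GenRecovery also recovers the clusters with probability at least $1-\delta$, but using only $O(\log(M/\delta)/\alpha + k^2)$ interventions per entity, provided $\min_{b\in[k]}|\Copt_b| = \Omega(n\log(M/\delta))$ — which is exactly the hypothesis of the theorem. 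Hence, under the stated size bound, both routines apply, and running the one whose guarantee is smaller on the instance at hand recovers the clusters within the claimed $\min\{\,\cdot\,,\,\cdot\,\}$ number of interventions.

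Next I would convert the recovered clustering into the MAGs. The hypothesis $\min_{b\in[k]}|\Copt_b| = \Omega(n\log(M/\delta))$ in particular forces $\min_{b\in[k]}|\Copt_b| \geq n$, so Corollary~\ref{cor:cluster_to_graph_appendix} applies: given the correct clusters, the Meta-Algorithm exactly learns every $\MMM_1,\dots,\MMM_M$ using just one additional atomic intervention per entity. Because this stage is deterministic once the clusters are fixed, it adds no further failure probability, and the single extra intervention is absorbed into the asymptotic cluster-recovery cost, leaving the per-entity budget at $\min\{O(\Delta\log(M/\delta)/\alpha),\, O(\log(M/\delta)/\alpha + k^2)\}$ and the overall success probability at $1-\delta$. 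There is no genuine analytic obstacle, since the theorem is essentially a packaging of Theorem~\ref{thm:boundedlatent_app}, Lemma~\ref{lem:latents_app}, and Corollary~\ref{cor:cluster_to_graph_appendix}; the only points requiring care are bookkeeping — checking that $\Omega(n\log(M/\delta))$ subsumes the $|\Copt_b|\geq n$ precondition, that the additive $+1$ leaves both asymptotic branches unchanged, and that one $1-\delta$ guarantee suffices rather than a union over stages. The one subtlety I would flag is realizing the $\min$ as a single uniform algorithm without knowing $k$ in advance: $\Delta$ is read off from the input PAGs, so I would phrase the guarantee as the existence of an algorithm matching the better of the two bounds on each instance.
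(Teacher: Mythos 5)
Your proposal is correct and follows essentially the same route as the paper's own proof, which likewise assembles the theorem from Theorem~\ref{thm:boundedlatent_app} (Algorithm \BoundedDegree), Lemma~\ref{lem:latents_app} (Algorithm \GenRecovery), and Corollary~\ref{cor:cluster_to_graph_appendix} (the Meta-Algorithm adding one intervention per entity). Your added bookkeeping --- that $\Omega(n\log(M/\delta))$ subsumes $|\Copt_b|\geq n$, that the Meta-Algorithm stage adds no failure probability, and that the $\min$ is realized as ``the better of the two bounds on each instance'' --- is consistent with how the paper treats these points.
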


\subsection{Lower Bound on the Number of Interventions}\label{app:lb}
In this section, we present a lower bound for the number of interventions required by every entity to recover true clusters. First, we state Yao's minimax theorem, which will be used to prove the lower bound.

\begin{theorem}[Yao's minimax theorem~\citep{yao1977probabilistic}]
Let $\mathcal X$ be a set of inputs to a problem and $\mathcal A$ the set of all possible deterministic algorithms that solve the problem. For any algorithm $A \in \mathcal{A}$ and $x \in \mathcal{X}$, let $\textrm{cost}(A, x)$ denote real-valued measure of cost of an algorithm $A$ on input $x$. Let $\nu, \mu$ be distributions over $\mathcal{A}$ and $\mathcal X$ respectively. Then, 

\begin{equation*}
    \max_{x \in \mathcal{X}} \E_{A \sim_\nu \mathcal{A}}[\textrm{cost}(A, x)] \geq \min_{a \in \mathcal{A}}\E_{X \sim_\mu \mathcal{X}}[\textrm{cost}(a, X)]
\end{equation*}\label{thm:yao}
\end{theorem}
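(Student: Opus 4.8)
The plan is to establish the stated inequality directly by an elementary double-averaging (``$\min \leq$ average $\leq \max$'') argument, without invoking the full strength of von Neumann's minimax theorem. The direction asserted here is precisely the ``easy'' direction of the minimax principle that suffices for deriving lower bounds, and it needs nothing beyond linearity of expectation and an interchange of expectations. The key object is the single scalar $\E_{A \sim_\nu \mathcal A}\,\E_{X \sim_\mu \mathcal X}[\cost(A,X)]$, the joint expectation taken over an \emph{independent} draw of a deterministic algorithm $A$ from $\nu$ and an input $X$ from $\mu$. I would sandwich this one quantity between the two sides of the desired inequality.

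First I would bound it from above. For each fixed realization $x$ of $X$, the inner average $\E_{A \sim_\nu \mathcal A}[\cost(A,x)]$ is at most its maximum over $x \in \mathcal X$; averaging this pointwise bound over $X \sim \mu$ and interchanging the order of the two independent expectations (Fubini) gives
\begin{equation*}
\E_{A \sim_\nu \mathcal A}\,\E_{X \sim_\mu \mathcal X}[\cost(A,X)] \;=\; \E_{X \sim_\mu \mathcal X}\,\E_{A \sim_\nu \mathcal A}[\cost(A,X)] \;\leq\; \max_{x \in \mathcal X}\, \E_{A \sim_\nu \mathcal A}[\cost(A,x)].
\end{equation*}

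Next I would bound the same quantity from below. For each fixed realization $a$ of $A$, the inner average $\E_{X \sim_\mu \mathcal X}[\cost(a,X)]$ is at least its minimum over $a \in \mathcal A$; averaging this pointwise bound over $A \sim \nu$ yields
\begin{equation*}
\E_{A \sim_\nu \mathcal A}\,\E_{X \sim_\mu \mathcal X}[\cost(A,X)] \;\geq\; \E_{A \sim_\nu \mathcal A}\!\left[\min_{a \in \mathcal A} \E_{X \sim_\mu \mathcal X}[\cost(a,X)]\right] \;=\; \min_{a \in \mathcal A}\, \E_{X \sim_\mu \mathcal X}[\cost(a,X)],
\end{equation*}
where the final equality holds because the bracketed term no longer depends on $A$. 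Chaining the two displays through their common middle quantity gives exactly the claimed bound $\max_{x} \E_{A \sim_\nu \mathcal A}[\cost(A,x)] \geq \min_{a} \E_{X \sim_\mu \mathcal X}[\cost(a,X)]$.

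I do not expect a genuine obstacle, since only the inequality (not the two-sided equality) is asserted; the one point needing care is the interchange of expectations, which is justified because $A$ and $X$ are drawn independently and $\cost$ is bounded (or at least nonnegative), so Fubini applies. I would also note, for completeness, that the $\max$ and $\min$ should be read as $\sup$ and $\inf$ when $\mathcal X$ or $\mathcal A$ is infinite and the extrema are not attained, which leaves the sandwiching argument untouched. The harder two-sided minimax \emph{equality} — the statement usually attributed to von Neumann, which genuinely requires a separating-hyperplane / LP-duality argument — is not needed for the application to the query lower bound in Lemma~\ref{lem:veclb}, where only this one-sided bound is used to convert an average-case hardness result for deterministic algorithms into a worst-case bound against randomized ones.
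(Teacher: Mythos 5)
Your proof is correct. Note, however, that the paper itself does not prove this statement at all: it is invoked purely as a black-box citation to Yao (1977), accompanied only by the informal remark that lower bounds for randomized algorithms follow from exhibiting a hard input distribution for deterministic ones. Your double-averaging argument supplies a genuine, self-contained proof of exactly what is stated. Sandwiching the joint expectation $\E_{A \sim_\nu \mathcal{A}}\E_{X \sim_\mu \mathcal{X}}[\cost(A,X)]$ between $\max_{x \in \mathcal X} \E_{A \sim_\nu \mathcal{A}}[\cost(A,x)]$ (pointwise bound on the inner average, then an exchange of the two independent expectations) and $\min_{a \in \mathcal A} \E_{X \sim_\mu \mathcal{X}}[\cost(a,X)]$ (pointwise bound again, with the constant pulled out of the outer expectation) is the standard proof of the ``easy direction,'' and your two caveats are exactly the right ones: the Fubini step is licensed by independence of $A$ and $X$ together with nonnegativity (here cost is a query/intervention count, so this is immediate), and $\max/\min$ should be read as $\sup/\inf$ when the extrema are not attained. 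You are also right that only this one-sided inequality is ever used downstream: the paper's application in Lemma~\ref{lem:veclb} and Corollary~\ref{cor:lb} converts an average-case lower bound for deterministic algorithms under the constructed distribution $\mu$ into a worst-case lower bound for randomized algorithms, which is precisely this direction. What your approach buys is self-containment and clarity about what is actually needed, whereas the full two-sided minimax equality in Yao's original treatment rests on LP duality / von Neumann's theorem and is unnecessary for the paper's lower bound.
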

Informally, the theorem states that to prove lower bounds on the cost of \emph{any} randomized algorithm, we have to find some distribution $\mu$ on inputs, such that \emph{every} deterministic algorithm $A \in \mathcal{A}$ has high cost. 

\begin{lemma}[Lemma~\ref{lem:veclb} restated]\label{lem:veclb_appendix}
Suppose we are given a vector $x \in \{0, 1\}^n$ with the promise that either $x = 0^n$ or $x$ contains $\alpha n$ ones. In order to distinguish these two cases with probability more than $2/3$, every randomized or deterministic algorithm must make at least $\Omega(1/\alpha)$ queries to the coordinates of the vector $x$.
\end{lemma}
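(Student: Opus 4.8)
The plan is to invoke Yao's minimax principle (Theorem~\ref{thm:yao}), so that it suffices to exhibit a single hard input distribution $\mu$ under which \emph{every} deterministic query algorithm making too few queries fails with constant probability. Following the outline, I would take $\mu$ to place mass $1/2$ on the all-zeros vector $0^n$ and the remaining mass $1/2$ uniformly over the $\binom{n}{\alpha n}$ vectors containing exactly $\alpha n$ ones. The cost measure is the number of coordinate queries, and ``success'' means correctly outputting whether $x = 0^n$; I will show that any deterministic algorithm answering correctly with probability more than $2/3$ over $\mu$ must make at least $1/(3\alpha)$ queries, which by Yao then lower bounds the worst-case cost of any randomized algorithm.

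Fix a deterministic algorithm $A$ making at most $q$ (adaptive) queries. The key observation is that $A$ is deterministic, so on the input $0^n$ every queried coordinate returns $0$ and the entire query sequence is a \emph{fixed} set of distinct coordinates $i_1, \dots, i_q$, independent of the (trivial) answers. Moreover, on \emph{any} input, $A$ follows this same path as long as every coordinate it reads is $0$; it deviates only once it reads a $1$. Hence the transcript of $A$ equals its transcript on $0^n$ precisely when none of $i_1,\dots,i_q$ is a $1$ of the input, and in that event $A$ must return the \emph{same} fixed answer $b$ it returns on $0^n$.

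Next I would bound the probability of this event under the ones-part of $\mu$. For a uniformly random vector with exactly $\alpha n$ ones, each fixed coordinate is a $1$ with marginal probability exactly $\alpha n / n = \alpha$, so a union bound over the $q$ fixed coordinates gives
\begin{equation*}
\Pr[\,\exists\, t : x_{i_t} = 1\,] \leq q\alpha.
\end{equation*}
Thus, conditioned on $x$ being a ones-vector, with probability at least $1 - q\alpha$ the algorithm sees the all-zeros transcript and outputs $b$. A simple case analysis on $b$ then finishes the argument: if $b$ is ``$x=0^n$'', then $A$ errs on every ones-vector producing the all-zeros transcript, incurring error at least $\tfrac12(1 - q\alpha)$; if $b$ is ``$x \neq 0^n$'', then $A$ errs on the input $0^n$, incurring error at least $\tfrac12$. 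Assuming $q < 1/(3\alpha)$ makes $q\alpha < 1/3$, so in the first case the error exceeds $\tfrac12 \cdot \tfrac23 = \tfrac13$ and in the second is at least $\tfrac12$; either way the success probability is below $2/3$, contradicting success exceeding $2/3$ and forcing $q \geq 1/(3\alpha) = \Omega(1/\alpha)$.

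The main obstacle I anticipate is the adaptivity argument of the second paragraph: one must argue carefully that because the algorithm is deterministic and $0^n$ forces every answer to be $0$, adaptivity buys nothing and the query positions collapse to a fixed set of \emph{distinct} coordinates, which is what makes the union bound legitimate (no double counting). The remaining pieces --- the exact marginal $\alpha$ for a single coordinate and the final two-case error accounting --- are routine. Translating from deterministic-under-$\mu$ to worst-case-randomized is then exactly the content of Yao's principle and requires no extra work beyond matching the cost measure.
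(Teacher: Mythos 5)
Your proof is correct, and while it shares the paper's top-level framework --- Yao's minimax principle applied to exactly the same hard distribution $\mu$ (mass $1/2$ on $0^n$, mass $1/2$ uniform over the $\binom{n}{\alpha n}$ vectors with exactly $\alpha n$ ones) --- the core deterministic analysis is genuinely different and more elementary. The paper bounds the probability that the queried coordinates miss every one by the binomial ratio $\tau = \binom{n-q}{\alpha n}/\binom{n}{\alpha n}$ and then controls $\tau$ via entropy estimates (the MacWilliams--Sloane bounds on binomial coefficients, a mean-value-theorem step on $H(\cdot)$, etc.), concluding $q = \Omega(1/\alpha)$ from $\tau \le 2/3$. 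You instead make the fixed-transcript observation explicit (a deterministic algorithm on $0^n$ queries a fixed sequence of coordinates, and on any input it produces the same transcript and output precisely when all those coordinates read $0$) and then use only the exact marginal $\Pr[x_i = 1] = \alpha$ plus a union bound to get $\Pr[\text{all-zeros transcript}] \ge 1 - q\alpha$ on the ones-part of $\mu$, finishing with a two-case error count. This buys two things: first, the argument is much shorter and avoids all asymptotic entropy machinery; second, your union bound directly delivers a \emph{lower} bound on the all-zeros-transcript probability, which is exactly the direction of inequality the contradiction needs (small $q$ forces high error), whereas the paper's displayed chain produces an \emph{upper} bound on $\tau$ and the logical flow from that to $q = \Omega(1/\alpha)$ is harder to trace. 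The paper's computation, on the other hand, tracks the exact hypergeometric quantity $\tau$ rather than a union-bound relaxation, which would matter only if one wanted constants or a regime where $q\alpha$ is not small; for the stated $\Omega(1/\alpha)$ bound your relaxation loses nothing. One minor point worth making explicit in your write-up: distinctness of the queried coordinates is not actually needed for the union bound (repeated queries only waste budget), and the transcript argument also implicitly handles adaptivity on ones-inputs --- the algorithm can deviate from the fixed sequence only after it has already read a $1$, at which point the all-zeros-transcript event has failed anyway, so the event is exactly $\{x_{i_1} = \cdots = x_{i_q} = 0\}$ as you claim.
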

\begin{proof}
It is easy to see that every deterministic algorithm for this problem requires $(1-\alpha)n+1$ queries. For obtaining a lower bound on the number of queries of any randomized algorithm, we use Yao's minimax theorem~\cite{yao1977probabilistic}. To do so, we construct an input distribution $\mu$ on $\{ 0, 1\}^n$ and show that every deterministic algorithm on the worst case requires at least $q$ queries while succeeding with a probability $2/3$. From Yao's minimax theorem (Thm~\ref{thm:yao}), this implies that every randomized algorithm requires at least $q$ queries to output the correct answer with probability of success $2/3$. We construct $\mu$ by using a probability of $1/2$ for $0^n$ vector and a probability of $1/2$ equally distributed among all vectors in $\{0,1\}^n$ containing exactly $\alpha n $ ones.

Suppose a deterministic algorithm (denoted by \ALG) is used to identify whether $x = 0^n$ or not. Let $\Ev(x)$ denote the event that the \ALG answers correctly on $x \in \{0, 1\}^n$, $Q(x)$ denote the set of queries used by \ALG such that $|Q(x)| = q$ and $L(x)$ denote the coordinates of $x$ that are non-zero.

Consider the event $\Ev(x)$ when \ALG answers correctly. We can write it as : 
\begin{align*}  &  \Pr_{x \sim \mu}[\Ev(x)] \\
& = \Pr[\Ev(x) \mid Q(x) \cap L(x) \neq \phi] \Pr[Q(x) \cap L(x) \neq \phi] + \Pr[\Ev(x) \mid Q(x) \cap L(x) = \phi] \Pr[Q(x) \cap L(x) = \phi]. \end{align*}
We calculate the probability that the coordinates queried are not part of the non-zero coordinates of $x$, given by $Q(x) \cap L(x) = \phi$ :
\begin{align*}
       & \Pr_{x \sim \mu}[Q(x) \cap L(x) = \phi] \\
       & = \Pr[Q(x) \cap L(x) = \phi \mid {x = 0^n}]\Pr[x = 0^n] + \Pr[Q(x) \cap L(x) = \phi \mid x \neq 0^n]\Pr[x \neq 0^n] \\
       &= \frac{1}{2}\left(\Pr[Q(x) \cap L(x) = \phi \mid {x = 0^n}] + \Pr[Q(x) \cap L(x) = \phi \mid x \neq 0^n] \right)\\
       &= \frac{1}{2}\left( 1 + \frac{{n-q \choose \alpha n}}{{n \choose \alpha n}} \right) = \frac{1+\tau}{2} \quad \text{, where } \tau = \frac{{n-q \choose \alpha n}}{{n \choose \alpha n}}.
\end{align*}

Now, we calculate the probability that \ALG answers correctly when the queries $Q(x)$ all return zero. We upper bound this probability by considering the case when \ALG answers `yes', and the case when \ALG answers `no' separately. It is easy to observe that $\Ev(x)$ is correct when \ALG=`yes' iff $x = 0^n$. Therefore, we have: 
\begin{align*}
    \Pr[\Ev(x) \mid Q(x) \cap L(x) = \phi] &\leq \max \left \{ \overbrace{\frac{\Pr[\Ev(x), Q(x) \cap L(x) = \phi]}{\Pr[Q(x) \cap L(x) = \phi]}}^{\text{ALG answers `yes'}}, \overbrace{\frac{\Pr[\Ev(x), Q(x) \cap L(x) = \phi]}{\Pr[Q(x) \cap L(x) = \phi]}}^{\text{ALG answers `no'}} \right \} \\ 
    &\leq \max \left \{  \frac{1/2}{(1+\tau)/2}, \frac{\tau/2}{(1+\tau)/2}\right \} \leq \frac{1}{1+\tau}.
\end{align*}

\begin{align*}
   \Pr_{x \sim \mu}[\Ev(x)] &= \Pr[\Ev(x) \mid Q(x) \cap L(x) \neq \phi] \Pr[Q(x) \cap L(x) \neq \phi] + \Pr[\Ev(x) \mid Q(x) \cap L(x) = \phi] \Pr[Q(x) \cap L(x) = \phi] \\
   &\leq \Pr[Q(x) \cap L(x) \neq \phi] + \Pr[\Ev(x) \mid Q(x) \cap L(x) = \phi] \Pr[Q(x) \cap L(x) = \phi] \\
                            &\leq \left(1 - \frac{1+\tau}{2} \right) + \frac{1}{1+\tau} \frac{1+\tau}{2} = 1-\frac{\tau}{2}.
\end{align*}
We know the probability of success for \ALG is at least $\frac{2}{3}$. Therefore, we have $\Pr_{x \sim \mu}[\Ev(x)] \geq \frac{2}{3}$, which implies $\tau \leq \frac{2}{3}$.

Let $H(x)$ denote the binary entropy function. Using the bound from (\cite{macwilliams1977theory}, Page 309)
\[
\sqrt{\frac{a}{8b(a-b)}}2^{a H(b/a)} \leq \binom{a}{b}\leq \sqrt{\frac{a}{2\pi b(a-b)}}2^{a H(b/a)}.
\]
We have
\begin{align*}
    \tau = \frac{{n-q \choose \alpha n}}{{n \choose \alpha n}} &\leq  \sqrt{\frac{8 (n-q)(n-\alpha n)}{2\pi n(n-q-\alpha n)}} 2^{(n-q)H(\frac{\alpha n}{n-q}) - n H(\alpha)}  \\ 
    &= \sqrt{\frac{4}{\pi}\left(1+\frac{q\alpha}{n-q-\alpha n }\right) }2^{(n-q)(H(\frac{\alpha n}{n-q}) - H(\alpha)) - qH(\alpha)}.
\end{align*}
We observe that $q \leq (1-\alpha) n + 1$ for any algorithm, as we can identify whether $x = 0^n$ or not trivially by querying more than $(1-\alpha) n + 1$ coordinates. Therefore, 
\begin{align*}
    \tau &\leq \sqrt{\frac{4}{\pi}\left(1-{q\alpha}\right)}2^{(n-q)(H(\frac{\alpha n}{n-q}) - H(\alpha)) - qH(\alpha)} \\
     &\leq \sqrt{\frac{4}{\pi}} 2^{-q\alpha \log e/2 + (n-q)(H(\frac{\alpha n}{n-q}) - H(\alpha)) - qH(\alpha)}
\end{align*}
Using $\frac{\alpha n}{n-q} \geq \alpha$ and mean-value theorem, we have:
\begin{align*}
  (n-q)  \left(H\left(\frac{\alpha n}{n-q} \right) - H(\alpha) \right) &\leq q \alpha H'\left(\frac{\alpha n}{n-q}\right)\\
  &= q \alpha \log \left(\frac{n-q}{\alpha n}-1\right)\\
  &\leq q \alpha \log({1-\alpha}) - q\alpha \log{\alpha}.
\end{align*}
Substituting the above expression and expanding $H(\alpha)$, we have :
\begin{align*}
    \tau &\leq \sqrt{\frac{4}{\pi}} 2^{-q\alpha \log e/2 + q \alpha \log({1-\alpha}) - q\alpha \log{\alpha} + q\alpha \log\alpha +(1-\alpha)q\log(1-\alpha)}\\
    &\leq \sqrt{\frac{4}{\pi}} 2^{-q\alpha \log e/2 + q\log(1-\alpha)}\\
    &\leq \sqrt{\frac{4}{\pi}} 2^{-q\alpha \log e/2 - q\alpha} \leq \frac{2}{3}.
\end{align*}

Therefore, for \ALG to succeed with probability at least $2/3$, we have $$q \geq \Omega\left(\frac{1}{\alpha} \right).$$

Using this with Yao's minimax theorem (Thm~\ref{thm:yao}), we get that with every randomized algorithm needs $\Omega(1/\alpha)$ queries to succeed on this problem with probability at least $2/3$.
\end{proof}

\section{Experimental Evaluation} \label{app:expts}

In this section, we provide additional details about the experimental evaluation discussed in Section~\ref{sec:expts}. 

\subsection{Learning MAGs under $\alpha$-clustering property}

\begin{table*}[t]
\centering
\hspace*{-.3cm}
\small	 
  \begin{tabular}{|l|l|l|l|l|l|l|c|}
    \toprule
      Causal &
      \multicolumn{3}{c|}{FCI} &
      \multicolumn{3}{c|}{\BoundedDegree(Alg.~\ref{alg:alphaboundlatents})} &
      \multicolumn{1}{c|}{Maximum} \\
     Network &  {Precision} & {Recall} & {Accuracy} & {Precision} & {Recall} & {Accuracy} & {\# Interventions}\\
      \midrule
    \textit{Earthquake} & $0.79\pm0.25$ & $0.98\pm 0.02$ &$0.79\pm0.25 $  &$1\pm0.00 $  & $1.0\pm0.0 $ & $1.00\pm0.00 $ & 3  \\
    \textit{Survey} & $0.79\pm0.25$ & $0.98\pm 0.02$ &$0.79\pm0.25 $ & $0.89\pm 0.20$ & $1.0\pm 0.00$ & $0.89 \pm 0.20$ & 4\\
    \textit{Asia} & $0.84 \pm 0.23$ & $0.98\pm 0.02$ &$0.84\pm0.23 $  & $0.89\pm 0.20$ & $1.0\pm 0.00$ & $0.89 \pm 0.20$ & 4 \\
    \textit{Sachs} & $1.0 \pm 0.00$ & $1.0\pm 0.00$ & $1.0\pm 0.00$ & $0.79\pm 0.25$ & $1.0 \pm 0.00$& $0.79\pm 0.25$ & 5\\
    \textit{Erd\"os-R\'enyi} & $1.00\pm0.00$ & $1.00\pm0.00$ & $1.00\pm0.00$ & $1.00\pm0.00$ & $1.00\pm0.00$ & $1.00\pm0.00$ & 5 \\
    \bottomrule
  \end{tabular}
  \caption{In this table, we present the precision, recall and accuracy values obtained by Algorithm~\BoundedDegree and FCI. Each cell includes the mean value along with the standard deviation computed over 10 runs. The last column contains the maximum number of interventions per entity required (including both Algorithm~\BoundedDegree and the Meta-algorithm) for recovering the DAGs.}
  \label{table:alpha}
\end{table*}

\smallskip
\noindent\textbf{(Synthetic) Data Generation.} We use following process for each of the five considered causal network (\textit{Asia}, \textit{Earthquake}, \textit{Sachs}, \textit{Survey}, and \textit{Erd\H{o}s-Renyi}).  We construct causal DAGs for $M$ entities distributed among the clusters $\Copt_1, \Copt_2, \cdots \Copt_k$ equally, i.e., $|\Copt_i| = M/k$ for all $i \in [k]$. Again we set $k=2$. For each of the DAGs $\DDD_1$ and $\DDD_2$, we select a certain number of pairs of nodes randomly, and include a latent variable between them, that has a causal edge to both the nodes. In our experiments, we used 2 latents per DAG. This results in two new DAGs $\DDD'_1$ and $\DDD'_2$. To ensure $\alpha n$ node distance between clusters, we modify $\DDD'_2$ using random changes until the two DAGs $\DDD'_1$ and $\DDD'_2$ are separated by a distance of $\alpha n$ and are Markov equivalent. Without Markov equivalence, we observe that FCI always recovers the underlying clusters correctly in the $\alpha$-clustering case.\footnote{Again this is not true for $(\alpha,\beta)$-clustering, as shown by our experiments results for that case, because now difference in PAGs between two entities does not automatically imply that those two entities must belong to different clusters.}  However, existence of Markov equivalent DAGs is a well-known problem in real-world graphs, a popular example to illustrate this comes the `` breathing dysfunction'' causal graph in Fig.\ 3 in ~\citep{zhang2008completeness}. We create $M/2$ copies of the each of the two DAGs $\DDD'_1$ and $\DDD'_2$ and assign it to distinct entities in each of the two clusters.

\smallskip
\noindent \textbf{Parameters.} We present the following settings for the model parameters, $\alpha$ is at least $0.60$, $M = 40$. For the synthetic data generated using Erd\"os-R\'enyi model, we use $n = 10$, probability of edge $p = 0.30$. We ran all of our experiments for 10 times with the stated values and report the results.

\smallskip
\noindent \textbf{Sample Set $S$ Size.} For Algorithm \BoundedDegree, we again tried different set $S$ sizes ranging from $1$ to $3$. In Figure~\ref{fig:int_vs_sample_a}, we plot the mean value of the maximum number of interventions per entity with increase in sample size. It has a same trend as with $(\alpha,\beta)$-clustering (Figure~\ref{fig:int_vs_sample_a}). A sample size of  $1$ roughly corresponds to around $3$ interventions, and we use that for the results presented in Table~\ref{table:alpha}.

\begin{figure}[H]
\centering
    \includegraphics[scale=0.65]{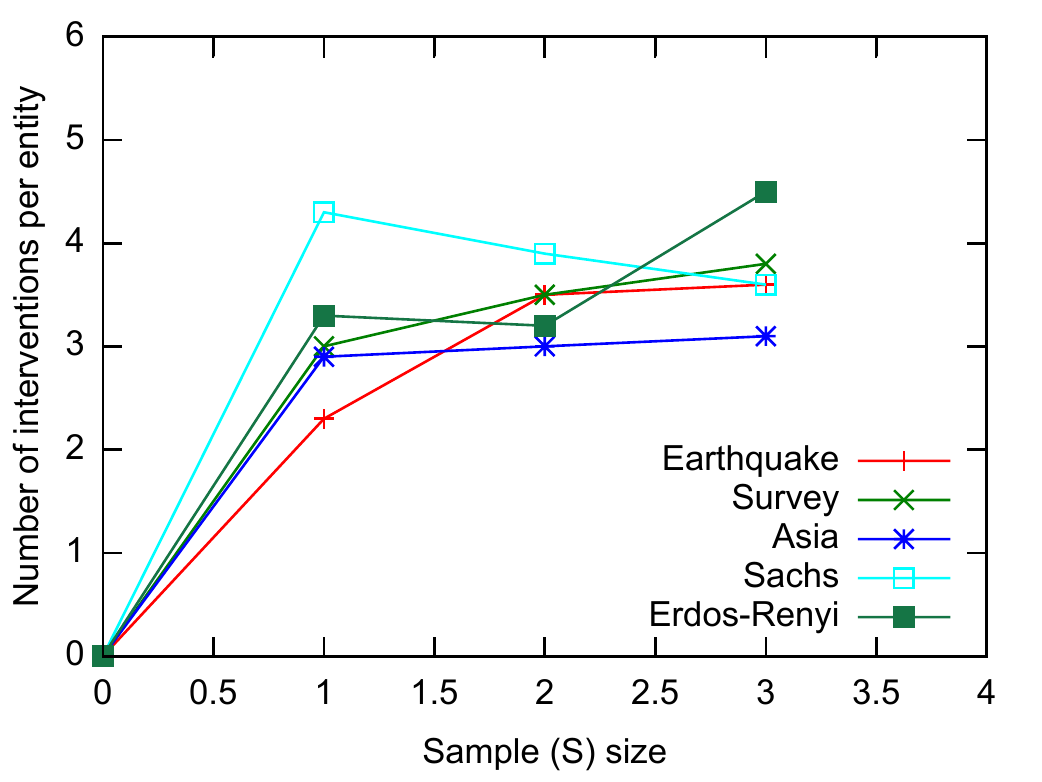}
    \caption{Sample size vs.\ maximum number of interventions per entity used by Algorithm~\BoundedDegree.}
    \label{fig:int_vs_sample_a}
\end{figure}

\smallskip
\noindent \textbf{Evaluation of Clustering.} We start by results on recovering the clustering using Algorithm~\BoundedDegree. As a baseline, we again employ the well-studied FCI algorithm~\citep{spirtes2000causation}. After recovering the PAGs corresponding to the DAGs using FCI, we cluster them by constructing a similarity graph (similar to the case of $(\alpha, \beta)$-clustering discussed previously) defined on the set of entities. For Algorithm~\BoundedDegree, we first construct a sample $S$, and perform various interventions based on the set $S$ for every entity to finally obtain the $k$ clusters. We also implemented another baseline algorithm (\textsc{Greedy}) that uses interventions, based on a greedy idea that selects nodes to set $S$ in Algorithm~\BoundedDegree by considering nodes in increasing order of their degree in the PAGs returned by FCI. We use this ordering to minimize the number of interventions as we intervene on every node in $S$ and their neighbors. We use the same metrics as the $(\alpha,\beta)$-clustering case.

\smallskip
\noindent \textbf{Results.} 
In Table~\ref{table:alpha}, we compare Algorithm~\BoundedDegree to FCI on the clustering results. For Algorithm~\BoundedDegree, we use a sample $S$ of size $1$, and observe in Figure~\ref{fig:int_vs_sample_a}, that this corresponds to about $2$ interventions per entity. With increase in sample size, we observed that the results were either comparable or better. We observe that our approach leads to considerably better performance in terms of the accuracy metric with an average difference in mean accuracy of about $0.20$. We observe that entities belonging to the same true cluster are always assigned to the same cluster, resulting in high recall for both Algorithm~\BoundedDegree and FCI. Further, the higher value of precision for our algorithm is because FCI is unable to correctly detect that there are two clusters, as the DAGs are Markov Equivalent which means that they result in the same PAGs. 

Algorithm~\BoundedDegree outperforms the \textsc{Greedy} baseline for the same sample ($S$) size. For example, on the \textit{Earthquake} and \textit{Survey} causal networks, Algorithm~\BoundedDegree obtains the mean accuracy values of $1.0$ and $0.89$ respectively, while  \textsc{Greedy} for the same number of interventions obtained an accuracy of only $0.74$ and $0.64$ respectively. On the remaining causal networks, the accuracy values of \textsc{Greedy} are almost comparable to our Algorithm~\BoundedDegree.

After clustering, we recover the DAGs using the Meta-algorithm described in Section~\ref{app:meta}, and observe that only one additional intervention is needed. In the last column in Table~\ref{table:alpha}, we report the maximum number of interventions for recovering DAGs, which includes both the interventions used by the Algorithm~\BoundedDegree and the Meta-algorithm. We observe that our \emph{collaborative} approach uses fewer interventions for MAG recovery compared to the number of nodes in each causal network. For example, in the Erd\"os-R\'enyi setup, the number of nodes $n=10$, whereas we use at most $5$ interventions per entity. Thus, compared to the worst-case, cutting the number of interventions for each entity by $50\%$.

\end{document}